\lstdefinestyle{pythoncode}{
  language=Python,
  basicstyle=\ttfamily\footnotesize,
  columns=fullflexible,
  keepspaces=true,
  breaklines=true,
  frame=single,
  showstringspaces=false,
  tabsize=4,
  captionpos=b
}
\theoremstyle{plain}
\newtheorem{proposition}{Proposition}
\newtheorem{lemma}{Lemma}
\newtheorem{corollary}{Corollary}
\theoremstyle{definition}
\theoremstyle{plain}
\newtheorem{remark}{Remark}
\newcounter{hypA}
\newenvironment{hypA}{\refstepcounter{hypA}\begin{itemize}
  \item[({\bf A\arabic{hypA}})]}{\end{itemize}}
\def\msv{\mathsf{V}}
\def\msu{\mathsf{U}}
\def\msh{\mathsf{H}}
\def\msx{\mathsf{X}}
\def\rset{\mathbb{R}}
\def\nset{\mathbb{N}}
\def\rmd{\mathrm{d}}
\def\normpdf{\mathrm{N}}
\def\eqsp{\,}
\def\Id{\mathbf{I}}
\def\zero{0}
\def\pE{\mathbb{E}}
\def\softmax{\text{softmax}}
\def\pP{\mathbb{P}}
\def\cov{\Sigma}
\def\pCov{\mathbb{C}\mathrm{ov}}
\def\indic{\mathds{1}}
\def\eqdef{\vcentcolon=}
\def\wrt{w.r.t.}
\def\diag{\mathrm{diag}}
\def\Diag{\mathrm{Diag}}
\def\law{\text{Law}}
\def\last{n}
\newcommand{\tk}[1]{{t_{#1}}}
\newcommand{\onehot}[1]{e_{#1}}
\DeclareMathOperator*{\argmax}{argmax}
\DeclareMathOperator*{\argmin}{argmin}
\def\ddimstd{\eta}
\newcommand{\at}[2]{{\big|_{#1=#2}}}
\newcommand{\grad}[1]{\nabla_{\!{#1}}}
\def\transform{T}
\def\mask{\texttt{m}}
\def\Mask{m}
\def\discr{\mathsf{d}}
\newcommand{\assp}[1]{(\textbf{A}\textbf{#1})}
\def\txts{\textstyle}
\def\rhs{r.h.s.}
\def\ie{\textit{i.e.}}
\newcommand{\kldivergence}[2]{\mathrm{KL}(#1\|#2)}
 \def\rset{{\mathbb{R}}}
 \def\rmd{\mathrm{d}}
 \def\eqsp{\;}
 \def\eqdef{\coloneqq}
\def\ctxt{\mathbf{c}}
\newcommand{\tbar}{\mathrel{\raisebox{0.15ex}{$\scriptscriptstyle\mid$}}}
\newcommandx\vartarg[4][4=]{
    \ifthenelse{\equal{#2}{}}{
        \ifthenelse{\equal{#3}{}}{
            \varpi^{#4} _{\!#1}
        }{
            \varpi^{#4} _{\!#1}(#3)
            }
    }{
        \ifthenelse{\equal{#3}{}}{
            \varpi^{#4} _{\!#1}(\cdot|#2)
        }{
            \varpi^{#4} _{\!#1}(#3|#2)
        }
    }}
\newcommandx\targ[4][4=]{
    \ifthenelse{\equal{#2}{}}{
        \ifthenelse{\equal{#3}{}}{
            \pi^{#4} _{\!#1}
        }{
            \pi^{#4} _{\!#1}(#3)
            }
    }{
        \ifthenelse{\equal{#3}{}}{
            \pi^{#4} _{\!#1}(\cdot|#2)
        }{
            \pi^{#4} _{\!#1}(#3|#2)
        }
    }}
\newcommandx\interp[4][4=\interpscale]{
    \ifthenelse{\equal{#2}{}}{
        \ifthenelse{\equal{#3}{}}{
            \pi^{#4} _{#1}
        }{
            \pi^{#4} _{#1}(#3)
            }
    }{
        \ifthenelse{\equal{#3}{}}{
            \pi^{#4} _{#1}(\cdot|#2)
        }{
            \pi^{#4} _{#1}(#3|#2)
        }
    }}
\newcommandx\dpdata[4][4=]{
\ifthenelse{\equal{#2}{}}{
    \ifthenelse{\equal{#3}{}}{
        p^{d, #4} _{#1}
    }{
        p^{d, #4} _{#1}(#3)
    }
}{
    \ifthenelse{\equal{#3}{}}{
        p^{d, #4} _{#1}(\cdot|#2)
    }{
        p^{d, #4} _{#1}(#3|#2)
    }
}}
\newcommandx\hpdata[4][4=]{
\ifthenelse{\equal{#2}{}}{
    \ifthenelse{\equal{#3}{}}{
        \hat{p}^{#4} _{#1}
    }{
        \hat{p}^{#4} _{#1}(#3)
    }
}{
    \ifthenelse{\equal{#3}{}}{
        \hat{p}^{#4} _{#1}(\cdot|#2)
    }{
        \hat{p}^{#4} _{#1}(#3|#2)
    }
}}
\newcommandx\pdata[4][4=]{
    \ifthenelse{\equal{#2}{}}{
        \ifthenelse{\equal{#3}{}}{
            \pi^{#4} _{#1}
        }{
            \pi^{#4} _{#1}(#3)
        }
    }{
        \ifthenelse{\equal{#3}{}}{
            \pi^{#4} _{#1}(\cdot|#2 )
        }{
            \pi^{#4} _{#1}(#3|#2)
        }
    }}
\newcommandx\mdm[4][4=]{
\ifthenelse{\equal{#2}{}}{
    \ifthenelse{\equal{#3}{}}{
        p^{\mathsf{d}, #4} _{#1}
    }{
        p^{\mathsf{d}, #4} _{#1}(#3)
    }
}{
    \ifthenelse{\equal{#3}{}}{
        p^{\mathsf{d}, #4} _{#1}(\cdot|#2)
    }{
        p^{\mathsf{d}, #4} _{#1}(#3|#2)
    }
}}
\newcommand\jpdata[3]{
    \ifthenelse{\equal{#2}{}}{
        \ifthenelse{\equal{#3}{}}{
            \bar{p}_{#1}
        }{
            \bar{p}_{#1}(#3)
        }
    }{
        \ifthenelse{\equal{#3}{}}{
            \bar{p}_{#1}(\cdot|#2)
        }{
            \bar{p}_{#1}(#3|#2)
        }
    }}
\newcommandx\cpdata[4][4=]{
    \ifthenelse{\equal{#2}{}}{
        \ifthenelse{\equal{#3}{}}{
            \pi^{#4} _{#1}
        }{
            \pi^{#4} _{#1}(#3)
        }
    }{
        \ifthenelse{\equal{#3}{}}{
            \pi^{#4} _{#1}(\cdot|#2)
        }{
            \pi^{#4} _{#1}(#3|#2)
        }
    }}
  \newcommandx\potg[4][4=]{
    \ifthenelse{\equal{#2}{}}{
        \ifthenelse{\equal{#3}{}}{
            g^{#4} _{#1}
        }{
            g^{#4} _{#1}(#3)
        }
    }{
        \ifthenelse{\equal{#3}{}}{
            g^{#4} _{#1}(\cdot|#2)
        }{
            g^{#4} _{#1}(#3|#2)
        }
    }}
\newcommandx\jcpdata[4][4=]{
    \ifthenelse{\equal{#2}{}}{
        \ifthenelse{\equal{#3}{}}{
            \bar\pi^{#4} _{#1}
        }{
            \bar\pi^{#4} _{#1}(#3)
        }
    }{
        \ifthenelse{\equal{#3}{}}{
            \bar\pi^{#4} _{#1}(\cdot|#2)
        }{
            \bar\pi^{#4} _{#1}(#3|#2)
        }
    }}
\newcommandx\dtilt[4][4=]{
    \ifthenelse{\equal{#2}{}}{
        \ifthenelse{\equal{#3}{}}{
            \pi^{d, #4} _{#1}
        }{
            \pi^{d, #4} _{#1}(#3)
        }
    }{
        \ifthenelse{\equal{#3}{}}{
            \pi^{d, #4} _{#1}(\cdot|#2)
        }{
            \pi^{d, #4} _{#1}(#3|#2)
        }
    }}
\newcommandx\tilt[4][4=]{
    \ifthenelse{\equal{#2}{}}{
        \ifthenelse{\equal{#3}{}}{
            \pi^{#4} _{#1}
        }{
            \pi^{#4} _{#1}(#3)
        }
    }{
        \ifthenelse{\equal{#3}{}}{
            \pi^{#4} _{#1}(\cdot|#2)
        }{
            \pi^{#4} _{#1}(#3|#2)
        }
    }}
\newcommand\updata[3]{
    \ifthenelse{\equal{#2}{}}{
        \ifthenelse{\equal{#3}{}}{
            \bar{p}_{#1}
        }{
            \bar{p}_{#1}(#3)
        }
    }{
        \ifthenelse{\equal{#3}{}}{
            \bar{p}_{#1}(\cdot|#2)
        }{
            \bar{p}_{#1}(#3|#2)
        }
    }}
\newcommand\phat[3]{
    \ifthenelse{\equal{#2}{}}{
        \ifthenelse{\equal{#3}{}}{
            \hat{p}_{#1}
        }{
            \hat{p}_{#1}(#3)
        }
    }{
        \ifthenelse{\equal{#3}{}}{
            \hat{p}_{#1}(\cdot|#2)
        }{
            \hat{p}_{#1}(#3|#2)
        }
    }}
\newcommandx\dfw[4][4=]{
        \ifthenelse{\equal{#3}{}}{
            q^{d, #4} _{\smash{#1}}(\cdot|#2)
        }{
            q^{d, #4} _{\smash{#1}}(#3|#2)
        }
    }
\newcommandx\fw[4][4=]{
        \ifthenelse{\equal{#3}{}}{
            q^{#4} _{\smash{#1}}(\cdot | #2 )
        }{
            q^{#4} _{\smash{#1}}(#3 | #2 )
        }
    }
\newcommandx\denoiser[5][4=, 5=0]{
    \ifthenelse{\equal{#2}{}}{
        \ifthenelse{\equal{#3}{}}{
            \hat\bx^{#4}_{#5}(\cdot, #1)
        }{
            \hat\bx^{#4} _{#5}(#3, #1)
        }
    }{
        \ifthenelse{\equal{#3}{}}{
            \hat\bx^{#4} _{#5}(\cdot, #1|#2)
        }{
            \hat\bx^{#4} _{#5}(#3, #1|#2)
        }
    }
}
    \newcommandx\hdenoiser[4][4=]{
    \ifthenelse{\equal{#2}{}}{
        \ifthenelse{\equal{#3}{}}{
            \hat{D}^{#4}_{#1}
        }{
            \hat{D}^{#4} _{#1}(#3)
        }
    }{
        \ifthenelse{\equal{#3}{}}{
            \hat{D}^{#4} _{#1}(\cdot|#2)
        }{
            \hat{D}^{#4} _{#1}(#3|#2)
        }
    }}
\newcommandx\epspred[4][4=]{
    \ifthenelse{\equal{#2}{}}{
        \ifthenelse{\equal{#3}{}}{
            \varepsilon^{#4}_{#1}
        }{
            \varepsilon^{#4} _{#1}(#3)
        }
    }{
        \ifthenelse{\equal{#3}{}}{
            \varepsilon^{#4} _{#1}(\cdot|#2)
        }{
            \varepsilon^{#4} _{#1}(#3|#2)
        }
    }}
\newcommand\clf[3]{
    \ifthenelse{\equal{#2}{}}{
        g_{#1}(#3|\cdot)
    }{
        g _{#1}(#3|#2)
    }
}
\newcommand\cfgdist[3]{
    \ifthenelse{\equal{#2}{}}{
        p^w _{#1}(#3|\cdot)
    }{
        p^w _{#1}(#3|#2)
    }
}
\newcommand\cscore[3]{
    \ifthenelse{\equal{#2}{}}{
        \ifthenelse{\equal{#3}{}}{
            s _{#1}
        }{
            s _{#1}(#3)
        }
    }{
        \ifthenelse{\equal{#3}{}}{
            s _{#1}(\cdot|#2)
        }{
            s _{#1}(#3|#2)
        }
    }}
\def\gauss{\mathcal{N}}
\def\interpscale{\eta}
\def\std{\sigma}
\def\categorical{\mathrm{Cat}}
\def\acp{\alpha}
\def\bx{x}
\def\algoname{{\sc{ReDGE}}}
\def\redgecov{{\sc{ReDGE-Cov}}}
\def\redgemax{{\sc{ReinDGE}}}
\def\param{\theta}
\def\dimx{{d}}
\newcommandx{\hpredx}[3][2=0,3=\param]{\smash{m^{#3} _{#2|#1}}}
\newcommandx{\predx}[2][2=0]{\smash{m _{#2|#1}}}
\newcommandx{\prednoise}[2][2=\param]{\smash{\epsilon^{#2} _{#1}}}
\newcommandx{\score}[2][2=\param]{s^{#2} _{#1}}
\newcommand{\ccint}[1]{\left[#1\right]}
\def\base{p}
\def\gumbelsoft{{\sc Gumbel-Softmax}}
\def\sthrough{{\sc Straight-Through}}
\def\reinmax{{\sc ReinMax}}
\def\objFun{F}
\def\feature{\varphi}
\newcommand{\jac}[1]{\operatorname{J}_{#1}\!}
\newcommand{\gradST}[1]{\widehat\nabla^{\text{ST}} _{\!#1}}
\newcommand{\gradRM}[1]{\widehat\nabla^{\text{RM}} _{\!#1}}
\newcommand{\norm}[1]{\Vert#1 \Vert}
\newcommand{\nbx}{x}
\icmltitlerunning{Categorical Reparameterization with Denoising Diffusion models}
\begin{document}

\twocolumn[
\icmltitle{Categorical Reparameterization with Denoising Diffusion models}
\icmlsetsymbol{equal}{*}

\begin{icmlauthorlist}
\icmlauthor{Samson Gourevitch}{cmap}
\icmlauthor{Alain Durmus}{cmap}
\icmlauthor{Eric Moulines}{mbzuai}
\icmlauthor{Jimmy Olsson}{kth}
\icmlauthor{Yazid Janati}{ifm}
\end{icmlauthorlist}

\icmlaffiliation{cmap}{CMAP, Ecole polytechnique}
\icmlaffiliation{kth}{KTH Royal Institute of Technology}
\icmlaffiliation{ifm}{Institute of Foundation Models, MBZUAI}
\icmlaffiliation{mbzuai}{Mohamed Bin Zayed University of AI and LRE, EPITA}

\icmlcorrespondingauthor{}{samson.gourevitch@polytechnique.edu}
\icmlcorrespondingauthor{}{yazid.janati@mbzuai.ac.ae}

\icmlkeywords{Machine Learning, ICML}

\vskip 0.3in
]



\printAffiliationsAndNotice{} 
\begin{abstract}
Learning models with categorical variables requires optimizing expectations over discrete distributions, a setting in which stochastic gradient-based optimization is challenging due to the non-differentiability of categorical sampling. A common workaround is to replace the discrete distribution with a continuous relaxation, yielding a smooth surrogate that admits reparameterized gradient estimates via the reparameterization trick. Building on this idea, we introduce \algoname, a novel and efficient diffusion-based soft reparameterization method for categorical distributions. Our approach defines a flexible class of gradient estimators that includes the \sthrough~estimator as a special case. Experiments spanning latent variable models and inference-time reward guidance in discrete diffusion models demonstrate that \algoname\  consistently matches or outperforms existing gradient-based methods. The code is available at \url{https://github.com/samsongourevitch/redge}.
\end{abstract}
\section{Introduction}
Many learning problems involve \emph{discrete} choices, such as actions in reinforcement learning, categorical latent variables in variational inference, token-level decisions in sequence modeling, or combinatorial assignments in structured prediction and discrete optimization. A common primitive in these settings is the minimization of an objective of the form
$\pE_{\targ{\param}{}{}}[f(X)]$,
where $\targ{\param}{}{}$ is a categorical distribution corresponding to the law of $L$ independent discrete random variables each taking values in a vocabulary of size $K$. 
 The function $f$ represents a downstream loss or constraint penalty evaluated on discrete samples, typically through one-hot encodings. Computing $\grad{\param} \pE_{\targ{\param}{}{}}[f(X)]$ exactly is generally intractable: in the absence of exploitable structure in $f$, it requires summing over $K^L$ configurations. The challenge, therefore, is to construct gradient estimators that are both computationally feasible and have a low mean squared error.

Existing estimators exhibit a standard bias--variance trade-off. Score-function estimators, such as {\sc Reinforce} \citep{williams1992simple,greensmith2004variance}, are unbiased but often suffer from high variance, which motivates the use of variance-reduction techniques most often using learned control variates \citep{tucker2017rebar,grathwohl2018relax};
 they often yield useful gradients in practice but are biased with respect to the true discrete objective, with recent refinements such as \reinmax\ improving the approximation \citep{liu2023reinmax}. Continuous relaxations based on approximate reparameterizations, most notably the \gumbelsoft\ / Concrete construction \citep{maddison2016concrete,jang2017categorical}, replace $\targ{\param}{}{}$ by a smooth family on the simplex controlled by a temperature parameter. While this enables pathwise differentiation, taking the temperature small to reduce bias drives the sampler towards an argmax map and typically leads to ill-conditioned or vanishing gradients, while higher temperatures provide well-behaved gradients but correspond to optimizing a more relaxed objective.

In this work, we revisit continuous relaxations through the lens of denoising diffusion models \citep{sohl2015deep, song2019generative, ho2020denoising}. Diffusion models generate data by transforming a Gaussian sample into a sample from the target data distribution through iterative denoising dynamics explicitly constructed as the reverse of a chosen forward noising process. In practice, implementing the sampler requires only access to a denoiser; that is, a function that, given a noisy input and its noise level or time index, returns the expected clean signal.

\paragraph{Contributions.}  We exploit the key observation that for a categorical distribution supported on simplex vertices, the denoiser at each noise level can be computed in closed form. This enables us to construct a training-free, diffusion-based, differentiable, and approximate sampling map from Gaussian noise to the categorical distribution $\targ{\param}{}{}$. We then analyze the small-noise regime, which serves as the temperature parameter, and characterize the emergence of nearly constant transport regions and sharp decision boundaries, explaining when and why gradients become uninformative as the relaxation approaches the discrete target. We derive practical gradient estimators, including hard variants, that recover the hard \sthrough\ \cite{bengio2012estimating} and \reinmax\ \cite{liu2023reinmax} as special cases when using a single diffusion step. We also propose a parameter-dependent initialization that improves performance while keeping the diffusion overhead small. Empirically, we find that our diffusion-based reparameterization yields strong results across a diverse set of benchmarks, including polynomial programming, variational inference, and inference-time reward guidance, typically matching or improving upon prior estimators. 

\paragraph{Notation.} For any positive integer $n$, let $[n] \eqdef  \{1, \ldots, n\}$.
We denote the $K$-simplex by $\Delta^{K-1} =\{ w \in \rset^K_+ \, :\, \sum_{k=1}^K w^k = 1\}$. For a matrix $\bx \in \rset^{L \times K}$,  we write $\nbx^i \in \rset^K$ for its $i$-th row and $\nbx^{ij}$ for the $(i,j)$-th element. Besides, we identify any $\bx \in \rset^{L \times K}$ as a vector using the row-major order in which the matrix elements are ordered by row.
The softmax operator on a matrix $\bx \in \rset^{L \times K}$ is defined row-wise by $\softmax(\bx) \in \rset^{L \times K}$ with entries $\softmax(\bx)^{ik} = \exp(\nbx^{ik}) / \sum_{j=1}^K \exp(\nbx^{ij})$ for $(i,k) \in [L] \times [K]$.
For a map $f : \rset^\dimx \to \rset^m$, we write $\jac{\bx} f \in \rset^{m \times d}$ for its Jacobian matrix. To write Jacobians for maps $f : \rset^{L' \times K'} \to \rset^{L \times K}$ conveniently, we implicitly identify matrices with their vectorized forms. Gradients and Jacobians are taken with respect to these vectorized representations, and we do not distinguish notationally between a matrix and its vectorization.


\section{Background}
We consider optimization problems where the objective is an expectation with respect to a \emph{discrete} distribution over a finite vocabulary $\msx$, of the form
\begin{equation}
 \objFun(\theta) =
 \pE_{\targ{\param}{}{}}[f_{\param}(X)] \eqdef
 \sum_{\bx \in \msx} f_{\param}(\bx) \, \targ{\param}{}{\bx} \eqsp,
 \label{eq:objective}
\end{equation}
where $f: \msx \times \Theta \to \rset$, $\Theta \subseteq \rset^m$, and $\{\pi_{\theta} : \theta \in \Theta\}$ is a parameterized family of probability mass functions (p.m.f.) over $\msx$. Without loss of generality, we assume that $\msx = \msv^L$ for some $L \in \nset$, where $\msv \eqdef \{\onehot{k}\}_{k=1}^K$ denotes the set of $K$ one-hot encodings, and $\onehot{k} \in \rset^{K}$ is the one-hot vector with $1$ at position $k$. We also assume that the distribution $\targ{\param}{}{}$ factorizes according to this categorical structure: for any $\theta \in \Theta$ and $\bx =(\nbx^{1},\ldots,\nbx^{L}) \in\msx$,
\begin{equation}
    \targ{\param}{}{\bx} =
    \prod_{i=1}^L \targ{\param}{}{\nbx^i}[i],\; \targ{\param}{}{\nbx^i}[i] \eqdef \frac{\exp(\langle \nbx^i \eqsp, \feature_{\param}^i\rangle)}{\sum_{j = 1}^K \exp(\feature_{\param}^{ij})} \eqsp, \! \!
    \label{eq:factorized-joint}
\end{equation}
where $\param \mapsto \varphi_{\theta} \in \rset^{L \times K}$ is such that $\feature^i_\param$ are the logits of the $i$-th categorical component.
The factorization \eqref{eq:factorized-joint} is standard and is used in reinforcement learning to model policies \cite{wu2018variance, berner2019dota, vinyals2019alphastar}, in training Boltzmann machines \cite{hinton2012practical}, in VQ-VAEs \cite{van2017neural}, and more recently for modelling transitions in discrete diffusion models.\cite{hoogeboom2021argmax,austin2021structured,campbell2022continuous,lou2023discrete, shi2024simplified,sahoo2024simple}.

Under mild regularity assumptions on $f$ and $\feature_\param$, the gradient of \eqref{eq:objective} is given by
\begin{equation}
  \txts \grad{\param} \objFun(\param) = \pE_{\targ{\param}{}{}}[\grad{\param} f_{\param}(X)] + \sum_{\bx \in \msx} f_{\param}(\bx) \grad{\param} \targ{\param}{}{\bx}
  \label{eq:true-grad}
\end{equation}
and is intractable as the sum ranges over $K^L$ states. 
Furthermore, while the first term can be approximated via Monte Carlo, the second term has to be estimated separately. One option is to use the {\sc reinforce} estimator. \cite{williams1992simple}
However, it is well known that the vanilla forma of this estimator  suffers from high variance \cite{sutton_barto_2018} and has be to combined with  baselines or other control-variate techniques to reduce variance \cite{greensmith2004variance, mnih2014neural, mnih2016variational, tucker2017rebar, titsias2022double, grathwohl2018relax}. Other estimation methods have been proposed, such as the \sthrough\ estimator \cite{bengio2012estimating} or \gumbelsoft\  reparameterization \cite{maddison2016concrete, jang2017categorical}, which we briefly review here.

For simplicity, we assume throughout that $f$ does not depend on $\param$ (i.e., $f_{\param}(\bx)=f(\bx)$) and is differentiable in $\bx$.

\paragraph{\sthrough\  and {\sc ReinMax} estimators.}
Popular estimators either replace the objective $F$ by a differentiable surrogate and use its gradient, or directly construct a surrogate for $\grad{\param} F$ itself.  One such estimator is the \sthrough\ (ST) approach, which replaces the discrete objective by the surrogate obtained by swapping $f$ and the expectation in \eqref{eq:objective}, and differentiates the map $\param \mapsto f(\pE_{\targ{\param}{}{}}[X])$. Noting that
$\jac{\param} \pE_{\targ{\param}{}{}}[X] = \pCov_{\targ{\param}{}{}}(X) \jac{\param} \feature_\param$,
the gradient of this surrogate is
$
\jac{\param} \feature_\param ^\top \pCov_{\targ{\param}{}{}}(X)
 \grad{\bx} f(\pE_{\targ{\param}{}{}}[X]).
$
A popular practical instance of ST replaces the expectation inside $\grad{\bx} f$ with a single Monte Carlo sample $X \sim \targ{\param}{}{}$, often referred to as \emph{hard} ST:
\begin{equation}
  \label{eq:grad-st}
\gradST{\param} F(X; \param) \eqdef \jac{\param} \feature_\param \!{}^\top \pCov_{\targ{\param}{}{}}(X)
 \grad{x} f(X) \eqsp.
\end{equation}

 This gradient estimator was first considered by \citet{hinton2012nnml_lec9} in the context of training with hard thresholds, where the backward pass treats the threshold operation as the identity. It was later formalized by \citet{bengio2012estimating} for quantization-aware training of deep networks. The resulting gradient estimator is often effective in practice but is, by construction, biased with respect to the true discrete objective. When $f$ is linear, hard ST yields an unbiased gradient of $F$. 

The \textsc{ReinMax} estimator \cite{liu2023reinmax} refines hard ST by providing an exact unbiased
estimator of $\grad{\param}\objFun(\param)$ in case $f$ is quadratic, obtained via a
trapezoidal (Heun-type) rule. In \Cref{appendix:reinmax}, we show that it admits the
following simple form, closely mirroring hard ST:
\begin{equation}
\label{eq:grad-reinmax}
\small
\gradRM{\param} F(X;\param)
\eqdef
\frac{1}{2}\,
\jac{\param}\feature_\param^{\top}
B_{\param}(X)
\grad{x} f(X)
\eqsp,
\end{equation}
where $X\sim\targ{\param}{}{}$. Here $ B_{\param}(X)
=
\pCov_{\targ{\param}{}{}}(X)
+
\widehat{C}_\param(X)$, where $\widehat{C}_\param(X)$ is block-diagonal with $L$
blocks of size $K\times K$; its $\ell$-th block is
\(
\widehat{C}^{(\ell)}_\param(X)
\eqdef
(X^\ell-\pE_{\targ{\param}{}{}}[X^\ell])(X^\ell-\pE_{\targ{\param}{}{}}[X^\ell])^\top\), implying that  $\pE_{\targ{\param}{}{}}[\widehat{C}^{(\ell)}_\param(X)]
=\pCov_{\targ{\param}{}{}}(X^\ell)$. We provide further details and proofs in \Cref{appendix:reinmax}.

\paragraph{Continuous relaxations and soft reparameterizations.}  
For continuous distributions, the reparameterization trick expresses a sample as a deterministic transform of auxiliary noise \cite{kingma2013auto}. Specifically, we temporarily assume that \(\targ{\param}{}{}\) is a distribution that admits a reparameterization, that is,
$
\targ{\param}{}{} \eqdef \law\big(T_{\param}(Z)\big),
$
where $Z$ follows a distribution $\base$ that does not depend on $\param$, typically uniform or Gaussian. Assume also that for $\base$-almost every $z$ the map $\param \mapsto T_{\param}(z)$ is differentiable for any $\param \in \Theta$ and that $z \mapsto \grad{\param} f(\transform_{\param}(z))$ satisfies standard domination conditions for all $\param \in \Theta$, so that differentiation under the expectation is justified by the Lebesgue dominated convergence theorem. Then
\begin{align}
\grad{\param} F(\theta)  & = \grad{\param} \pE[f(\transform_{\param}(Z))] \nonumber \\
& = \pE[\jac{\param} \transform_{\param}(Z)^\top \grad{x} f(\transform_{\param}(Z))] \eqsp, \label{eq:reparame_trick}
\end{align}
which yields a low-variance Monte Carlo estimator of the objective gradient \cite{schulman2015gradient}. In the discrete case however, such an exact reparameterization is not available. Any representation of $\targ{\param}{}{}$ as the pushforward of a simple continuous base distribution typically yields a map $\param \mapsto \transform_{\param}(z)$ that is piecewise constant with jump discontinuities. As a consequence, for any $\theta$, $\jac{\param} \transform_{\param}(z) = 0$ for almost every $z$, and therefore $\pE[\jac{\param} \transform_{\param}(Z)^\top \grad{x} f(\transform_{\param}(Z))] = 0$ while $\grad{\param} F(\theta)  \neq 0$. Thus \eqref{eq:reparame_trick} does not hold in the discrete setting as the differentiability at \emph{every} $\param \in \Theta$ fails, and the domination condition needed to justify differentiation under the integral sign is violated. 
 To circumvent this issue, one typically resorts to continuous relaxations of $\targ{\param}{}{}$, \ie, distributions admitting a density with respect to the Lebesgue measure. For such relaxations, \eqref{eq:reparame_trick} is valid, at the cost of introducing bias in exchange for lower-variance gradient estimates.

The Gumbel–Softmax (or Concrete) distribution \cite{maddison2016concrete, jang2017categorical} is a canonical example of such a relaxation: $\targ{\param}{}{}$ is replaced with a temperature-indexed family of continuous distributions $(\targ{\tau}{}{}[\param])_{\tau > 0}$ on the simplex that admit pathwise gradient estimator satisfying \eqref{eq:reparame_trick}. Specifically,
$
  \targ{\tau}{}{}[\param] \;\eqdef\; \law\big(\transform^\param_\tau(G)\big)
$
is used as a relaxed surrogate for $\targ{\param}{}{}$, where for all $\param \in \Theta$,
\[
\transform^\param_\tau(G)
  \;\eqdef\;
  \softmax\big((\feature_\param + G)/\tau\big) \eqsp, \quad \tau > 0 \eqsp, 
\]
and $G \in \rset^{L \times K}$ is a random matrix with i.i.d. Gumbel entries $G^{ij} \sim \mathrm{Gumbel}(0,1)$. As $\tau \to 0$, $\{\targ{\tau}{}{}[\param]: \, \tau > 0\}$ converges in distribution to $\targ{\param}{}{}$; see \citet{gumbel1954statistical}. This is known as the Gumbel-max trick \cite{maddison2016concrete}. It is easy to verify that for the surrogate objective $F_{\tau}(\theta) =\pE_{\targ{\tau}{}{}[\param]}[f(X)]$, which converges to $F$ as $\tau \to 0$, \eqref{eq:reparame_trick} holds under appropriate assumptions on $f$, thus allowing an approximate reparameterization trick at the expense of a certain bias controlled by the parameter $\tau$.

\begin{remark}
\label{remark:f_extension}
 To compute the true gradient $\grad{\param} F(\param)$, only the values of $f$ on $\msx$ are relevant.  In contrast, the estimators considered here differentiate a fixed continuous extension of $f$ to $(\Delta^{K-1})^L$ (or to $\rset^{L\times K}$) which we assume is provided by the downstream model and denote again by \(f\). Note however that distinct extensions may agree on $\msx$ while inducing different gradients on the simplex. Throughout, we avoid this ambiguity by treating the choice of extension as part of the problem specification.
\end{remark}

\section{Method}
In this section, we present \algoname\ (Reparameterized Diffusion Gradient Estimator), which builds on diffusion models to define a continuous relaxation for $\targ{\param}{}{}$. We begin by reviewing the basics of these models.

\subsection{Diffusion models.}
\label{subsec:diffusion-models}
We present denoising diffusion models (DDMs) \citep{sohl2015deep, song2019generative, ho2020denoising} and the DDIM framework \cite{song2021ddim} from the interpolation viewpoint \citep{liu2023flow, lipman2023flow, albergo2023stochastic}. More details are provided in \Cref{apdx:general-ddim}.

DDMs define a generative procedure for a data distribution $\pdata{0}{}{}$ by specifying a continuous family of marginals $(\pdata{t}{}{})_{t \in [0,1]}$ that connects $\pdata{0}{}{}$ to the simple reference distribution $\pdata{1}{}{} \eqdef \gauss(\zero, \Id)$. More precisely, we consider here $\pdata{t}{}{} = \law(X_t)$, where
\begin{equation}
\label{eq:interpolation}
X_t = \alpha_t X_0 + \sigma_t X_1 \eqsp,
\end{equation}
 $X_0$ and $X_1$ are independent samples from $\pi_0$ and $\pi_1$ respectively. In addition, $(\alpha_t)_{t \in [0,1]}$ and $(\sigma_t)_{t \in [0,1]}$ are non-increasing and non-decreasing schedules, respectively, with boundary conditions $(\alpha_0, \sigma_0) \eqdef (1, 0)$ and $(\alpha_1, \sigma_1) \eqdef (0, 1)$. 
To generate new samples, DDMs simulate a time-reversed Markov chain. Given a decreasing sequence $(\tk{k})_{k=0}^{\last-1}$ of $\last$ time steps with $\tk{\last-1}=1$ and $\tk{0}=0$, reverse transitions are  applied iteratively to map a sample from $\pdata{\tk{k+1}}{}{}$ to one from $\pdata{\tk{k}}{}{}$,  progressively denoising until the clean data distribution $\pdata{0}{}{}$ is reached.

The DDIM framework \citep{song2021ddim} introduces a general family of reverse transitions for denoising diffusion models. It relies on a schedule $(\eta_t)_{t \in [0,1]}$, satisfying $\eta_t \leq \sigma_t$ for all $t \in [0,1]$, along with a family of conditional distribution given for $s < t$ by
$$
\fw{s\tbar 0, 1}{\bx_0, \bx_1}{\bx_s}[\ddimstd] 
\eqdef \normpdf(\bx_s;\, \alpha_s \bx_0 + \sqrt{\sigma_s^2 - \eta_s^2}\, \bx_1,\, \eta_s^2 \Id) \eqsp.
$$
When $\ddimstd_s = 0$, this Gaussian is understood, by abuse of notation, as a Dirac mass centered at the same mean. Clearly, for all $\ddimstd_s \in [0, \std_s]$, a sample from $\fw{s\tbar 0, 1}{X_0, X_1}{}[\ddimstd]$ with $(X_0, X_1) \sim \pdata{0}{}{} \otimes \gauss(\zero, \Id)$ is a sample from $\pdata{s}{}{}$. Note that if $X_s^\eta |X_0, X_1 \sim \fw{s\tbar 0, 1}{X_0, X_1}{}[\ddimstd]$, then $X_s^\eta |X_0, X_t \sim \fw{s\tbar 0, t}{\bx_0, \bx_t}{\cdot}[\ddimstd] = \fw{s\tbar 0, 1}{X_0, (X_t - \alpha_t X_0) / \std_t}{}[\ddimstd]$ where the joint distribution of the random variables $(X_0, X_t, X_1)$ is defined in \eqref{eq:interpolation}.  We define the reverse transition 
\begin{align}
\label{eq:ddpm-reverse}
 \pdata{s\tbar t}{\bx_{t}}{\bx_s}[\ddimstd]
 & = \pE \Big[\, \fw{s\tbar 0, t}{X_0, X_t}{\bx_s}[\ddimstd] \,\Big|\, X_t = \bx_t \Big] \eqsp.
\end{align}
By construction, the transitions \eqref{eq:ddpm-reverse} satisfy the marginalization property, \ie\ for any $0 \leq s < t \leq 1$, 
$
\smash{\pdata{s}{}{\bx_s}
= \int \pdata{s\tbar t}{\bx_t}{\bx_s}[\ddimstd] \, \pdata{t}{}{\bx_t} \rmd \bx_t}
$. 
Thus, $(\pdata{\tk{k}\tbar \tk{k + 1}}{}{}[\eta])_{k = 0}^{\last-2}$ defines a set of reverse transitions that enable stepwise sampling from the sequence $(\pdata{t_k}{}{})_{k=0}^{n-1}$.  In practice, however, these transitions are intractable. A common approximation is to replace $X_0$ in the second line of \eqref{eq:ddpm-reverse} by its conditional expectations \citep{ho2020denoising, song2021ddim}. More precisely, let 
$
\denoiser{t}{}{\bx_t} \eqdef \int x_0 \,\pdata{0\tbar t}{x_t}{x_0} \rmd \bx_0
$, where $\pdata{0\tbar t}{}{}$ is defined as the conditional distribution of $X_0$ given $X_t$ in \eqref{eq:interpolation}.
Then the model proposed in \cite{ho2020denoising,song2021ddim} corresponds to approximating 
each $\pdata{t_k \tbar t_{k + 1}}{}{}[\eta]$ by 
\begin{equation*}
\hpdata{k\tbar k+1}{\bx_{k+1}}{\bx_k}[\ddimstd] \eqdef \fw{\tk{k}\tbar 0,\tk{k+1}}{\denoiser{\tk{k+1}}{}{\bx_{k+1}}, \bx_{k+1}}{\bx_{k}}[\ddimstd].
\end{equation*}

For simplicity, we consider next only the deterministic sampler with $\ddimstd_s = 0$ for all $s \in [0, 1]$. Then $\hpdata{k\tbar k+1}{\bx_{k+1}}{\cdot}[\ddimstd]$  becomes a Dirac at $\transform _{t_{k}\tbar t_{k+1}}(\bx_{t_{k+1}})$ where for $s<t$:
\begin{equation}
       \label{eq:ddim-map}
\transform _{s\tbar t}(\bx_t) \eqdef (\acp_{s} - \acp_t \std_s /\std_t) \denoiser{t}{}{\bx_t}[] + {\std_s}\bx_t / \std_t \eqsp.
\end{equation}

Finally, define for all $k < n-2$ and $\bx_1 \in \rset^{L \times K}$ the DDIM mapping:
\begin{equation}
       \label{eq:ddim-map-recursive}
\smash{\transform _{t_k}(\bx_1) \eqdef \transform _{t_k\tbar t_{k+1}} \circ \dotsc \circ \transform _{t_{n-2}\tbar t_{n-1}}(\bx_1)} \eqsp.
\end{equation}

When the denoiser $(t, \bx) \mapsto \denoiser{t}{}{\bx}$ is intractable it is replaced with a parametric model trained with a denoising loss. 

\subsection{Diffusion-based categorical reparameterization}
\label{sec:cat_rep}

\begin{algorithm}[tb]
\caption{Soft reparameterization with DDIM transitions}
\label{alg:ddo}
\begin{algorithmic}[1]
\STATE {\bfseries Input:} grid $(\tk{k})_{k=0}^{\last-1}$, schedule $(\alpha_{\tk{k}}, \std_{\tk{k}})_{k=0}^{\last-1}$
\STATE Sample $\bx \sim \gauss(\zero, \Id_{K})^{\otimes L}$
\FOR{$k = \last - 1$ {\bfseries down to} $0$}
\STATE $\hat{\bx} _0 \leftarrow \softmax(\feature _\param + \acp_\tk{k+1} \bx / \sigma^2 _\tk{k+1})$ 
\STATE $\hat{\bx} _1 \leftarrow (x^i - \acp_\tk{k+1} \hat\bx _0) / \sigma_\tk{k+1}$ 
\STATE $\bx \leftarrow \alpha_{\tk{k}} \hat{\bx} _0 + \std_{\tk{k}} \hat\bx _1$
\ENDFOR
\STATE \textbf{return} $\bx$
\end{algorithmic}
\end{algorithm}

We now introduce our diffusion-based soft reparameterization of $\targ{\param}{}{}$. This reparameterization is based on a DDM with target $\pdata{0}{}{}[\theta] = \targ{\param}{}{}$.
Since $\targ{\theta}{}{}$ is a discrete measure, the resulting denoising distribution, denoted by $\targ{\,0\tbar t}{}{}[\theta]$, is also discrete. Indeed, by \eqref{eq:interpolation} and the factorization \eqref{eq:factorized-joint}, the conditional distribution factorizes as $\targ{\,0\tbar t}{\bx_t}{\bx_0}[\param] \propto \prod_{i = 1}^L \targ{\,0\tbar t}{\nbx^i _t}{\nbx^i _0}[\param, i]$, where
$$
\txts \targ{\,0\tbar t}{\nbx^i _t}{\nbx^i _0}[\param, i] \propto \targ{\param}{}{\nbx^i _0}[i] \normpdf(\nbx^i _t; \alpha_t \nbx^i _0, \sigma^2 _t \Id_{K}) \eqsp.
$$
With this structure, the posterior-mean denoiser
$
\denoiser{t}{}{\bx_t}[\param]
\eqdef
\sum_{\bx_0} \bx_0 \,\targ{\,0\tbar t}{\bx_t}{\bx_0}[\param]
$
simplifies to a matrix of posterior probabilities due to the one-hot structure; that is, for any $i \in [L]$ and $j \in [K]$, we have $\denoiser{t}{}{\bx_t}[\param]^{ij} = \targ{\,0\tbar t}{\bx_t}{\onehot{j}}[\param, i]$, and the denoiser can be computed exactly and efficiently. Indeed, since $\| \nbx^i_t - \acp_t \onehot{j} \|^2 = \| \nbx^i _t \|^2 - 2 \acp_t \nbx^{ij}_t + \acp^2 _t$, we get
\begin{align*}
       \denoiser{t}{}{\bx_t}[\param]^{ij} & = \frac{\targ{\param}{}{\onehot{j}}[i] \exp(-\frac{\| \nbx^i_t \|^2 - 2 \acp_t \nbx^{ij}_t + \acp^2 _t}{2 \std^2 _t})}{\sum_{k = 1}^K \targ{\param}{}{\onehot{k}}[i] \exp(-\frac{\| \nbx^i _t \|^2 - 2 \acp_t \nbx^{ik}_t + \acp^2 _t}{2 \std^2 _t})} \\
       & = \frac{\exp(\feature_{\param}^{i j}) \exp(\alpha_t \nbx^{ij}_t / \std^2 _t)}{\sum_{k = 1}^K \exp(\feature_{\param}^{ik}) \exp(\alpha_t \nbx^{ik}_t / \std^2 _t))} \eqsp.
\end{align*}
This yields the following simple matrix form for the denoiser:
  \begin{mdframed}[style=propFrame]
       \vspace{.01cm}
              \begin{equation}
                     \denoiser{t}{}{\bx_t}[\param] = \softmax(\feature_\param + \alpha_t \bx _t/\sigma^2 _t) \eqsp.
                     \label{eq:denoiser-sm}
                     \vspace{0.1cm}
              \end{equation}
  \end{mdframed}
Unlike standard diffusion models that learn an approximate denoiser using a neural network, here the denoiser $\denoiser{t}{}{}[\param]$ has a closed-form expression due to the factorized categorical structure. This enables reverse transitions from $\pdata{1}{}{}$ to $\targ{\param}{}{}$ without denoiser approximation and yields an approximate, differentiable sampling procedure. Denote for any $k < n-2$ by $\transform^\param _{t_k}$ the DDIM map associated with $\denoiser{t}{}{\bx_t}[\param]$ defined in \eqref{eq:ddim-map}.
Then, $\transform^\param _{t_k}(X_1)$ with $X_1 \sim \gauss(\zero, \Id_K)^{\otimes L}$ is an approximate sample from the Gaussian mixture with density $\targ{\, \tk{k}}{}{\bx_\tk{k}}[\param] \eqdef \sum_{\bx_0} \prod_{i = 1}^L \normpdf(\nbx^i _\tk{k}; \acp_\tk{k} \nbx^i _0, \sigma^2 _\tk{k} \Id_K) \targ{\param}{}{\bx_0}$ and $\transform^\param _0(X_1)$ is an approximate relaxed sample from $\targ{\param}{}{}$. By \eqref{eq:reparame_trick}, a natural choice of gradient estimator is  
  \begin{mdframed}[style=propFrame,  
  skipabove=0.4\baselineskip,
  skipbelow=0.4\baselineskip,
  innertopmargin=2pt,
  innerbottommargin=6pt]
\begin{equation}
\label{eq:hard_redge}
       \jac{\param} \transform^\param _0(X_1)^\top \grad{x} f(X_0) \eqsp, \quad \tag{\algoname}
\end{equation}
  \end{mdframed}
where $\smash{X_0 \sim \pdata{0\tbar \tk{1}}{\transform^\param _{0\tbar \tk{1}}(X_1)}{}[\param]}$. As a result, with a single diffusion step, the reparameterized sample is $\transform^\param _{0}(X_1) = \denoiser{1}{}{X_1}[\param] = \pE_{\targ{\param}{}{}}[X_0]$, due to the boundary condition $\alpha_1 = 0$, and we recover the \sthrough\ estimator (both soft and hard) as a special case. In contrast, using many diffusion steps and appropriately chosen timesteps $(\tk{k})_{k = 0}^{n-1}$ yields an almost exact reparameterization of $\targ{\param}{}{}$. As discussed previously, this is precisely the regime we seek to avoid: the gradient of the mapping essentially vanishes, resulting in a high-variance reparameterized gradient. This trade-off is directly analogous to the role of the temperature parameter $\tau$ in \gumbelsoft\ relaxations, where a high temperature yields a relaxed but biased approximation, while a low temperature results in a high-variance estimator. In our case, the relaxation parameter is determined by the number of diffusion steps and the placement of the timesteps $(\tk{k})_{k = 1}^{n-1}$.

More precisely,  Proposition~\ref{prop:vanishing-gradient} characterizes, at a fixed number of timesteps, the behavior of the reparameterized gradient as \(\tk{1} \to 0\).  
The proof is given in \Cref{apdx:proofs}.
\begin{mdframed}[style=propFrame]
       \vspace{.2cm}
\begin{proposition}
       \label{prop:vanishing-gradient}
With $L=1$ and the timesteps $(\tk{k})_{k = 2} ^{n-1}$ fixed, under assumptions stated in the Appendix, we have for all $\param \in \Theta$, 
\vspace{-.2cm}
\begin{equation}
   \lim_{\tk{1} \to 0} \big\|\jac{\param} \transform^\param_0(X_1) \big\| = 0  \eqsp, \quad \pP\text{-a.s.}  
  \label{eq:ddim-grad-vanish-theta}
\end{equation}
with $X_1 \sim \gauss(\zero, \Id_K)$.
\end{proposition}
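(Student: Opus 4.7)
The plan is to apply the chain rule to the last DDIM step and show that the resulting Jacobian contracts exponentially fast as $\tk{1} \to 0$, overcoming the polynomial blow-up of its coefficients. Using the boundary conditions $\alpha_0 = 1$, $\sigma_0 = 0$ and setting $c_{\tk{1}} \eqdef \alpha_{\tk{1}} / \sigma^2 _{\tk{1}}$, the last step collapses to the denoiser itself, $\transform^\param _{0 \tbar \tk{1}}(\bx) = \softmax(\param + c_{\tk{1}} \bx)$. Writing $\tilde X = \transform^\param _{\tk{1} \tbar \tk{2}} \circ \cdots \circ \transform^\param _{\tk{n-2} \tbar \tk{n-1}}(X_1)$ and $g = \softmax(\param + c_{\tk{1}} \tilde X)$, the chain rule together with the Jacobian identity for softmax gives
\[
\jac{\param} \transform^\param _0 \;=\; \bigl(\diag(g) - g g^\top\bigr)\bigl(\Id + c_{\tk{1}} \jac{\param} \tilde X\bigr).
\]

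Next I would analyze the small-$\tk{1}$ asymptotics of each factor. Under the assumed schedule, $c_{\tk{1}} \to \infty$ (e.g.\ $c_{\tk{1}} \sim \tk{1}^{-2}$ for the linear schedule), while with $\tk{2}, \dots, \tk{n-1}$ held fixed away from $0$ the map producing $\tilde X$ converges smoothly as $\tk{1} \to 0$ to a limit $\tilde X_0$, and both $\tilde X$ and $\jac{\param} \tilde X$ remain uniformly bounded in a neighborhood of $\tk{1}=0$. The appendix assumption guarantees that $\tilde X_0$ admits a unique maximal coordinate $k^\star$ almost surely for $X_1 \sim \gauss(\zero, \Id_K)$, since the set of $X_1$ producing ties is a smooth preimage of finitely many hyperplanes and is therefore Gaussian-null. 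Letting $\Delta > 0$ denote the random gap between the largest and second-largest entry of $\tilde X_0$, continuity then yields the same argmax $k^\star$ for $\tilde X$ with gap at least $\Delta/2$ once $\tk{1}$ is small enough.

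The decisive estimate is that for every $i \neq k^\star$,
\[
g_i \;\leq\; \exp(\param_i - \param_{k^\star})\,\exp\!\bigl(-c_{\tk{1}} \Delta / 2\bigr),
\]
so the off-$k^\star$ mass of $g$ decays exponentially in $c_{\tk{1}}$. Combined with the elementary bound $\|\diag(g) - g g^\top\| \leq C_K \max_{i \neq k^\star} g_i$ and the boundedness of $\jac{\param} \tilde X$, this yields $\|\jac{\param} \transform^\param _0\| \leq C(1 + c_{\tk{1}}) e^{-c_{\tk{1}} \Delta / 2} \to 0$ almost surely, since exponential decay dominates polynomial growth. The main obstacle I expect is rigorously justifying the almost-sure uniqueness of the argmax of $\tilde X_0$: this requires a mild non-degeneracy condition on the Jacobian of the iterated smooth map $X_1 \mapsto \tilde X_0$ and a Sard-type or change-of-variables argument to ensure the tie set is Lebesgue-null. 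The remaining manipulations are routine continuity and uniform-boundedness statements for the inner DDIM Jacobians, which are composites of softmaxes evaluated at the fixed noise levels $\tk{k} > 0$.
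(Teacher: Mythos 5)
Your proposal is correct and follows essentially the same route as the paper's proof: chain rule at the last DDIM step, the softmax-covariance identity for the Jacobians, an exponential margin bound $\exp(-c_{\tk{1}}\Delta/2)$ that dominates the polynomial factor $c_{\tk{1}}$, and uniform boundedness of the inner Jacobians over $\tk{1}\in(0,\tk{2})$. The one point you flag as the main obstacle — almost-sure uniqueness of the argmax of the limiting state — is exactly what the paper does not prove but isolates as Assumption (A2), so deferring it to "the appendix assumption" is consistent with the informal statement (note only that "preimage of finitely many hyperplanes under a smooth map" is not automatically null, which is precisely why it is assumed rather than derived).
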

\end{mdframed}
The proof consists in showing that, as $\tk{1} \to 0$, the last DDIM step $\transform^\param _{0 \tbar \tk{1}}$ collapses almost all points in $\rset^K$ onto a single one-hot vector, and as a consequence, the Jacobian of $\transform^\param _0$ with respect to $\param$ vanishes.
  We illustrate Proposition~\ref{prop:vanishing-gradient} in Figure~\ref{fig:ddim-trajectories}.
\begin{figure}[t]
    \centering
       \includegraphics[width=0.47\textwidth]{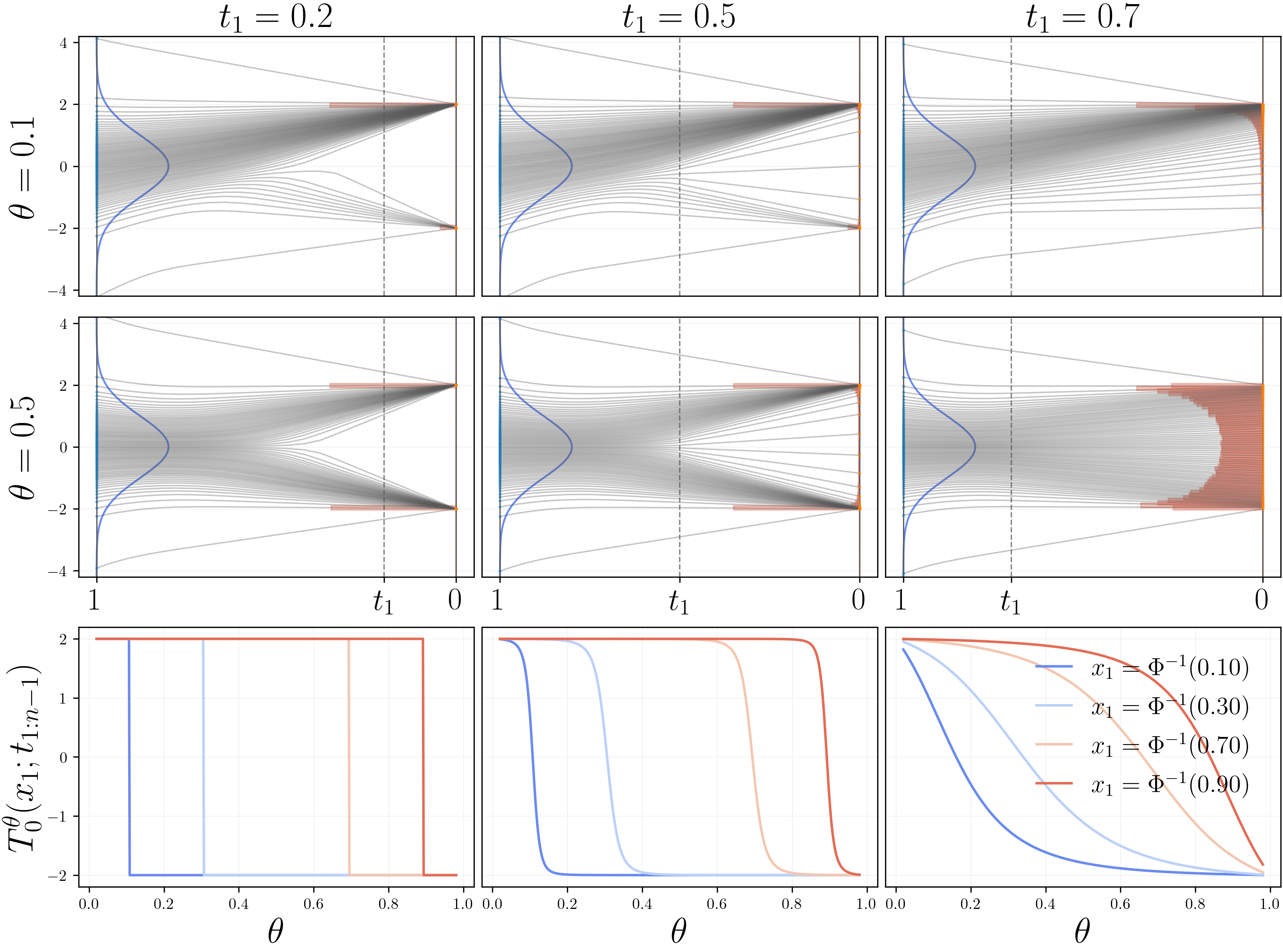}
\caption{Visualization of the DDIM transport for $\targ{\param}{}{} = \param\!\cdot\!\updelta_{-2} + (1 - \param)\!\cdot\!\updelta_{2}$ with the linear schedule $(\alpha_t, \sigma_t) = (1-t, t)$. \textbf{First two rows}: DDIM trajectories with varying $\tk{1}$ for two different values of $\param \in [0,1]$. \textbf{Third row}: The DDIM map $\param \mapsto \transform^\param _0(\bx_1; t_{1:n-1})$ for fixed input quantiles $z$ and three different values of $\tk{1}$. $\Phi$ stands for the standard Gaussian cdf.}
  \label{fig:ddim-trajectories}
\end{figure}
Following the previous discussion, $\tk{1}$ should not be chosen so small that the gradients become uninformative.

\subsection{Extensions}
\label{sec:extensions}

\paragraph{\reinmax\ extension.} We derive a \reinmax\ \cite{liu2023reinmax} version of our diffusion-based reparameterization trick. First, by the marginalization property we have that $\targ{\param}{}{\bx_0} = \int \pdata{0\tbar \tk{1}}{\bx_\tk{1}}{\bx_0}[\param] \pdata{\tk{1}}{}{\bx_\tk{1}}[\param] \rmd \bx_\tk{1}$, and we can write, using the tower property, that 
$
\pE_{\targ{\param}{}{}}[f(X)] = \pE[h_\param(X_\tk{1})]
$, 
where $X_\tk{1}$ is given by \eqref{eq:interpolation} with $\pdata{0}{}{} = \targ{\param}{}{}$
and $\smash{h_\param(\bx_\tk{1}) \eqdef \sum_{\bx_0} f(\bx_0) \, \pdata{0\tbar \tk{1}}{\bx_\tk{1}}{\bx_0}[\param]}$. The Gaussian mixture $\pdata{\tk{1}}{}{}[\param]$ can be reparameterized approximately using the DDIM map $\transform^\param _\tk{1}$ in \eqref{eq:ddim-map-recursive} and therefore $\pE[h_\param(X_\tk{1})] \approx \pE[h_\param(\transform^\param _\tk{1}(X_1))]$ for any $\param \in \Theta$. A Monte Carlo estimator of the total gradient of the \rhs\ at \smash{$\param = \param^\prime$} is given by 
$$
\grad{\param} h_\param(\transform^{\param^\prime} _{\tk{1}}(X_1))\at{\param}{\param^\prime} + \jac{\param} \transform^\param _\tk{1}(X_1)^\top \at{\param}{\param^\prime}\!\grad{\bx} h_{\param^\prime}(\transform^{\param^\prime} _\tk{1}(X_1)),
$$
where the intractable terms are the gradients \wrt\ $\param$ and $\bx$ of the conditional expectation $h_\param$. The key observation is that for any $\bx_\tk{1}$, the gradient of $\param \mapsto h_\param(\bx_\tk{1})$ is a specific case of differentiating an expectation \wrt\ the parameters of a categorical distribution, which in this case is $ \pdata{0\tbar \tk{1}}{\bx_\tk{1}}{}[\param]$. Here by using the \sthrough\  approximation \eqref{eq:grad-st} we recover our hard gradient estimator \eqref{eq:hard_redge}; \emph{i.e.} $\grad{\param} h_\param(\bx_\tk{1}) \approx \grad{\param} f(\denoiser{\tk{1}}{}{\bx_\tk{1}}[\param])$. Our \reinmax-based estimator replaces hard ST with \reinmax\ \eqref{eq:grad-reinmax} as an estimator of $\grad{\param} h_\param(\bx_\tk{1})$. We refer to this gradient estimator as \redgemax. When using a single diffusion step, \ie\ $\tk{1}=1$, the function $h_\param$ is constant and equal to $\pE_{\targ{\param}{}{}}[X]$ due to the boundary condition $\alpha_1=0$, and \reinmax\ is recovered as a special case.  The same observation holds for the map $\bx_\tk{1} \mapsto h_\param(\bx_\tk{1})$, for any $\param$, but here we simply use the \sthrough\ estimator.

\paragraph{Parameter dependent $\pdata{1}{}{}$.} 
In the previous construction, the terminal distribution $\pdata{1}{}{}$ is fixed to a standard Gaussian $\pdata{1}{}{} = \gauss(\zero, \Id_K)^{\otimes L}$. In our setting, however, we can exploit the factorization \eqref{eq:factorized-joint} to select a \emph{parameter–dependent} Gaussian distribution $\pdata{1}{}{}[\param]$ that best approximates $\targ{\param}{}{}$ in the maximum–likelihood sense. Specifically, we take $\pdata{1}{}{}[\param]$ with factorized density $\pdata{1}{}{\bx}[\param] = \prod_{i = 1}^L \normpdf(\nbx^i; \mu^i _\param, \Diag(v^i _\param))$, where for all $i \in [L]$, $(\mu^i _\param, v^i _\param) \in \rset^K \times \rset^K _{> 0}$ and $\Diag(v^i _\param) \in \rset^{K \times K}$ is a diagonal matrix with $v^i _\param$ as diagonal entries. The parameters are then defined as any solution to the maximum–likelihood problem of maximizing $\pE_{\targ{\param}{}{}}[ \log \targ{1}{}{X_0}[\param]]$ \wrt\ $(\mu_\param, v_\param)$ whose one solution is given by matching the mean and per–coordinate variances of $\targ{\param}{}{}[i]$; \emph{i.e.} $\mu^i_\theta = \pE_{\targ{\param}{}{}[i]}[X^i _0]$ and $v^i _\param = \mu^i _\theta \odot (1 - \mu^i _\theta)$.  We restrict ourselves to a diagonal covariance in order to avoid expensive matrix inversions in the denoiser expression derived next. Data--dependent base distributions of this kind have also been considered in other applications, see for instance \citet{lee2022priorgrad,popov2021grad,luo2023image,ohayon2025posterior}. When using the base distribution $\pdata{1}{}{}[\param]$ and setting $\ddimstd_s = 0$ for all $s \in [0,1]$, the DDIM map \eqref{eq:ddim-map} keeps the same form as before. The denoiser, however, is different and is now given in matrix form by 
\begin{mdframed}[style=propFrame]
              \begingroup
              \setlength{\abovedisplayskip}{0pt}
              \setlength{\abovedisplayshortskip}{0pt}
              \begin{equation*}
                     \denoiser{t}{}{\bx_t}[\param] \! = \! \softmax(\feature _\param + \frac{\alpha_t \lambda _\param}{\sigma^2 _t}  \odot (\bx _t - \sigma_t \mu _\param - \frac{\alpha _t}{2}  \mathbf{1})).
                     \label{eq:denoiser-sm-cov}
              \end{equation*}
              \endgroup
  \end{mdframed}
where $\lambda _\param \in \rset^{L \times K}$ with $\lambda^{i, j} _\param = 1 / v^{i, j} _\param$ and $\mathbf{1} \in \rset^{L \times K}$ is the all-ones matrix. See \Cref{apdx:general-denoiser} for a derivation and \Cref{apdx:general-ddim} for the DDIM sampler with arbitrary schedule $(\ddimstd_s)_{s \in [0, 1]}$. We refer to the resulting gradient estimator as \redgecov. Finally, for large vocabularies $K$, the diagonal covariance can become ill-conditioned, so we also consider a scalar variant with $\Diag(v^i_\param)=\sigma_\param^2 \Id_K$.

\subsection{Related work}
\paragraph{Reparameterization trick.} Beyond \gumbelsoft, several works propose alternative approximate reparameterizations. \citet{potapczynski2020invertible} replace Gumbel noise with an invertible push-forward of a Gaussian, yielding a richer family of simplex-valued relaxations. \citet{wang2020relaxed} relax the factorization assumption in \eqref{eq:factorized-joint} by modeling correlated multivariate Bernoulli variables via a Gaussian copula. \citet{paulus2020gradient} generalize the Gumbel–max trick through solutions of random linear programs, obtaining differentiable relaxations by adding a strongly convex regularizer.

\paragraph{Denoiser for a mixture of Dirac deltas.}
\citet{dieleman2022continuous} propose CDCD, which models categorical data by training a diffusion model on embedded tokens. Because the underlying variables are discrete, the denoiser is learned with a cross-entropy objective. When fitting a diffusion model to a finite dataset $(X_i)_{i=1}^N$, the minimizer of the denoising objective is precisely the denoiser associated with the empirical distribution $N^{-1}\sum_{i=1}^N \updelta_{X_i}$, and it admits a closed-form expression; see \citet[Appendix B.3]{karras2022elucidating}. Since our purpose is not to synthetise new data but to have differentiable sampling procedure, we similarly exploit closed-form denoisers for distributions on $\msv^L$.
In concurrent work, \citet{andersson2025diffusion} propose using diffusion models within a sequential Monte Carlo setting to generate $N$ i.i.d. reparameterized samples from the parameter-dependent empirical mixture $\smash{\sum_{i=1}^N w_i^{\param}\,\delta_{X_i^{\param}}}$, where $\smash{w_i^{\param}\ge 0}$ and $\smash{\sum_{i=1}^N w_i^{\param}=1}$, and \(\param\) denotes the state-space model parameters. This enables parameter estimation by differentiating end-to-end through the particle filter used to estimate the observation likelihood.

\section{Experiments}
In this section, we evaluate our method on benchmark problems spanning Sudoku solving and generative modeling. Further experiments are given in \Cref{apdx:cat-vae} and \ref{appendix:additional_results}. We compare against three representative baselines: the \sthrough\ (ST) estimator \cite{bengio2012estimating}, \gumbelsoft\ (using its straight-through variant) \cite{jang2017categorical}, and \reinmax\ \cite{liu2023reinmax}. We focus on these since \reinmax\ is a recent strong method that reports state-of-the-art results and is shown to outperform several earlier alternatives \cite{liu2023reinmax}, so we omit additional baselines. In addition, we don't compare against REBAR/RELAX-style estimators \cite{tucker2017rebar, grathwohl2018relax} because they are meta-estimators that wrap a base estimator with learned control variates and additional tuning. 
Our method could in principle be used as the base reparameterization within these frameworks, which we leave to future work.

All hyperparameters are reported in \Cref{apdx:hyperparameters}. We also report the runtime and memory usage in \Cref{subsec:runtime}. For all methods we use the hard version. For \algoname\ and its variants, we use the linear schedule $(\alpha_t, \sigma_t) = (1 - t, t)$ \citep{lipman2023flow, esser2024scaling}. For the timesteps we first specify $\tk{1}$ and then set $\tk{k} = \tk{1} + (1 - \tk{1}) k / (n-1)$ for $k \in [2:n-1]$.

\subsection{Inference-time guidance with Masked Diffusion}
We start by providing some necessary background on Masked Diffusion models (MDM)  \cite{shi2024simplified,sahoo2024simple}. We defer a more formal introduction to \Cref{apdx:variational_guidance_masked}. 

\paragraph{Masked diffusion.} Let $p$ be a target distribution defined on $\msx$ with the vocabulary augmented by the mask token $\mask$. MDMs provide an approximate sampler for $p$ via an iterative unmasking process. We denote the learned model distribution by $\mdm{0}{}{}[\psi]$, where $\psi$ are the model parameters. We use the superscript $\mathsf{d}$ (for discrete) to avoid conflict with the Gaussian diffusion notation. MDMs rely on a clean-data predictor $\mdm{0\tbar k}{X_k}{}[\psi]$ that outputs a factorized categorical approximation \eqref{eq:factorized-joint} to the posterior of $X_0 \sim p$ given a partially masked state $X_k$, under a joint distribution where $X_k$ is obtained from $X_0$ by setting independently across the dimensions $X^i _k = \mask$ with probability $(1 - \beta_k)$ and $X^i _{k} = X^i _0$ otherwise. $(\beta_k)$ is chosen as a decreasing schedule.  Sampling proceeds by simulating a Markov chain $(X_{0:M})$ where $X^i _M=\mask$ for all $i \in [K]$, and given $X_{k}$, we first draw an approximate solution $\hat{X}_0 \sim \mdm{0\tbar k}{X_{k}}{}[\psi]$, then sample $X_{k-1}$ by keeping the unmasked entries of $X_{k}$ fixed and, for masked entries, setting $X_{k-1} ^i=\hat{X}_0^i$ with probability $(\beta_{k-1} - \beta_{k}) / (1 - \beta_{k})$ (otherwise $X_{k-1} ^i=\mask$). 

\paragraph{Inference-time guidance.} Given a reward $r$ we want to steer sampling at test time by locally modifying the model’s step-wise predictive distribution to favor samples with higher reward. 
Following \cite{murata2025g2d2}, this can be achieved by training, given $\bx_k$ at diffusion step $k$, a factorized variational distribution $\targ{\param}{\bx_k}{}$ to approximate the \emph{tilted} distribution with p.m.f. at $\bx_0$ proportional to $\smash{\exp(-r(x_0)) \mdm{0\tbar k}{\bx_k}{\bx_0}[\psi]}$. This is done by minimizing the forward KL divergence objective
\begin{equation}
    \label{eq:guidance-objective}
F_k(\param)\!\eqdef\!\pE_{\targ{\param}{\bx_k}{}}[r(X_0)] + \kldivergence{\targ{\param}{\bx_k}{}}{\mdm{0\tbar k}{\bx_k}{}[\psi]} \eqsp.
\end{equation}
We then draw $\hat{X}_0$ from the obtained proposal and then sample $X_{k-1}$ as previously done. We provide more details in \Cref{apdx:variational_guidance_masked}. We consider two such applications in the next subsections. In all cases, we optimize the logits directly by setting $\feature_\param=\param$ and treating $\param \in \mathbb{R}^{L\times K}$ as the optimization variable. We detail the guidance algorithm in \Cref{apdx:variational_guidance_masked}. 

\subsubsection{MDM Guidance for solving Sudoku puzzles.}
We follow \citet{ye2024beyond} and train a masked diffusion model (MDM) to approximate the distribution $p(\cdot |\ctxt)$ over valid completions of an incomplete Sudoku grid $\ctxt$, viewed as a categorical distribution on $\msv^{81}$, where $\msv$ denotes the set of one-hot vectors of length $10$ and the mask $\mask$ is $\onehot{10}$. Let $\mathcal{G}$ denote the 27 constraint groups (rows, columns, and blocks). For $g\in\mathcal{G}$, define the digit-count map $s_g(X)\eqdef \sum_{i\in g} P X^i$, where $P$ drops the mask coordinate so that $s_g(X)\in\mathbb{R}^9$ counts digits in $g$. We use the reward
$
r(\bx)\eqdef \sum_{g\in\mathcal{G}} \bigl\|s_g(\bx)-\mathbf{1}_9\bigr\|_2^2.
$
\begin{figure}[t]
    \centering
    \includegraphics[width=0.48\textwidth]{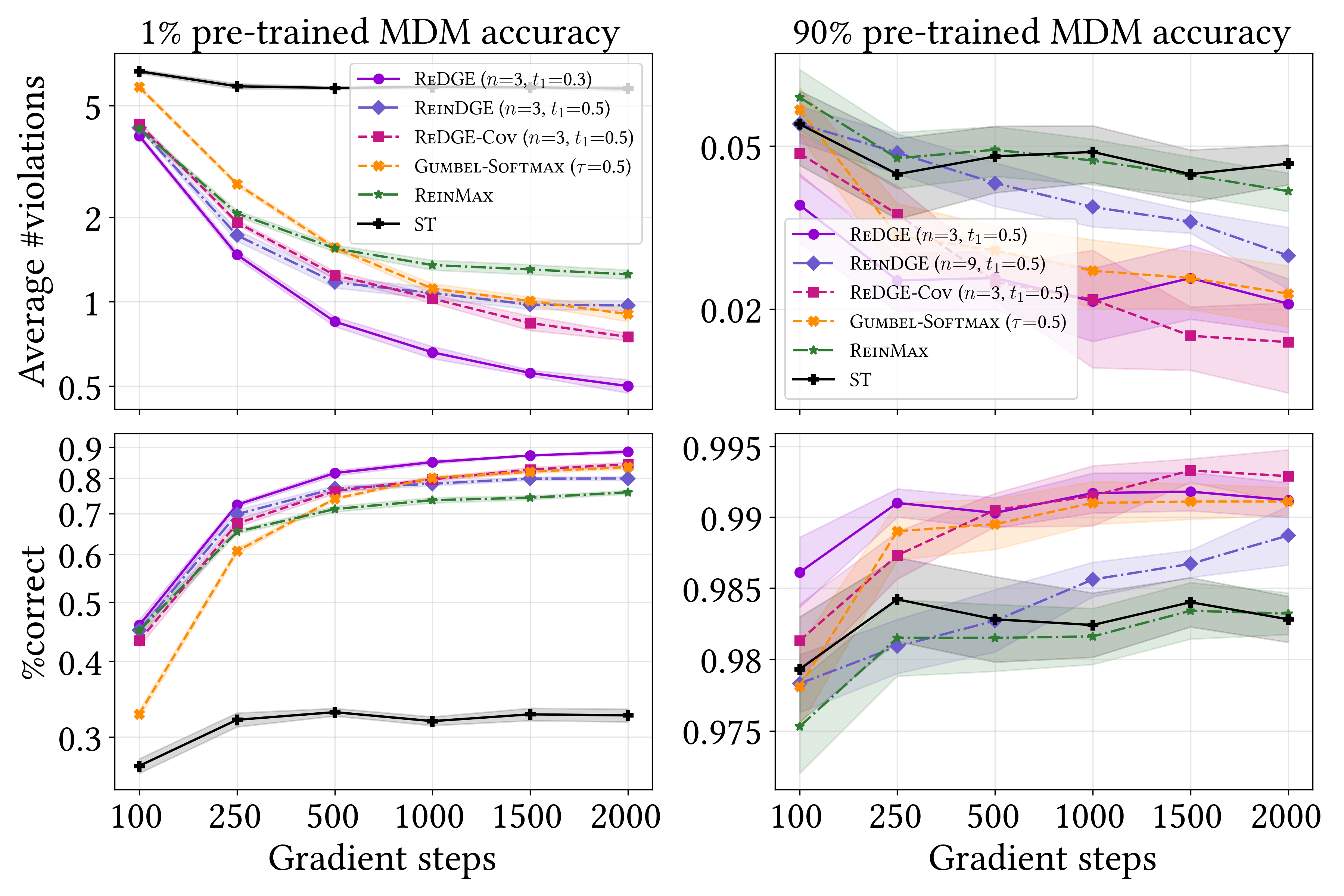}
    \captionsetup{font=small}
    \caption{Masked diffusion guidance on Sudoku: fraction solved and mean constraint violations (1000 test puzzles, 10 seeds, 20 diffusion steps), for early (1\%) and late (90\%) checkpoints, as a function of the gradient-step budget. For each estimator, we sweep hyperparameters and learning rates, select the setting that minimizes the AUC of mean violations over step budgets (100–2000), and plot its violations and solve rate across budgets.}
    \label{fig:mdm_sudoku}
\end{figure}
For the first experiment, we use two checkpoints with very different baseline performance: an early model that solves about $1\%$ of the $1000$ test Sudokus and a late model that solves $90\%$. We apply inference-time guidance by optimizing \eqref{eq:guidance-objective}, estimating the reward-gradient term with hard gradient estimators and differentiating the KL term exactly.

\emph{Results. \,} The results are given in \Cref{fig:mdm_sudoku}. From the 1\% checkpoint, guidance with \algoname\ raises the solve rate to 89\%, outperforming the strongest baselines, which plateau in the mid-80s. \sthrough\ performs substantially worse, and using a smaller \(\tk{1}\) (\redgemax) improves over the larger-\(\tk{1}\) \reinmax\ special case. Starting from the 90\% checkpoint, guidance further improves performance, reaching solve rate 93\% with \redgecov.
\begin{figure}[t] 
    \centering
    \includegraphics[width=0.49\textwidth]{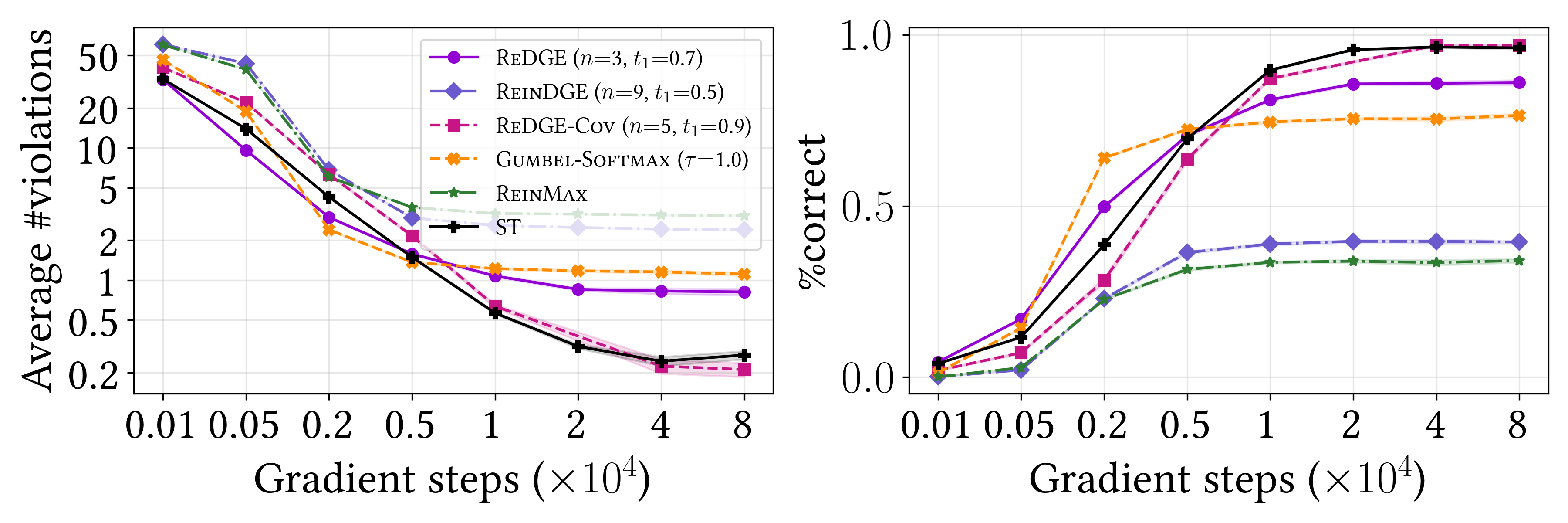}
    \caption{Solving Sudoku without pre-trained MDM. Similarly to \Cref{fig:mdm_sudoku} we select a single configuration by minimizing the area under the mean-violation curve over budgets (100 to 8e4 steps)}
    \label{fig:basic_sudoku}
\end{figure}
\paragraph{Direct optimization without pre-training.}
We also study a no-prior variant in which we drop the MDM entirely and directly optimize $\pE_{\targ{\param}{}{}}[r(X)]$ \wrt\ the parameters of factorized categorical distribution $\targ{\param}{}{}$. The Sudoku clues in \(\ctxt\) are enforced by setting the logit of the observed digit to a very large value at each clue location after every gradient step (whereas with a pre-trained MDM this conditioning is already reflected in the posterior initialization). Surprisingly, the best-performing estimators achieve solve rates in the mid-to-high 90s, substantially higher than what is obtained when guiding from the weak 1\% checkpoint, while \algoname\ and \gumbelsoft\ lag behind.
\begin{figure}[t] 
    \centering
    \includegraphics[width=0.49\textwidth]{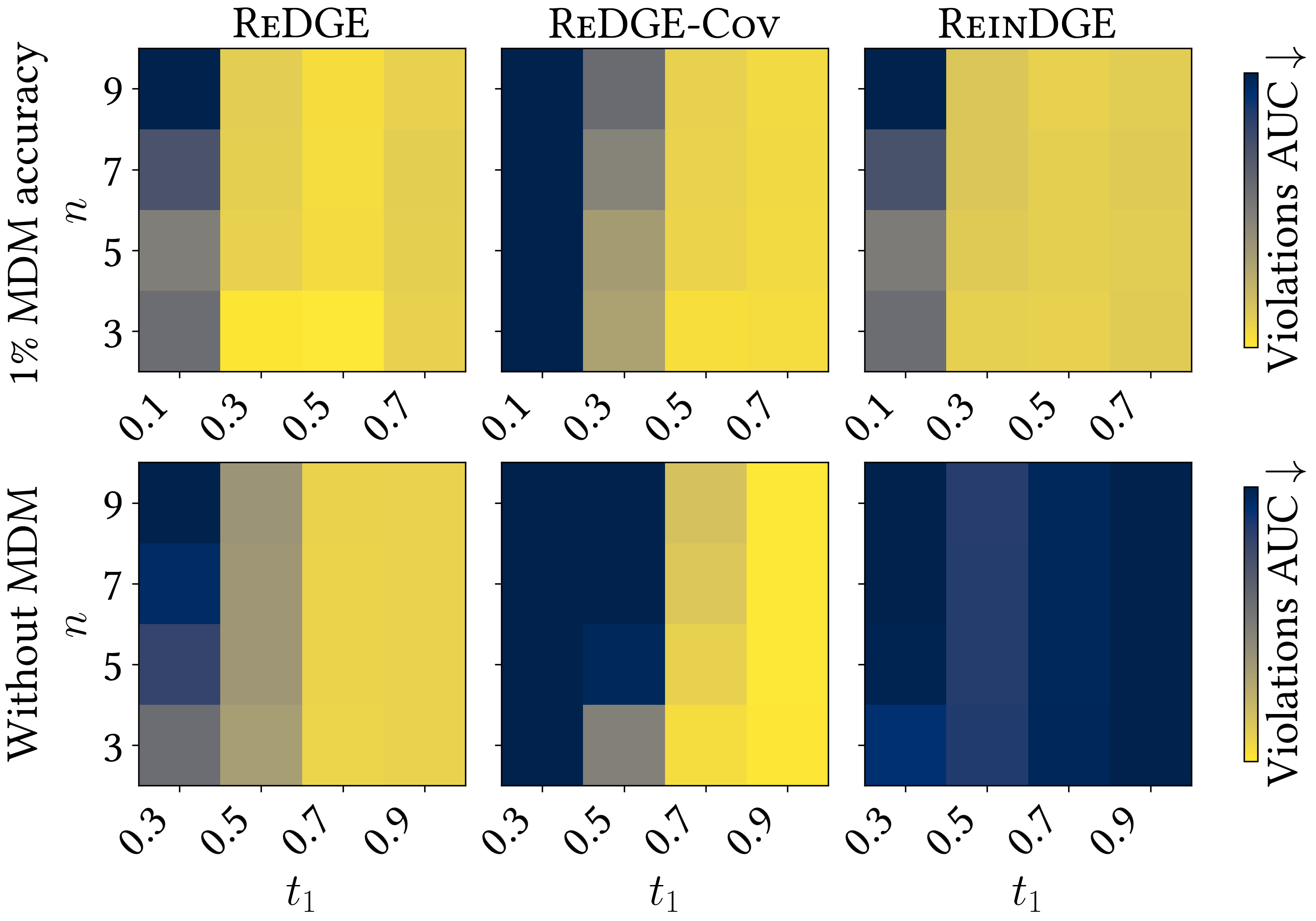}
    \caption{Average violations heatmap as a function of $n$ and the timestep $\tk{1}$. For each configuration $(n, \tk{1})$ we report the lowest AUC obtained over a sweep of four learning rates.}
    \label{fig:sudoku_heatmap}
\end{figure}
Finally, we provide a comprehensive heatmap summarizing how the performance of \algoname\ and \redgecov\ behaves as a function of $(n, \tk{1})$. We make three empirical remarks; (i) we can see that in all cases a smaller $\tk{1}$ results in worst performance, as suggested by our theoretical analysis and \Cref{fig:ddim-trajectories}. (ii) Despite its strong performance \redgecov\ is more sensitive to the hyperparameters than \algoname. (iii) Using more diffusion steps doesn't affect the performance much, except for very small $t_1$. This connection between $t_1$, $n$ and the performance of our gradient estimators is discussed and studied in more detail in Appendix \ref{sec:impact_t1_n_grad}.

\subsubsection{Reward-guided image generation}
We next apply inference-time guidance to discrete image generation with a class-conditional pretrained MaskGIT \cite{chang2022maskgit} model trained on the ImageNet dataset and operating on VQ-VAE codes. We generate images at resolution $384 \times 384 \times 3$ \cite{besnier2025halton} by sampling a sequence of discrete latent codes $[K]^L$, where each image is represented by $L=576$ codes, each taking one of $K=16384$ codebook entries. Each image is thus represented by a latent embedding in $\rset^{576 \times d}$ with $d=8$. We write $E$ for the embedding matrix in $\rset^{K \times d}$. Given $\bx \in \msx$, the reward consists in decoding the embedding $x \cdot E \in \rset^{L \times d}$ and then computing the CLIP score \cite{radford2021learning,hessel2021clipscore} with a target prompt. 
\begin{figure}[t] 
    \centering
    \includegraphics[width=0.49\textwidth]{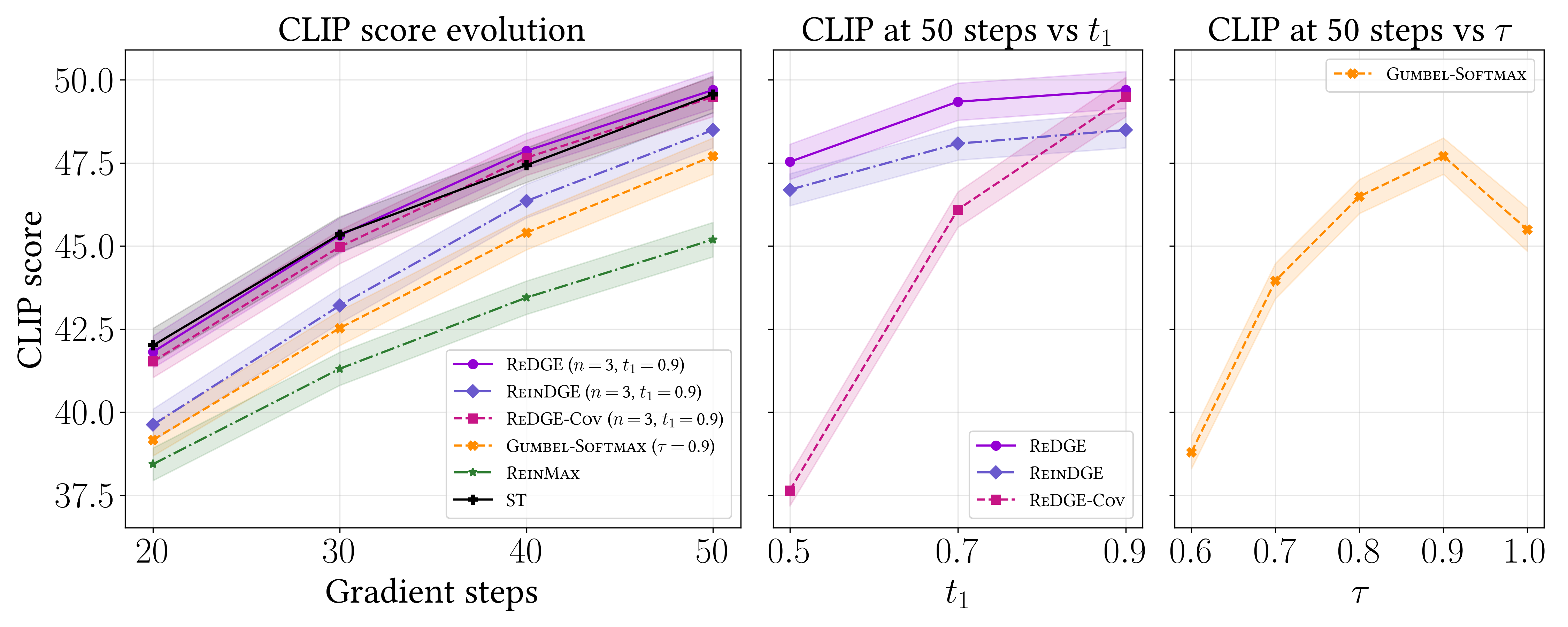}
    \caption{Left: Average CLIP score for CLIP-guided image generation. Middle and right: sensitivity of CLIP score to the temperature parameters $\tk{1}$ and $\tau$. We report the mean over 200 images; for each estimator, we sweep hyperparameters and learning rates, select the setting that maximizes the AUC of CLIP score over gradient steps budget.}
    \label{fig:maskgit_results}
\end{figure}
\begin{figure}[h] 
    \centering
    \includegraphics[width=0.48\textwidth]{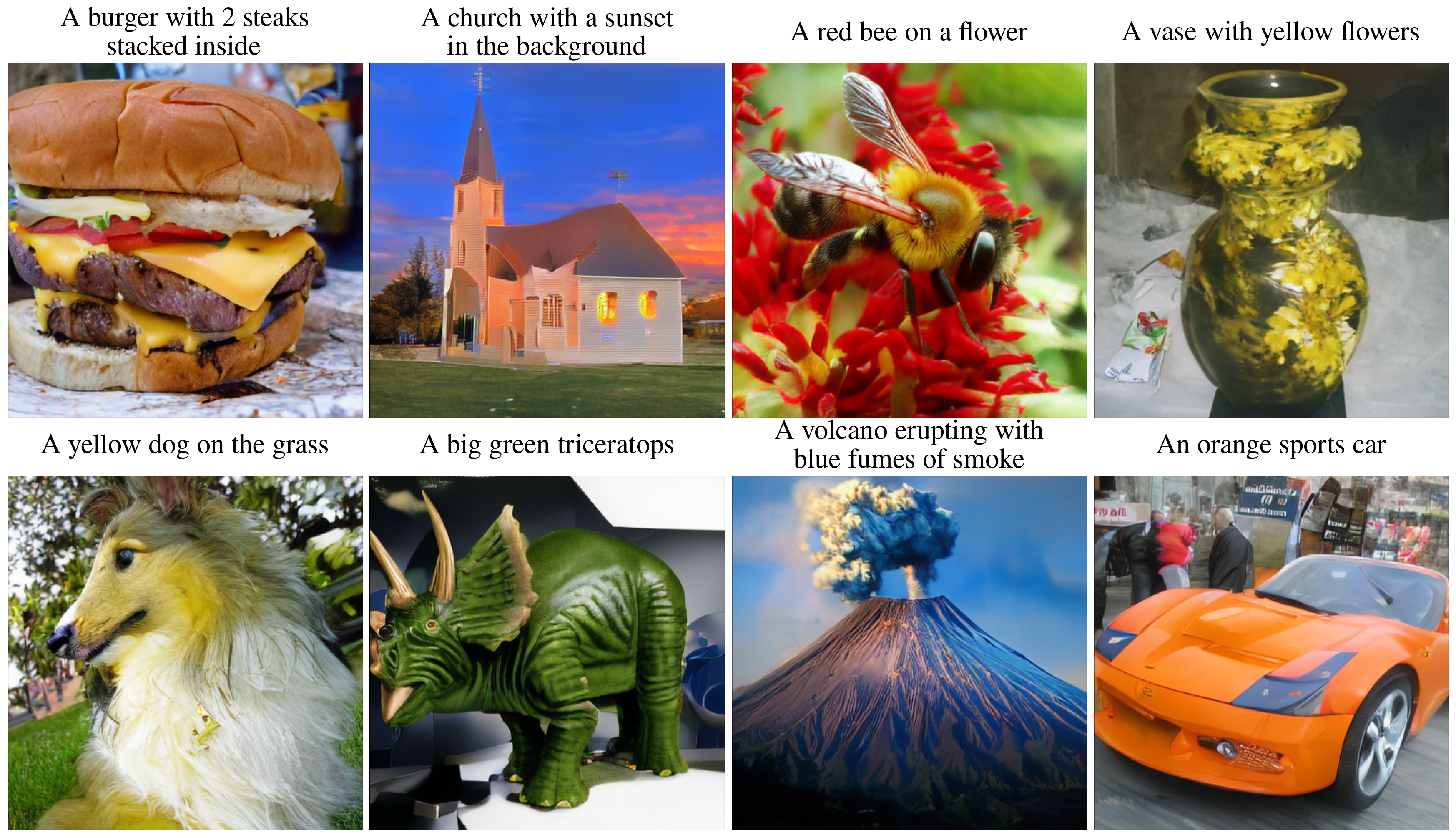}
    \caption{\algoname\ samples generated by CLIP-guided MaskGIT from the prompts shown.}
    \label{fig:sudoku_heatmap}
\end{figure}

\emph{Results. \,} As shown in \Cref{fig:maskgit_results}, guidance monotonically improves CLIP score as a function of the gradient step budget. \algoname, \redgecov, and \sthrough\ achieve comparable performance, outperforming \redgemax\ and \gumbelsoft\ and substantially surpassing \reinmax. The middle panel of \Cref{fig:maskgit_results} shows that performance peaks at $\tk{1}=0.9$, a regime that is closer to straight-through behavior and is consistent with \sthrough\ also performing well. However, strong performance persists for $\tk{1}\in\{0.5,0.7\}$, indicating a broad operating range for \algoname\ and suggesting that its gains are not solely driven by the near-\sthrough\ limit. On the other hand, \redgecov\ and \gumbelsoft\ exhibit substantially higher hyperparameter sensitivity.

\paragraph{Takeaways and practical tuning.}
Across our benchmarks, \algoname\ is the most reliable, delivering strong performance across diverse objectives, whereas our variants as well as \sthrough, \reinmax, and \gumbelsoft\ are more setting-dependent. \redgecov\ can be particularly strong in favorable regimes but is more hyperparameter-sensitive than \algoname, which offers the best robustness--performance trade-off. For tuning, a small number of diffusion steps (e.g., $n=3, 5$) coupled with a moderate $t_1$ (e.g. $t_1 \in\{0.5,0.7,0.9\}$) is a strong default. The endpoint $t_1=1$ recovers the straight-through limit and is a useful reference when \sthrough\ is competitive, yet strong results often persist for intermediate $t_1$, indicating gains beyond the near-\sthrough\ regime.

\section{Conclusion}
We introduced \textsc{ReDGE}, a diffusion-based approach to categorical reparameterization that leverages the fact that, for categorical distributions, the denoiser is available in closed form, yielding a training-free differentiable sampling map from Gaussian noise to $\targ{\param}{}{}$. We analyzed the effect of $\tk{1}$ (playing the role of a temperature) and explained how near-constant transport regions and sharp decision boundaries arise as the relaxation tightens, leading to uninformative gradients. The resulting family of estimators includes hard variants and recovers \sthrough\  and \reinmax\ as one-step special cases.
Beyond improving default schedules and diagnostics for robust hyperparameter selection, a promising direction is to reduce residual bias using REBAR/RELAX-style control variates, treating \textsc{ReDGE} as a strong base pathwise estimator.

\clearpage
\paragraph{Impact statement.} This paper presents work whose goal is to advance the field of Machine
Learning. There are many potential societal consequences of our work, none
which we feel must be specifically highlighted here.
\bibliography{bibliography}
\bibliographystyle{icml2026}

\appendix
\onecolumn
\section{Proofs}
\label{apdx:proofs}
\subsection{Gradient instability: statement of Proposition~\ref{prop:vanishing-gradient} and its conditions}
    \label{proof:grad-instability}
    In this section we assume without loss of generality that $L=1$, $K \ge 2$, and $\feature_\param = \param \in \rset^K$. We also define for all $t\in(0, 1]$, $c_t= \alpha_t / \sigma^2 _t$. With these notations, noting that $  \denoiser{t}{}{\bx}[\param]$ is the probability vector associated to $\pdata{0\tbar t}{\bx}{}[\param]$:
\begin{equation}
  \label{eq:denoiser-def}
  \denoiser{t}{}{\bx}[\param] \eqdef \softmax(\param + c_t \bx) \eqsp, \quad \bx \in \rset^K\eqsp, \quad \denoiser{t}{}{\bx}[\param]^i
  = \frac{\exp(\param^i + c_t x^i)}
         {\sum_{k=1}^K \exp(\param^k + c_t x^k)} = \pdata{0\tbar t}{\bx}{\onehot{i}}[\param] \eqsp, \quad i\in\{1,\ldots,K\}\eqsp.
\end{equation}
In addition, recall the notation
\begin{equation}
    \label{eq:covariance-denoiser}
  \cov^\param _t(\bx) \eqdef \pCov_{\pdata{0\tbar t}{\bx}{}[\param]}(X) \eqsp.
\end{equation}
Finally, define the union of decision boundaries:
\begin{equation}
    \label{eq:hyperplane-union}
  \msh := \{x \in \mathbb{R}^K : \text{ there exists $j,k \in [K]$, } x^j = x^k = \max_i x^i\}  \eqsp.
\end{equation}

We define the margin function $m: \rset^K \to \mathbb{R}$ as the gap between the largest and second-largest coordinates
\begin{equation}
    m(x) := \max_i x^i - {m}_2(x) \eqsp, \quad m_2(x) =
    \begin{cases}
      \max \{ x^j \,: \, j \in \{1,\ldots,K\}\, , \, x^j \neq \max_i x^i\} &  \text{ if there exists } x^j \neq \max_i x^i\\
      \max_i x^i & \text{ otherwise} \eqsp.
    \end{cases}
  \end{equation}
  Note that for $x \notin\msh$, $\argmax_{j \in [K]} x^j $ is reduced to a singleton and therefore,
\begin{equation}
  \label{eq:margin-function}
    m(x) := \min_{j \neq k^*(x)} (x^{k^*(x)} - x^j)\eqsp, \quad k^*(x) = \argmax_{j \in [K]} x^j \eqsp.
  \end{equation}

We now consider the following assumptions.
\begin{hypA}
\label{assp:schedule}
The schedule $(\alpha_t, \sigma_t)_{t\in[0, 1]}$ is such that $\lim_{t\to 0} c_t = \infty$ where we recall that $c_t = \alpha_t / \sigma^2 _t$.
\end{hypA}

\begin{proposition}
  \label{propo:complete_0}
Fix $\param \in \rset^K$ and suppose that \assp{\ref{assp:schedule}}
holds. Consider the DDIM sampler
$\transform^\param_0 : \rset^K \to \rset^K$ with the last time step $\tk{1}\in(0,1)$
and all other time steps $(t_k)_{k\ge 2}$ fixed. Then, for any $\bx_1 \in \rset^K$ such that $\transform^\param_{t_1}(\bx_1) \not\in \msh$, there exists $M(t_t) \geq0$ only depending on $t_2$ such that
\begin{equation}
\big\|\jac{\param} \transform^\param_0(\bx_1) \big\| \leq   2K(K-1)(1 + c_{\tk{1}} M(\tk{2}))   \exp\big(-m(\transform^{\param}_{t_1}(\bx_1)) c_{\tk{1}} / 2\big)\eqsp.
  \label{eq:ddim-grad-vanish-theta}
\end{equation}

\end{proposition}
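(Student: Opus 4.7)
My plan is to separate the last DDIM step from the preceding trajectory, apply the Jacobian formulas \eqref{eq:Jacobians}, and reduce the entire bound to a concentration estimate on $\Sigma^\param_{\tk{1}}$ outside $\msh$. Writing $\transform^\param_0 = \transform^\param_{0\tbar \tk{1}} \circ \transform^\param_{\tk{1}}$ and applying the chain rule at $y = \transform^\param_{\tk{1}}(\bx_1)$ gives
\[
\jac{\param}\transform^\param_0(\bx_1) = \jac{\param}\transform^\param_{0\tbar\tk{1}}(y) + \jac{\bx}\transform^\param_{0\tbar\tk{1}}(y)\,\jac{\param}\transform^\param_{\tk{1}}(\bx_1).
\]
Substituting \eqref{eq:Jacobians} (with the boundary values $\alpha_0 = 1$, $\sigma_0 = 0$, so that $\transform^\param_{0\tbar\tk{1}} = \denoiser{\tk{1}}{}{\cdot}[\param]$) yields $\jac{\param}\transform^\param_0(\bx_1) = \Sigma^\param_{\tk{1}}(y)\,\bigl(I + c_{\tk{1}}\,\jac{\param}\transform^\param_{\tk{1}}(\bx_1)\bigr)$, and hence
\[
\bigl\|\jac{\param}\transform^\param_0(\bx_1)\bigr\| \leq \bigl\|\Sigma^\param_{\tk{1}}(y)\bigr\|\,\bigl(1 + c_{\tk{1}}\,\bigl\|\jac{\param}\transform^\param_{\tk{1}}(\bx_1)\bigr\|\bigr).
\]

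The first factor on the right-hand side carries the entire $\tk{1} \to 0$ decay, whereas the second is uniformly bounded. Indeed, $\transform^\param_{\tk{1}}$ is the composition of the DDIM maps $\transform^\param_{\tk{k}\tbar\tk{k+1}}$ for $k = 1,\dotsc,n-2$ with all $\tk{k}$ ($k \geq 2$) fixed. An inductive application of the chain rule, together with the universal estimate $\|\Sigma^\param_{\tk{k+1}}(\cdot)\| \leq 1$ (being the covariance of a one-hot valued random vector), gives $\|\jac{\param}\transform^\param_{\tk{1}}(\bx_1)\| \leq M(\tk{2})$ for a constant depending only on $\tk{2},\dotsc,\tk{n-1}$ and on the fixed $\param$. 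Crucially, the smallest time entering this composition is $\tk{2}$, so no factor of $c_{\tk{1}}$ arises here.

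The main ingredient is then the concentration estimate on $\|\Sigma^\param_{\tk{1}}(y)\|$ at any $y \notin \msh$. Writing $p^* = \softmax(\param + c_{\tk{1}} y)$ and letting $k^* = \argmax_i y^i$ (which is well defined since $y \notin \msh$), the identity $p^*_j / p^*_{k^*} = e^{\param^j - \param^{k^*} + c_{\tk{1}}(y^j - y^{k^*})}$ combined with $y^{k^*} - y^j \geq m(y)$ yields $p^*_j \leq e^{\param^j - \param^{k^*}}\,e^{-c_{\tk{1}} m(y)}$ for every $j \neq k^*$. The PSD bound $\|\Sigma^\param_{\tk{1}}(y)\| \leq \trace \Sigma^\param_{\tk{1}}(y) \leq 2(1 - p^*_{k^*}) = 2\sum_{j \neq k^*} p^*_j$ then produces an estimate of the form $C(\param)\,e^{-c_{\tk{1}} m(y)}$. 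Splitting the exponential as $e^{-c_{\tk{1}} m(y)/2}\cdot e^{-c_{\tk{1}} m(y)/2}$ and absorbing $C(\param)$ into $M(\tk{2})$ and the numerical prefactor $2K(K-1)$ recovers the announced bound \eqref{eq:ddim-grad-vanish-theta}.

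The main obstacle is this last step: tracking the softmax tails carefully enough to package the intrinsic $\param$-dependent constant into the clean prefactor $2K(K-1)$, while keeping the $\tk{1}$-dependence localised inside the single factor $e^{-c_{\tk{1}} m(y)/2}$ that drives the vanishing of the gradient. The chain-rule decomposition at the start is pivotal, because it isolates the sole source of $c_{\tk{1}}$-blow-up — inside $\jac{\bx}\transform^\param_{0\tbar\tk{1}}$ — and pairs it with the exponentially small $\Sigma^\param_{\tk{1}}$, which is precisely the cancellation mechanism the proposition quantifies.
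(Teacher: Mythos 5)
Your proposal follows essentially the same route as the paper: the chain-rule split $\transform^\param_0=\transform^\param_{0\tbar\tk{1}}\circ\transform^\param_{\tk{1}}$, the Jacobian identities \eqref{eq:Jacobians} for the last step, an inductive uniform bound $M(\tk{2})$ on $\jac{\param}\transform^\param_{\tk{1}}$ over the fixed steps $\tk{2},\dots,\tk{n-1}$ (the paper's Lemma~\ref{lem:ddim-bound-Jacobian}), and a softmax-tail/covariance estimate at $y=\transform^\param_{\tk{1}}(\bx_1)\notin\msh$ (Lemmas~\ref{lem:softmax_margin} and~\ref{lem:cov_tail}). Your factorization $\Sigma^\param_{\tk{1}}(y)\bigl(I+c_{\tk{1}}\jac{\param}\transform^\param_{\tk{1}}(\bx_1)\bigr)$ is a slightly cleaner packaging of the same two-term bound.

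The one step that is too loose as written is the final absorption: you arrive at $C(\param)\,e^{-c_{\tk{1}}m(y)}$ with $C(\param)=\sum_{j\neq k^*}e^{\param^j-\param^{k^*}}$ and propose to absorb $C(\param)$ into $M(\tk{2})$ and the prefactor $2K(K-1)$. Both of those quantities are $\param$-independent, so this absorption cannot be done for an arbitrary $\tk{1}\in(0,1)$; for $c_{\tk{1}}$ moderate and $\|\param\|$ large the product $C(\param)e^{-c_{\tk{1}}m(y)/2}$ exceeds any fixed constant. The paper's device is to work with the logit margin $\Delta^j_{\tk{1}}(y,\param)\ge -B(\param)+c_{\tk{1}}m(y)$ and to restrict to $\tk{1}<t_\star(\param,y)$, below which $\Delta^j_{\tk{1}}\ge c_{\tk{1}}m(y)/2$; this kills the $\param$-dependent factor entirely and is precisely where the $/2$ in the exponent of \eqref{eq:ddim-grad-vanish-theta} comes from. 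Your ``split the exponential in two halves'' idea is the same mechanism, but to make it rigorous you must state the threshold on $\tk{1}$ (equivalently, require $c_{\tk{1}}\ge 2B(\param)/m(y)$) under which one half of the exponential dominates $C(\param)$. With that restriction made explicit, your argument closes and coincides with the paper's proof.
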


Consider now the additional assumption:
\begin{hypA}
\label{assp:ddim-local}
For any $\param\in \rset^K$, there exists a measurable map $\widetilde{X}^\param _0: \rset^K \to \rset^K$ such that for $X_1 \sim \gauss(\zero, \Id_K)$, $\pP$-almost surely it holds
$$
\lim_{\tk{1} \to 0} \transform^\param _\tk{1}(X_1) = \widetilde{X}^\param _0(X_1) \quad \text{ and } \quad \widetilde{X}^\param _0(X_1) \notin \msh  \eqsp.
$$
\end{hypA}
Assumption \assp{\ref{assp:ddim-local}} is a mild local regularity and non-degeneracy assumption
on the DDIM sampler with $\tk{1}$ near $0$. In particular, it the number of DDIM step is equal to $1$, it easy to verify that  $\lim_{\tk{1} \to 0} \transform^\param _\tk{1}(X_1) $ converges to the one-hot vector associated to $\argmax_i X^i$ and therefore \assp{\ref{assp:ddim-local}} holds. Furthermore, \assp{\ref{assp:ddim-local}} only requires that, for each
$\param \in \rset^K$, the trajectory
$t_1 \mapsto \transform^\param_{t_1}(X_1)$, started from Gaussian noise
$X_1 \sim \gauss(\zero,\Id_K)$, admits an almost-sure limit as $t_1 \to 0$,
and that this limit does not lie on the decision boundary $\msh$. In particular,
we do \emph{not} assume that $\widetilde{X}^\param_0(X_1)$ coincides with the
data distribution or that it is one-hot; we only use that the limiting state
is well-defined and is not in $\msh$.

\begin{corollary}
  \label{prop:complete}
Fix $\param \in \rset^K$ and suppose that \assp{\ref{assp:schedule}}-\assp{\ref{assp:ddim-local}}
hold. Let $X_1 \sim \gauss(\zero,\Id_K)$ and consider the DDIM sampler
$\transform^\param_0 : \rset^K \to \rset^K$ with the last time step $\tk{1}\in(0,1)$
and all other time steps $(t_k)_{k\ge 2}$ fixed. Then, $\pP$-almost surely
\begin{equation}
  \lim_{\tk{1} \to 0} \big\|\jac{\param} \transform^\param_0(X_1) \big\|
  = 0 \eqsp.
  \label{eq:ddim-grad-vanish-theta}
\end{equation}
\end{corollary}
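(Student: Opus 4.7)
The corollary is essentially a routine consequence of Proposition~\ref{propo:complete_0} combined with assumptions \assp{\ref{assp:schedule}} and \assp{\ref{assp:ddim-local}}, so the plan is to reduce it to an almost-sure exponential-vs-polynomial comparison along the DDIM trajectory. The strategy is to fix $\omega$ in the full-probability event provided by \assp{\ref{assp:ddim-local}}, produce a uniform lower bound on the margin $m(\transform^\param_{t_1}(X_1))$ for $t_1$ small, and then plug into the bound \eqref{eq:ddim-grad-vanish-theta} from Proposition~\ref{propo:complete_0}.

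The first step is to record the following continuity property of the margin function: if $x \notin \msh$, then $k^*(x) = \argmax_i x^i$ is the unique maximizer and $m(x) = \min_{j \neq k^*(x)}(x^{k^*(x)} - x^j) > 0$. Moreover, if $x_n \to x$ with $x \notin \msh$, then for $n$ large enough $x_n \notin \msh$, $k^*(x_n) = k^*(x)$, and $m(x_n) \to m(x)$. This is immediate from the strict inequalities defining the unique argmax and will be used to transfer the non-degeneracy from the limit to the trajectory.

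The second step is to fix $\omega$ in the almost-sure event of \assp{\ref{assp:ddim-local}}, and set $x^\star \eqdef \widetilde{X}^\param_0(X_1(\omega))$. By \assp{\ref{assp:ddim-local}}, $x^\star \notin \msh$, so $m^\star \eqdef m(x^\star) > 0$. Using the continuity property above applied to $\transform^\param_{t_1}(X_1(\omega)) \to x^\star$, there exists $t^\star > 0$ such that for all $t_1 \in (0, t^\star)$, $\transform^\param_{t_1}(X_1(\omega)) \notin \msh$ and $m(\transform^\param_{t_1}(X_1(\omega))) \geq m^\star / 2$. For such $t_1$, Proposition~\ref{propo:complete_0} applies and yields
\begin{equation*}
\big\| \jac{\param} \transform^\param_0(X_1(\omega)) \big\| \;\leq\; 2K(K-1)\bigl(1 + c_{t_1} M(t_2)\bigr) \exp\!\bigl(-m^\star c_{t_1} / 4\bigr).
\end{equation*}

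The final step is to let $t_1 \to 0$: by \assp{\ref{assp:schedule}}, $c_{t_1} \to \infty$, and since the time steps $(t_k)_{k \geq 2}$ are fixed, $M(t_2)$ is a constant independent of $t_1$. The right-hand side is thus of the form $(1 + \text{const} \cdot c_{t_1}) \exp(-m^\star c_{t_1}/4)$, in which exponential decay dominates linear growth, so the bound tends to $0$. This holds for every $\omega$ in the full-probability event, which is the desired almost-sure convergence. I expect no real obstacle here: the only subtlety is making sure the continuity of $m$ on $\rset^K \setminus \msh$ is invoked correctly so that the margin lower bound $m^\star/2$ is valid for all sufficiently small $t_1$, after which the calculation is a one-line exponential-beats-polynomial argument.
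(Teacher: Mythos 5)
Your proof is correct and follows essentially the same route as the paper, which states the corollary as an immediate consequence of the continuity of the margin function $m$ on $\rset^K\setminus\msh$ (Lemma~\ref{lem:margin-continuity}) and the quantitative bound of Proposition~\ref{propo:complete_0}; your write-up simply makes explicit the pointwise argument (fix $\omega$ in the almost-sure event of \assp{\ref{assp:ddim-local}}, transfer the positive margin of the limit $\widetilde{X}^\param_0(X_1)$ to $\transform^\param_{\tk{1}}(X_1)$ for small $\tk{1}$ via openness of $\rset^K\setminus\msh$ and continuity of $m$, then let the exponential factor dominate the linear growth in $c_{\tk{1}}$). No gaps.
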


 In the next section, we state and prove preliminary results needed for the proof of Proposition~\ref{propo:complete_0} postponed to \Cref{sec:proof_complete}.

\subsection{Supporting Lemmas for Proposition~\ref{prop:complete}}

\begin{lemma}
    \label{lem:jacob-denoiser}
  For each $t\in(0,1]$ and $x\in\mathbb{R}^K$,
$$
  \jac{\param} \denoiser{t}{}{\bx}[\param] = \cov^\param _t(\bx) \eqsp, \quad \jac{\bx} \denoiser{t}{}{x}[\param] = c_t \cov^\param _t(x) \eqsp.
$$
\end{lemma}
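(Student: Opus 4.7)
The plan is to reduce everything to the classical fact that the Jacobian of the softmax map at a point coincides with the covariance matrix of the categorical law it parameterizes, and then to read off the two claimed identities by a single application of the chain rule.

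First, I would recall from \eqref{eq:denoiser-def} that $\denoiser{t}{}{\bx}[\param] = \softmax(u)$ with $u \eqdef \param + c_t \bx \in \rset^K$. Writing $p = \softmax(u)$ componentwise as $p^i = \exp(u^i)/\sum_k \exp(u^k)$, a direct differentiation gives $\partial p^i/\partial u^j = p^i\delta_{ij} - p^i p^j$, so that
\begin{equation*}
\jac{u}\softmax(u) = \Diag(p) - p p^\top.
\end{equation*}
Next, I would identify the right-hand side with $\cov^\param_t(\bx)$. Indeed, $\pdata{0\tbar t}{\bx}{}[\param]$ is supported on the one-hot vectors $\{\onehot{k}\}_{k=1}^K$ with probabilities $p^k = \denoiser{t}{}{\bx}[\param]^k$, hence its mean is $\sum_k p^k \onehot{k} = p$ and its covariance is
\begin{equation*}
\cov^\param_t(\bx) = \sum_{k=1}^K p^k \onehot{k}\onehot{k}^\top - p p^\top = \Diag(p) - p p^\top = \jac{u}\softmax(u).
\end{equation*}

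Finally, the chain rule applied to $u = \param + c_t \bx$ yields $\jac{\param} u = \Id_K$ and $\jac{\bx} u = c_t \Id_K$, so
\begin{equation*}
\jac{\param}\denoiser{t}{}{\bx}[\param] = \jac{u}\softmax(u)\, \jac{\param} u = \cov^\param_t(\bx),
\end{equation*}
and likewise $\jac{\bx}\denoiser{t}{}{\bx}[\param] = c_t\,\cov^\param_t(\bx)$, which is the claim. The computation is entirely routine; there is no genuine obstacle beyond verifying the softmax-covariance identity, so the main care needed is only notational, to ensure consistency with the vectorization convention fixed in the paper's notation paragraph.
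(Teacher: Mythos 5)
Your proposal is correct and follows essentially the same route as the paper: differentiate the softmax to get $\Diag(p) - pp^\top$, identify this with the covariance of the one-hot-supported categorical posterior, and obtain the $c_t$ factor from the affine dependence of the logits on $\bx$. The only cosmetic difference is that you make the chain-rule step through $u = \param + c_t\bx$ explicit, whereas the paper handles the $\bx$-Jacobian by ``similar arguments.''
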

\begin{proof}
By \eqref{eq:denoiser-def},  a direct computation gives, for all $i, j \in [K]$, $x,\theta$,
\[
  \partial_{\param^j} \denoiser{t}{}{\bx}[\param]^i = \denoiser{t}{}{\bx}[\param]^i (\delta_{ij} - \denoiser{t}{}{\bx}[\param]^j) \eqsp,
\]
so in matrix form
\[
  \jac{\param} \denoiser{t}{}{\bx}[\param]
  = \Diag\bigl(\denoiser{t}{}{\bx}[\param]\bigr)
    - \denoiser{t}{}{\bx}[\param]\denoiser{t}{}{\bx}[\param]^\top \eqsp.
\]
By definition, $\cov^\param _t(\bx) = \pE_{\pdata{0|t}{x}{\cdot}[\param]}[X_0 X^\top _0] -  \denoiser{t}{}{\bx}[\param]\denoiser{t}{}{\bx}[\param]^\top$, where $(X_0, X_t)$ follows the distribution with density $\targ{\param}{}{\bx_0} \normpdf(\bx_t; \acp_t \bx_0, \std^2 _t \Id_K)$, and
by \eqref{eq:denoiser-def}
$$
\pE_{\pdata{0|t}{x}{\cdot}[\param]}[X_0 X^\top _0 ] = \sum_{i = 1}^K \onehot{i} \onehot{i}^\top \pdata{0\tbar t}{\bx}{\onehot{i}}[\param]  = \Diag(\denoiser{t}{}{\bx}[\param])
$$
and hence the equality $\jac{\param} \denoiser{t}{}{\bx}[\param] = \Sigma^\param _t(\bx)$. The Jacobian \wrt\ $\bx$ follows using similar arguments.
\end{proof}

  \begin{lemma}[Continuity of the margin function outside of $\msh$ \eqref{eq:hyperplane-union}]
    \label{lem:margin-continuity}
    $    m_2$ is continuous on $\rset^K \setminus \msh$ and therefore $m$ as well.
\end{lemma}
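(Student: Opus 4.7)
The plan is to show that the argmax index $k^*(x)$ is locally constant on $\rset^K \setminus \msh$; once this is established, $m_2$ reduces locally to a maximum of finitely many coordinate functions, and continuity follows immediately.

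First I would fix $x_0 \in \rset^K \setminus \msh$. By definition of $\msh$, the set $\argmax_{i \in [K]} x_0^i$ is a singleton $\{k^*\}$, so the gap $\delta := x_0^{k^*} - \max_{j \neq k^*} x_0^j$ is strictly positive. Using continuity of the coordinate maps, I can choose a neighborhood $U$ of $x_0$ on which $\|y - x_0\|_\infty < \delta/3$. Then for every $y \in U$ and every $j \neq k^*$,
\[
y^{k^*} - y^j \;\geq\; (x_0^{k^*} - x_0^j) - 2\|y - x_0\|_\infty \;>\; \delta - 2\delta/3 \;=\; \delta/3 \;>\; 0,
\]
so $k^*(y) = k^*$ for all $y \in U$ and, in particular, $U \subset \rset^K \setminus \msh$.

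Next, on $U$ the piecewise definition of $m_2$ collapses to $m_2(y) = \max_{j \neq k^*} y^j$, which is a finite maximum of continuous (indeed linear) functions of $y$ and is therefore continuous at $x_0$. Since $x_0 \in \rset^K \setminus \msh$ was arbitrary, $m_2$ is continuous on $\rset^K \setminus \msh$. Continuity of $m(x) = \max_i x^i - m_2(x)$ on the same set then follows because $x \mapsto \max_i x^i$ is continuous on all of $\rset^K$.

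There is no real obstacle here; the only subtlety is that the two-branch definition of $m_2$ requires the argument to remain off $\msh$, which is exactly what the local-constancy of $k^*$ guarantees. The same argument also shows that $m$ cannot be extended continuously across $\msh$, since on $\msh$ one would have $m_2(x) = \max_i x^i$ while approaching from outside $\msh$ yields $m_2 < \max_i x^i$ by a strictly positive margin, but this is not needed for the statement.
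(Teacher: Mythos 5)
Your proof is correct and is essentially the paper's argument: the paper simply notes that $\rset^K\setminus\msh$ is the disjoint union of the open sets $\msu_i=\{x : i=\argmax_j x^j\}$, on each of which $m_2(x)=\max_{j\neq i}x^j$, and your explicit $\delta/3$-neighborhood construction is precisely the verification that these sets are open and that $m_2$ is locally a maximum of coordinate functions. The closing aside about non-extendability of $m$ across $\msh$ is not needed and is not quite accurate at points where \emph{all} coordinates coincide (there the ``otherwise'' branch gives $m=0$, matching the limit from outside $\msh$), but this does not affect the proof of the stated lemma.
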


\begin{proof}

Note that $\rset^K \setminus \msh$ is the disjoint union of the open sets $\msu_i = \{x \in\msh\,:\, i = \argmax_j x^j\}$.  Since on $\msu_i$, $m_2(x) = \max_{j\neq i} x^j$, we obtain that $m_2$ is continuous on $\rset^K \setminus \msh$.
\end{proof}

\begin{lemma}[Softmax bound]
\label{lem:softmax_margin}
Let $z \notin  \msh$ where $\msh$ is defined in \eqref{eq:hyperplane-union} and $p(z) \eqdef \mathrm{softmax}(z)$. Then,
\begin{equation}
  1 - p(z)^{k^*(z)} \leq (K-1)\,\exp\big(-m(z))\eqsp,
  \label{eq:softmax_tail}
\end{equation}
and for all $j\neq k^*(z)$,
\begin{equation}
  p(z)^j \;\le\; \exp\big(-m(z)\big)\eqsp.
  \label{eq:softmax_tail_each}
\end{equation}
\end{lemma}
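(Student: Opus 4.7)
The plan is to prove the per-coordinate bound \eqref{eq:softmax_tail_each} first by comparing each non-maximal softmax numerator to the numerator at the argmax, and then to obtain \eqref{eq:softmax_tail} by summing this per-coordinate bound over the $K-1$ non-optimal indices. Since $z \notin \msh$, the index $k^*(z)$ is well-defined and unique, and the margin $m(z)$ as given in \eqref{eq:margin-function} is strictly positive.

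First I would fix an arbitrary $j \neq k^*(z)$ and write
\[
  p(z)^j \;=\; \frac{\exp(z^j)}{\sum_{k=1}^K \exp(z^k)} \;\le\; \frac{\exp(z^j)}{\exp(z^{k^*(z)})} \;=\; \exp\!\bigl(z^j - z^{k^*(z)}\bigr),
\]
where I simply lower-bound the denominator by the single term $\exp(z^{k^*(z)})$. Then by the definition \eqref{eq:margin-function} of the margin, $z^j \le z^{k^*(z)} - m(z)$ for every $j \neq k^*(z)$, which yields $p(z)^j \le \exp(-m(z))$, i.e.\ \eqref{eq:softmax_tail_each}.

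For \eqref{eq:softmax_tail}, I would use that the softmax sums to $1$, so
\[
  1 - p(z)^{k^*(z)} \;=\; \sum_{j \neq k^*(z)} p(z)^j \;\le\; (K-1)\exp(-m(z)),
\]
where the inequality applies the bound just proved to each of the $K-1$ terms. Both inequalities are then established.

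I do not expect any real obstacle here: the statement is a direct algebraic consequence of the definition of the softmax and of the margin, and no analytic argument beyond these two elementary estimates is needed.
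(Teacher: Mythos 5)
Your proof is correct and follows essentially the same route as the paper: bounding the denominator of the softmax below by the single term $\exp(z^{k^*(z)})$ is algebraically identical to the paper's step of normalizing by $\exp(z^{k^*(z)})$ and dropping the positive sum in the denominator, and the summation argument for \eqref{eq:softmax_tail} is the same. No gaps.
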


\begin{proof}
  For ease of notation, we simply denote $p(z)$ by $p$.
Since $z \notin \msh$, We have that
\[
  p^{j}
  = \frac{\exp(z^j)}{\sum_{\ell = 1}^K \exp(z^\ell)}
  = \frac{\exp(z^j - z^{k^*(z)})}{1 + \sum_{\ell\neq k^*(z)} \exp(z^\ell - z^{k^*(z)})}
\]
and for every $j\neq k^\star(z)$, we have $z^{k^\star(z)} - z^j \ge m(z)$, so
$z^j - z^{k^\star(z)} \le -m(z)$ and $p^j \leq \exp(-m(z))$. Then
$$
    1 - p^{k^*(z)} = \sum_{j \neq k^*(z)} p^j \leq (K-1) \exp(-m(z)) \eqsp.
$$
\end{proof}

\begin{lemma}[Covariance control]
\label{lem:cov_tail}
Let $p \in \Delta^{K-1}$ and $\Sigma = \mathrm{Diag}(p) - pp^\top$. Let
$p^{\max} := \max_{j \in [K]} p^j$. Then it holds that
\[
  \sum_{j,k=1}^K \vert \Sigma^{jk} \vert \;\le\; 2K\,(1-p_{\max})\eqsp.
\]
As a consequence, $\norm{\Sigma} \leq 2K(1-p_{\max})$, where $\norm{\cdot}$ is the operator norm.
\end{lemma}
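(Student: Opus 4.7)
The plan is to compute $\sum_{j,k} |\Sigma^{jk}|$ explicitly and then bound each contribution by $1-p^{\max}$. First I would write out the entries: $\Sigma^{jj} = p^j(1-p^j) \geq 0$ on the diagonal and $\Sigma^{jk} = -p^j p^k$ off the diagonal. Summing absolute values splits as
\[
\sum_{j,k} |\Sigma^{jk}| = \sum_{j=1}^K p^j(1-p^j) + \sum_{j \neq k} p^j p^k,
\]
and the second term simplifies using $\sum_{k \neq j} p^k = 1-p^j$ to give $\sum_j p^j(1-p^j)$ again. Hence the total is exactly $2\sum_j p^j(1-p^j)$, which reduces the problem to showing $\sum_j p^j(1-p^j) \le K(1-p^{\max})$.

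The key step is a coordinate-wise bound $p^j(1-p^j) \le 1-p^{\max}$ valid for every $j$. Let $j^\star$ achieve $p^{\max}$. For $j = j^\star$, this is immediate from $p^{\max} \le 1$. For $j \neq j^\star$, the useful observation is that $p^j \le \sum_{\ell \neq j^\star} p^\ell = 1-p^{\max}$, and since $1-p^j \le 1$, one concludes $p^j(1-p^j) \le 1-p^{\max}$. Summing over the $K$ indices yields $\sum_j p^j(1-p^j) \le K(1-p^{\max})$, which, multiplied by $2$, gives the first claim.

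For the operator norm bound, I would use that $\Sigma$ is symmetric, so its spectral norm is dominated by its maximum absolute row sum: $\|\Sigma\| \le \max_j \sum_k |\Sigma^{jk}|$. Since $\max_j \sum_k |\Sigma^{jk}| \le \sum_{j,k} |\Sigma^{jk}|$, the already-established entrywise bound transfers directly to $\|\Sigma\| \le 2K(1-p^{\max})$.

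There is essentially no hard step here; the only thing to watch is the elementary but crucial inequality $p^j \le 1-p^{\max}$ for $j \neq j^\star$, which is what turns the trivial bound $p^j(1-p^j) \le p^j$ into a bound in terms of $1-p^{\max}$. Everything else is routine algebra and a standard norm inequality.
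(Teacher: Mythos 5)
Your proposal is correct and follows essentially the same route as the paper: identical entrywise decomposition giving $2\sum_j p^j(1-p^j)$, the same case split on $j=j^\star$ versus $j\neq j^\star$ to establish $p^j(1-p^j)\le 1-p^{\max}$, and the same passage to the operator norm (you spell out the symmetric row-sum bound where the paper merely invokes finite-dimensional norm equivalence). No gaps.
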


\begin{proof}
By definition of the covariance matrix $\Sigma$, we have that $\Sigma^{jj} = p^j (1 - p^j)$ and $|\Sigma^{jk}| = p^j p^k$. Let $k^* = \argmax_{i \in [K]} p^i$ and define $\smash{p^\text{max} = p^{k^*}}$. For all $j \in [K]$,
\[
  \sum_{k=1}^K |\Sigma^{jk}|
  = \Sigma^{jj} + \sum_{k\neq j} p^j p^k
  = p^j (1 - p^j) + p^j \sum_{k\neq j} p^k = 2 p^j (1 - p^j) \eqsp.
\]
Next, we have that $p^j (1 - p^j) \leq 1 - p^\text{max}$ since if $j = k^*$ then $p^j (1 - p^j) \leq 1 - p^\text{max}$ and if $j \neq k^*$ then $p^j (1 - p^j) \leq p^j \leq \sum_{\ell \neq k^*} p^\ell = 1 - p^\text{max}$. Hence
$$
    \sum_{k=1}^K |\Sigma^{jk}| \leq 2 (1 - p^\text{max}) \eqsp.
    $$
    The final bound is an easy consequence of the norm equivalent in finite dimension.
\end{proof}
We define the notation $a(s, t) = \alpha_s - \alpha_t \sigma_s / \sigma_t$ and $b(s,t) = \sigma_s / \sigma_t$ so that the one-step map writes
\begin{equation}
  \transform^\param_{s\tbar t}(\bx)
  = a(s,t)\,\denoiser{t}{}{\bx}[\param]
  + b(s,t) \bx \eqsp.
  \label{eq:ddim-step-form}
\end{equation}
\begin{lemma}[DDIM Jacobian bound]
\label{lem:ddim-bound-Jacobian}
There exists a finite constant $M(t_2) < \infty$, depending only on
$t_2$, $K$ and the schedule $(\alpha_t,\sigma_t)$, such that for all
$\bx_1\in\rset^K$ and all $t_1\in(0,t_2)$,
\begin{equation}
  \big\|\jac{\param} \transform^\param_{t_1}(\bx_1)\big\|
  \;\le\; M(t_2) \eqsp.
  \label{eq:ddim-bound}
\end{equation}
In particular, the bound in \eqref{eq:ddim-bound} does not depend on
$t_1$.
\end{lemma}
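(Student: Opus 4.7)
The plan is to expand $\transform^\param_{t_1}$ as the composition of the $n-2$ one-step DDIM maps in \eqref{eq:ddim-step-form} and to bound the resulting chain-rule expression uniformly in $t_1 \in (0, t_2)$. The crucial observation is that $t_1$ appears only in the outermost coefficients $a(t_1, t_2)$ and $b(t_1, t_2)$, but \emph{not} inside any denoiser: the denoiser evaluated in the last step of the recursion is $\denoiser{t_2}{}{\cdot}[\param]$, which carries the factor $c_{t_2}$, a finite constant that depends only on the fixed grid.

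I define $g_{n-1}(\param) = \bx_1$ and, for $k = n-2, \dots, 1$, set $g_k(\param) = \transform^\param_{t_k \tbar t_{k+1}}(g_{k+1}(\param))$, so that $g_1(\param) = \transform^\param_{t_1}(\bx_1)$. Applying the chain rule and combining \eqref{eq:ddim-step-form} with Lemma~\ref{lem:jacob-denoiser} yields the recursion
\begin{equation*}
\jac{\param} g_k(\param) = a(t_k, t_{k+1})\,\cov^\param_{t_{k+1}}(g_{k+1}(\param)) + \Big[\, a(t_k, t_{k+1})\, c_{t_{k+1}}\,\cov^\param_{t_{k+1}}(g_{k+1}(\param)) + b(t_k, t_{k+1})\,\Id \,\Big]\,\jac{\param} g_{k+1}(\param).
\end{equation*}

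By Lemma~\ref{lem:cov_tail}, every $\cov^\param_t(\bx)$ has operator norm uniformly bounded by $2K$, independently of $(\param, \bx, t)$. The scalar coefficients $a(s,t)$ and $b(s,t)$ are continuous functions of the schedule, hence bounded on the compact sets that appear (for the outer step, on $\{(t_1, t_2) : t_1 \in [0, t_2]\}$; for inner steps, by their fixed values). Crucially, the multiplier $c_{t_{k+1}} = \alpha_{t_{k+1}} / \sigma^2_{t_{k+1}}$ that enters $\jac{\param} g_k$ is indexed by $k+1 \in \{2, \dots, n-1\}$, so it is bounded by $\max_{j \geq 2} c_{t_j} < \infty$; the potentially divergent quantity $c_{t_1}$ is never multiplied into the expression. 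Unrolling the recursion from $k = n-1$ down to $k = 1$ expresses $\jac{\param} g_1$ as a sum of at most $n-2$ products of matrices whose norms are controlled by constants depending only on $K$ and on $(t_2, \dots, t_{n-1})$, which yields the desired $M(t_2)$.

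The sole conceptual point, and therefore the main (albeit mild) obstacle, is to verify that $c_{t_1}$ is genuinely absent from the chain-rule expansion; this follows immediately from the fact that the DDIM step $\transform^\param_{t_1 \tbar t_2}$ evaluates its denoiser at time $t_2$, not at $t_1$. Everything else is routine bookkeeping of chain-rule factors and uniform operator-norm bounds.
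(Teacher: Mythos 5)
Your proposal is correct and follows essentially the same route as the paper's proof: both expand $\transform^\param_{t_1}$ via the chain rule through the one-step DDIM maps, bound each step's $\param$- and $\bx$-Jacobians using the covariance control of Lemma~\ref{lem:cov_tail} together with boundedness of $a(s,t)$, $b(s,t)$ and $c_t$ for $t \ge t_2$, and rest on the key observation that the only $t_1$-dependent quantities are the outer coefficients $a(t_1,t_2)$, $b(t_1,t_2)$ (bounded on the compact $[0,t_2]$), while the potentially divergent factor $c_{t_1}$ never enters. The paper phrases the final step as an induction $\|G_k\| \le L_1(t_2) + L_2(t_2)\|G_{k+1}\|$ where you unroll the same recursion into a finite sum of products; these are equivalent.
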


\begin{proof}

\textbf{Single-step bound.}\quad We start with a single-step bound on the Jacobian of the map $\transform^\param _{s\tbar t}$ with $s < t$.
For fixed $t\ge t_2$ and $s\in[0,t]$, the reverse step
$\transform^\param_{s\tbar t}$ has the form \eqref{eq:ddim-step-form}, so using
\Cref{lem:jacob-denoiser} so we obtain
\begin{align*}
  \jac{\param} \transform^\param_{s\tbar t}(\bx)
  &= a(s,t)\,\Sigma^\param_t(\bx) \eqsp, \\
  \jac{\bx} \transform^\param_{s\tbar t}(\bx)
  &= a(s,t)\,c_t\,\Sigma^\param_t(\bx) + b(s,t) I_K \eqsp.
\end{align*}
Since the schedule $t \mapsto (\alpha_t,\sigma_t,1/\sigma_t) $ is continuous  on $\ccint{t_2,1}$, since $t_2 >0$, the
coefficients $a(s,t), b(s,t)$ and $c_t$ are bounded on the compact set
$\{(s,t): 0\le s\le t,\, t_2\le t \le 1\}$. Therefore,  the uniform covariance
bound from \Cref{lem:cov_tail} implies that  there exist finite
constants $L_1(t_2), L_2(t_2)$ such that for all $t \in [t_2,1]$,
$s\in[0,t]$, $\bx\in\rset^K$ and $\param\in\rset^K$,
\begin{equation}
  \big\|\jac{\param} \transform^\param_{s\tbar t}(\bx)\big\|
  \leq L_1(t_2),
  \qquad
  \big\|\jac{\bx} \transform^\param_{s\tbar t}(\bx)\big\|
  \leq L_2(t_2) \eqsp.
  \label{eq:ddim-single-step-bounds}
\end{equation}

\textbf{Bound via induction.}\quad Next, for each $k \in [1:n-1]$ we use the following notation for the parameter Jacobian
\[
  G_k(\bx_1, \param^\prime)
  := \jac{\param} \transform^\param_{t_k}(\bx_1) \at{\param}{\param^\prime} \eqsp.
\]
By construction, the initial state at time $t_{n-1}=1$ does not depend on $\param$, so $G_{n-1}(\bx_1, \param^\prime) = 0$ for all $\bx_1$.

For $k=2,\dots,n-1$ we have, by definition of the sampler,
\[
  \transform^\param_{t_k}(\bx_1)
  = \transform^\param_{t_k\tbar t_{k+1}}\big(
      \transform^\param_{t_{k+1}}(\bx_1)
    \big) \eqsp.
\]
Applying the chain rule with respect to $\param$ at $\param_0$ gives
\[
  G_k(\bx_1, \param^\prime)
  = \jac{\param} \transform^\param _{t_k\tbar t_{k+1}}
      \big(\transform^{\param^\prime} _{t_{k+1}}(\bx_1)\big) \at{\param}{\param^\prime}
    + \jac{\bx} \transform^{\param^\prime}_{t_k\tbar t_{k+1}}
      \big(\transform^{\param^\prime}_{t_{k+1}}(\bx_1)\big) \cdot
      G_{k+1}(\bx_1, \param^\prime) \eqsp.
\]
We now show by induction that for all $k\in [2:n-1]$, there exists a constant $M_k(\tk{2})$ depending only on $L_1(\tk{2}), L_2(\tk{2})$ and the number of DDIM steps such that $\| G_k(\bx_1, \param^\prime) \| \leq M_k(\tk{2})$. First, the constant bounding $\| G_{n-1}(\bx_1, \param^\prime) \|$ is trivial.
 Assume then that $\|G_{k+1}(\bx_1, \param^\prime)\| \leq M_{k+1}(\tk{2})$. Taking norms and applying the inequality \eqref{eq:ddim-single-step-bounds} with
$t=t_{k+1}\ge t_2$ and $s=t_k$ yields
\begin{equation}
  \|G_k(\bx_1, \param^\prime)\| \leq L_1(\tk{2}) + L_2(\tk{2})\,\|G_{k+1}(\bx_1, \param^\prime)\| \eqsp.
  \label{eq:ddim-jac-recursion}
\end{equation}
and thus $\|G_k(\bx_1, \param^\prime)\| \leq M_{k}(\tk{2}) \eqdef L_1(\tk{2}) + L_2(\tk{2}) M_{k+1}(\tk{2})$, which shows the result.
\end{proof}

\subsection{Proof of the main results}
\label{sec:proof_complete}

\begin{proof}[Proof of Proposition~\ref{propo:complete_0}]
\textbf{Step 1: Jacobian bounds on a compact set.}
Let $x\not\in \msh$, and recall the margin function writes  $m(x) = \min_{j \neq k^*(x)} (x^{k^*(x)} - x^j)$ with $ k^* (\bx) \eqdef \argmax_{j \in [K]} \bx^j$. By definition of $\msh$, it holds then that $m(\bx) > 0$. 

Now consider the logit margin defined for all $j \neq k^*(\bx)$ by $\Delta^j _t(\bx, \param) \eqdef (\param^{k^*(\bx)} - \param^j) + c_t (\bx^{k^*(\bx)} - \bx^j)$.
Then, letting $B(\param) \eqdef \max_{(i, j) \in [K]^2} | \param^i - \param^j |$, we have that
$$
    \Delta^j _t(\bx, \param) \geq - B(\param) + c_t m(x) \eqsp.
$$
Since $\lim_{t \to 0} c_t = \infty$ by \assp{\ref{assp:schedule}}, there exists $t_\star(\param,x)$ such that for all $t < t_\star(\param,x)$,
$
\Delta^j _t(\bx, \param) \geq c_t m(x) / 2
$
and thus
$$
    \min_{j \neq k^*(\bx)} \Delta^j _t(\bx, \param) = m(\param + c_t \bx) \geq c_t m(x) / 2\eqsp,
$$
where we have used that $k^*(\param + c_t \bx) = k^*(\bx)$ since $\Delta^j _t(\bx, \param)>0$ for all $j \neq k^*(\bx)$.
Now define for $\tk{1} < t_\star(\param,x)$, $p^\text{max}(\bx, \tk{1}) = \max_{j \in [K]} \denoiser{\tk{1}}{}{\bx}[\param]^j$ and we recall that $\denoiser{\tk{1}}{}{\bx}[\param] \eqdef \softmax(\param + c_\tk{1} \bx) \in \Delta^{K-1}$.  Applying
Lemma~\ref{lem:softmax_margin} with $z = \param + c_\tk{1} \bx$,
we obtain
\[
  1 - p^\text{max}(\bx, \tk{1})
  \leq (K-1)\exp\big(\!-\!m(\param + c_\tk{1} \bx)\big)
  \leq (K-1)\exp\Big(-\frac{m(x)}{2}\,c_{t_1}\Big)\eqsp.
\]
Hence by Lemma~\ref{lem:cov_tail}, for the covariance \eqref{eq:covariance-denoiser} we have that
\[
  \big\|\Sigma_{t_1}(x)\big\|
  \le 2K\big(1 - p^\text{max}(\bx, \tk{1}) \big)
  \le 2K (K-1) \exp\Big(-\frac{m(x)}{2}\,c_{t_1}\Big)\eqsp.
\]

Using the gradient identities in \Cref{lem:jacob-denoiser} $\jac{\bx} \transform^\param _{0\tbar \tk{1}}(\bx) = c_\tk{1} \Sigma^\param _\tk{1}(\bx)$ and $\jac{\param} \transform^\param _{0\tbar \tk{1}}(\bx) = \Sigma^\param _\tk{1}(\bx)$ then for $\tk{1} \in (0, t_\star(\param,x))$, we have the following bounds
\begin{align}
\label{eq:jacob-on-compact}
\big\|\jac{\bx} \transform^\param _{0\tbar \tk{1}}(\bx) \big\|
  & \leq c_{t_1} M_K  \exp(-m(x) c_{t_1} / 2) \eqsp, \\
\big\|\jac{\param} \transform^\param _{0\tbar \tk{1}}(\bx) \big\|
  & \leq M_K  \exp(-m(x) c_{t_1} / 2), \label{eq:jacob-on-compact-param}
\end{align}
with $M_K := 2K(K-1)$.

\paragraph{Step 2: chain rule for the parameter gradient.} For any $x_1\in\rset^K$,
$
  \transform^\param_0(x_1)
  = \transform^\param_{0\tbar \tk{1}}\big(\transform^\param_{\tk{1}}(x_1)\big)
$
and thus for any $\param^\prime \in \rset^K$,
$$
\jac{\param} \transform^\param_0(\bx_1) \at{\param}{\param^\prime}
=
\jac{\param} \transform^\param _{0\tbar \tk{1}}\big(\transform^{\param^\prime} _{\tk{1}}(\bx_1)\big)\at{\param}{\param^\prime} \\
+
\jac{\bx} \transform^{\param^\prime} _{0\tbar \tk{1}}\big(\transform^{\param^\prime} _{\tk{1}}(\bx_1)\big) \cdot \jac{\param} \transform^\param _{\tk{1}}(\bx_1) \at{\param}{\param^\prime} \eqsp.
$$
Hence, taking the norms, we get
\begin{align*}
    \| \jac{\param} \transform^\param_0(\bx_1) \at{\param}{\param^\prime} \| & \leq  \| \jac{\param} \transform^\param _{0\tbar \tk{1}}\big(\transform^{\param^\prime} _{\tk{1}}(\bx_1)\big)\at{\param}{\param^\prime} \| + \| \jac{\bx} \transform^{\param^\prime} _{0\tbar \tk{1}}\big(\transform^{\param^\prime} _{\tk{1}}(\bx_1)\big) \| \| \jac{\param} \transform^\param _{\tk{1}}(\bx_1) \at{\param}{\param^\prime} \|
\end{align*}
By \Cref{lem:ddim-bound-Jacobian}, there exists a finite constant
$M(t_2)$ (depending only on $t_2,K$ and the schedule) such that
\[
  \sup_{\tk{1}\in(0,t_2)} \sup_{x_1\in\rset^K}
  \big\|\jac{\param} \transform^\param_{\tk{1}}(x_1)\at{\param}{\param^\prime}\big\|
  \le M(t_2) \eqsp.
\]
Finally, since by assumptions $\bx_1 \in \rset^K$ is such that $\transform^\param _\tk{1}(\bx_1) \notin \msh$, we get by applying the bounds \eqref{eq:jacob-on-compact} and \eqref{eq:jacob-on-compact-param}
$$
 \| \jac{\param} \transform^\param_0(\bx_1) \at{\param}{\param^\prime} \| \leq (1 + c_{\tk{1}} M(\tk{2}))  M_K \exp\big(-m(\transform^{\param^\prime}_{t_1}(\bx_1)) c_{\tk{1}} / 2\big) \eqsp.
$$
which yields the result.
\end{proof}

\begin{proof}[Proof of Proposition~\ref{prop:complete}]
  The proof is an immediate consequence of \Cref{lem:margin-continuity} and Proposition~\ref{propo:complete_0}.
\end{proof}

\section{Hard {\sc Straight-through} and {\sc ReinMax} estimators}
\label{appendix:reinmax}
For completeness, we derive the \textsc{ReinMax} gradient estimator from first principles and
recover the simpler expression in \eqref{eq:grad-reinmax}. We first consider the case $L=1$. We will then extend to the case $L > 1$. The categorical distribution is parameterized by the vector of logits
$\feature\in\rset^K$, with
\[
\targ{\param}{}{} = \vartarg{\feature_\param}{}{}, \qquad \text{where} \quad 
\vartarg{\feature}{}{\onehot{i}}
\eqdef
\frac{\exp(\feature^i)}{\sum_{j=1}^K \exp(\feature^j)},
\quad i\in[K].
\]
Let
\[
F(\param)\;\eqdef\;\pE_{\targ{\param}{}{}}[f(X)]
=\pE_{\vartarg{\feature_\param}{}{}}[f(X)],
\]
where $\feature_\param:\rset^m\to\rset^K$ is differentiable. By the chain rule,
\begin{equation}
\label{eq:targ-grad}
\grad{\param} F(\param)
=
\bigl(\jac{\param}\feature_\param\bigr)^\top
\left.
\grad{\feature}\,\pE_{\vartarg{\feature}{}{}}[f(X)]
\right|_{\feature=\feature_\param}
\in \rset^{m}.
\end{equation}
It therefore suffices to consider the parametrization in terms of logits, from which the results for all parameterizations can be readily derived when the logit vectors depend on a parameter $\theta$.

\subsection{Hard {\sc Straight-Through} estimator}
By definition, we have
\[
\pE_{\vartarg{\feature}{}{}}[f(X)]= \sum_{i=1}^K f(\onehot{i}) \vartarg{\feature}{}{\onehot{i}} \ \Rightarrow \ 
\grad{\feature} \pE_{\vartarg{\feature}{}{}}[f(X)] =  \sum_{i=1}^K f(\onehot{i}) \grad{\feature} \vartarg{\feature}{}{\onehot{i}}
\]
Using a baseline subtraction, chosen here as $\pE_{\vartarg{\feature}{}{}}[f(X)]$, and exploiting
the identities $\sum_{j=1}^K \vartarg{\feature}{}{\onehot{j}} = 1$ and
$\sum_{j=1}^K \grad{\feature}\vartarg{\feature}{}{\onehot{j}} = 0$, we can rewrite the gradient as
\begin{equation}
\label{eq:approximation-expectation}
\grad{\feature}\,\pE_{\vartarg{\feature}{}{}}[f(X)]
=
\sum_{i,j=1}^K
\bigl(f(\onehot{i})-f(\onehot{j})\bigr)\,
\grad{\feature}\,\vartarg{\feature}{}{\onehot{i}}\,
\vartarg{\feature}{}{\onehot{j}}
\eqsp.
\end{equation}
This symmetric form is the starting point used by the authors to motivate the
first-order approximation interpretation of the straight-through estimator.
Indeed, applying a first-order Taylor expansion of $f$ around $\onehot{j}$ yields
\[
f(\onehot{i}) - f(\onehot{j})
\;\approx\;
\grad{x} f(\onehot{j})^\top (\onehot{i}-\onehot{j}) \eqsp.
\]
Substituting this approximation into the symmetric expression gives
\[
\grad{\feature}\,\pE_{\vartarg{\feature}{}{}}[f(X)]
\;\approx\;
\sum_{i,j=1}^K
\grad{x} f(\onehot{j})^\top (\onehot{i}-\onehot{j})\,
\grad{\feature}\,\vartarg{\feature}{}{\onehot{i}}\,
\vartarg{\feature}{}{\onehot{j}}.
\]
It can then be shown that the expectation of the straight-through gradient estimator \eqref{eq:grad-st}
 coincides with the right-hand side, which explains the interpretation of straight-through
as an unbiased estimator of a first-order approximation of the true gradient.
Define
\begin{equation}
\label{eq:definition-gradST}
\gradST{\feature} F(X; \feature)= \jac{\feature} \pE_{\vartarg{\feature}{}{}}[X]^\top \grad{x} f(X) = \pCov_{\vartarg{\feature}{}{}}[X] \grad{x} f(X).
\end{equation}
\begin{lemma} 
\label{lem:straight-through-L=1}
    It holds that  
    $$
    \pE_{\vartarg{\feature}{}{}}\left[ \gradST{\feature} F(X; \feature) \right]  
    = \jac{\feature} \pE_{\vartarg{\feature}{}{}}[X]^\top  \pE_{\vartarg{\feature}{}{}}[\grad{x} f(X)] 
    =\sum_{i, j=1}^K \grad{\bx} f(\onehot{i})^\top(\onehot{j} - \onehot{i}) \grad{\feature} \vartarg{\feature}{}{\onehot{j}} \, \vartarg{\feature}{}{\onehot{i}} \eqsp. 
    $$
\end{lemma}
\begin{proof}
Since \(
\pE_{\vartarg{\feature}{}{}}[X]
= \sum_{j=1}^K \onehot{j}\,\vartarg{\feature}{}{\onehot{j}},
\)
its Jacobian with respect to $\feature$ is
\(
\jac{\feature}\,\pE_{\vartarg{\feature}{}{}}[X]
=
\sum_{j=1}^K
\onehot{j}\,
\grad{\feature}\vartarg{\feature}{}{\onehot{j}}^{\top}.
\)
Taking expectations in the definition of the straight-through estimator
\eqref{eq:grad-st} then yields
\begin{align*}
\pE_{\vartarg{\feature}{}{}}\!\left[\gradST{\feature} F(X;\feature)\right]
&=
\sum_{i=1}^K
\Bigl(\jac{\feature}\,\pE_{\vartarg{\feature}{}{}}[X]\Bigr)^\top
\grad{\bx} f(\onehot{i})\,
\vartarg{\feature}{}{\onehot{i}} \\
&=
\sum_{i,j=1}^K
\grad{\feature}\vartarg{\feature}{}{\onehot{j}}\,
\onehot{j}^\top
\grad{\bx} f(\onehot{i})\,
\vartarg{\feature}{}{\onehot{i}}.
\end{align*}
Using the identity
$\sum_{j=1}^K \grad{\feature}\vartarg{\feature}{}{\onehot{j}}=0$,
which follows from normalization of the categorical distribution, we may perform a
baseline subtraction and replace $\onehot{j}$ by $(\onehot{j}-\onehot{i})$, giving
\begin{align*}
\pE_{\vartarg{\feature}{}{}}\!\left[\gradST{\feature} F(X;\feature)\right]
&=
\sum_{i,j=1}^K
\grad{\feature}\vartarg{\feature}{}{\onehot{j}}\,
(\onehot{j}-\onehot{i})^\top
\grad{\bx} f(\onehot{i})\,
\vartarg{\feature}{}{\onehot{i}} \\
&=
\sum_{i,j=1}^K
\grad{\bx} f(\onehot{i})^\top
(\onehot{j}-\onehot{i})\,
\grad{\feature}\vartarg{\feature}{}{\onehot{j}}\,
\vartarg{\feature}{}{\onehot{i}},
\end{align*}
which is the claimed expression.
\end{proof}
We now extend these results to $L > 1$. In this case $f(\bx)= f(x^1,\dots,x^L)$ where $x^k \in \msv^K$, for $k \in [L]$.
We have that, for any $k \in [L]$, 
\[
\pE_{\bigotimes_{\ell \in [L]} \vartarg{\feature{\ell}}{}{}} [f(X^1,\dots,X^L)] = \pE_{\vartarg{\feature{k}}{}{}}[f^k(X^k)] \,,
\]
where we have defined 
\begin{equation}
\label{eq:definition-f^k}
f^k(x^k)= \pE_{\otimes_{\ell \in [L] \setminus \{\ell\}} \vartarg{\feature{\ell}}{}{}} [f(X^1,\dots,X^{k-1},x^k,X^{k+1},\dots,X^L)] \,.
\end{equation}
Now define $\vartarg{\feature}{}{}= \bigotimes_{\ell \in [L]} \vartarg{\feature{\ell}}{}{}$. 
Consider the {\sc Straight-through} estimator 
\begin{equation}
\label{eq:definition-gradST}
\gradST{\feature} F(X; \feature)= \jac{\feature} \pE_{\vartarg{\feature}{}{}}[X]^\top \nabla_{\bx} f(X).
\end{equation}
The matrix $\jac{\feature} \pE_{\vartarg{\feature}{}{}}[X]^\top$ is a $L \times L$ block-diagonal matrix, whose $k$-th $K\times K$ diagonal block is given by $\jac{\feature^k} \pE_{\vartarg{\feature^k}{}{}}[X^k]$. 
Similarly, the vector $\grad{\bx} f(\bx)$ is a $L\times 1$ block vector, whose $k$-th $K \times 1$ block is given by 
$$
\big[ \pE_{\vartarg{\feature}{}{}}[\grad{x} f(X)] \big]^k= \pE_{\vartarg{\feature^k}{}{}}[ \grad{x^k} f^k(X^k)].
$$ 
Applying \Cref{lem:straight-through-L=1}, we get 
\[
\big[\pE_{\vartarg{\feature}{}{}}[\gradST{\feature} F(X; \feature)]\big]^k = 
\sum_{i, j=1}^K \grad{\bx} f^k(\onehot{i})^\top(\onehot{j} - \onehot{i}) \grad{\feature^k} \vartarg{\feature^k}{}{\onehot{j}} \, \vartarg{\feature^k}{}{\onehot{i}} 
\]
which is a proxy of 
\[
\big[ \grad{\feature} \pE_{\vartarg{\feature}{}{}}[f(X)] \big]^k = \grad{\feature^k} \pE_{\vartarg{\feature^k}{}{}}[f^k(X^k)]= 
\sum_{i,j=1}^K
\bigl(f^k(\onehot{i})-f^k(\onehot{j})\bigr)\,
\grad{\feature^k}\,\vartarg{\feature^k}{}{\onehot{i}}\,
\vartarg{\feature^k}{}{\onehot{j}} \eqsp.
\]
\subsection{{\sc ReinMax} estimator}
We again focus first on the case $L=1$. The extension to general $L$ follows exactly along the same lines than for the hard {\sc straight-through} estimator. The baisc idea is to consider a second-order approximation of
\eqref{eq:approximation-expectation} based on Heun's method. Heun's method, also known as
the explicit trapezoidal rule, is a second-order Runge--Kutta scheme that improves a
first-order approximation by averaging gradients evaluated at the two endpoints of a step.
In our discrete setting, this yields a symmetric approximation that averages the gradients
at $\onehot{i}$ and $\onehot{j}$.

Applying this principle yields the second-order approximation
\begin{equation}
\label{eq:scd-order}
\hat\nabla^{\text{2nd}}_\feature \pE_{\vartarg{\feature}{}{}}[f(X)]
\eqdef
\sum_{i,j=1}^K
\frac{1}{2}
\bigl(
\grad{x} f(\onehot{i})
+
\grad{x} f(\onehot{j})
\bigr)^\top
(\onehot{i}-\onehot{j})\,
\grad{\feature}\vartarg{\feature}{}{\onehot{i}}\,
\vartarg{\feature}{}{\onehot{j}}
\eqsp.
\end{equation}
If $f:\rset^m\to\rset$ be quadratic, then it is easily shown that for all $x,y\in\rset^K$,
\[
f(y)-f(x)
=
\frac{1}{2}\Bigl(\grad{x} f(y)+\grad{x} f(x)\Bigr)^\top (y-x).
\]
We now derive the \textsc{ReinMax} estimator by rewriting \eqref{eq:scd-order} in a more explicit and implementable form. 
For all $(i,k)\in[K]^2$, we get
\begin{equation}
\label{eq:partial-derivative-logit}
\partial_{\feature^k}\,\vartarg{\feature}{}{\onehot{i}}
=
\vartarg{\feature}{}{\onehot{i}}\,
(\delta_{ik}-\vartarg{\feature}{}{\onehot{k}}),
\end{equation}
where $\delta_{ik}$ denotes the Kronecker symbol.

Define the second-order approximation with respect to $\feature$ as
\begin{equation}
\label{eq:scd-order-feature}
\hat\nabla^{\text{2nd}}_\feature \pE_{\vartarg{\feature}{}{}}[f(X)]
\eqdef
\sum_{i,j=1}^K
\frac{1}{2}
\bigl(
\grad{x} f(\onehot{i})
+
\grad{x} f(\onehot{j})
\bigr)^\top
(\onehot{i}-\onehot{j})\,
\grad{\feature}\vartarg{\feature}{}{\onehot{i}}\,
\vartarg{\feature}{}{\onehot{j}}
\eqsp.
\end{equation}
Substituting \eqref{eq:partial-derivative-logit} into
\eqref{eq:scd-order-feature} and extracting the $k$-th coordinate yields
\begin{align}
\label{eq:naive-reinmax}
\bigl[\hat\nabla^{\text{2nd}}_\feature \pE_{\vartarg{\feature}{}{}}[f(X)]\bigr]^k
=
\frac{1}{2}
\sum_{i,j=1}^K
\bigl(\grad{x} f(\onehot{i})+\grad{x} f(\onehot{j})\bigr)^\top
(\onehot{i}-\onehot{j}) 
\vartarg{\feature}{}{\onehot{i}}
(\delta_{ik}-\vartarg{\feature}{}{\onehot{k}})
\,\vartarg{\feature}{}{\onehot{j}}.
\end{align}
The term proportional to $-\vartarg{\feature}{}{\onehot{k}}$ vanishes. Indeed,
\[
\sum_{i,j=1}^K
\bigl(\grad{x} f(\onehot{i})+\grad{x} f(\onehot{j})\bigr)^\top
(\onehot{i}-\onehot{j})\,
\vartarg{\feature}{}{\onehot{i}}\,
\vartarg{\feature}{}{\onehot{j}}
=0,
\]
by antisymmetry in $(i,j)$. Retaining only the contribution from $\delta_{ik}$ gives
the explicit second-order expression
\begin{equation}
\label{eq:expression-hun}
\bigl[\hat\nabla^{\text{2nd}}_\feature \pE_{\vartarg{\feature}{}{}}[f(X)]\bigr]^k
=
\frac{1}{2}
\sum_{j=1}^K
\bigl(\grad{x} f(\onehot{k})+\grad{x} f(\onehot{j})\bigr)^\top
(\onehot{k}-\onehot{j})\,
\vartarg{\feature}{}{\onehot{k}}\,
\vartarg{\feature}{}{\onehot{j}}
\eqsp,
\end{equation}
We now show how this quantity can be rewritten as an expectation involving a single
evaluation of $\grad{x} f$. We use the following elementary Lemma:
\begin{lemma}
\label{lem:expectation-hun}
For any $k \in [K]$,  we get
\begin{align*}
\bigl[\hat\nabla^{\text{2nd}}_\feature \pE_{\vartarg{\feature}{}{}}[f(X)]\bigr]^k
&=
\bigl[\hat\nabla^{\text{2nd}}_\feature \pE_{\vartarg{\feature}{}{}}[f(X)]\bigr]^k
&= \frac{1}{2} \pE_{\vartarg{\feature}{}{}} \left[ \grad{x} f(X) ^\top \left\{ \vartarg{\feature}{}{\onehot{k}} (\onehot{k} - X) + \langle X, \onehot{k}\rangle (X - \pE_{\vartarg{\feature}{}{}}[X]) \right\} \right].
\end{align*}
\end{lemma}

\begin{proof}
Write $p_i \eqdef \vartarg{\feature}{}{\onehot{i}}$ and denote $g_i \eqdef \grad{x} f(\onehot{i})\in\rset^K$.
Since $X$ is categorical on the one-hot vectors, we will repeatedly use the identity
\[
\pE_{\vartarg{\feature}{}{}}[\psi(X)] \;=\; \sum_{j=1}^K p_j\,\psi(\onehot{j})
\quad \text{for any function }\psi \text{ on }\{\onehot{1},\dots,\onehot{K}\}.
\]

\paragraph{Step 1: split the explicit sum into two contributions.}
Starting from the explicit expression and expanding the dot product gives
\begin{align*}
\bigl[\hat\nabla^{\text{2nd}}_\feature \pE_{\vartarg{\feature}{}{}}[f(X)]\bigr]^k
&=
\frac{1}{2}\sum_{j=1}^K
\Big(
g_k^\top(\onehot{k}-\onehot{j})
+
g_j^\top(\onehot{k}-\onehot{j})
\Big)\,p_k\,p_j \\
&=
\underbrace{
\frac{1}{2}\,p_k\,g_k^\top\sum_{j=1}^K p_j(\onehot{k}-\onehot{j})
}_{\eqdef\,T_1}
\;+\;
\underbrace{
\frac{1}{2}\,p_k\sum_{j=1}^K p_j\,g_j^\top(\onehot{k}-\onehot{j})
}_{\eqdef\,T_2}.
\end{align*}

\paragraph{Step 2: rewrite $T_2$ as an expectation.}
Using the categorical expectation identity with $\psi(x)=\grad{x} f(x)^\top(\onehot{k}-x)$,
we obtain
\[
\pE_{\vartarg{\feature}{}{}}\!\big[\grad{x} f(X)^\top(\onehot{k}-X)\big]
=
\sum_{j=1}^K p_j\,g_j^\top(\onehot{k}-\onehot{j}).
\]
Therefore,
\[
T_2
=
\frac{1}{2}\,p_k\,
\pE_{\vartarg{\feature}{}{}}\!\big[\grad{x} f(X)^\top(\onehot{k}-X)\big]
=
\frac{1}{2}\,
\pE_{\vartarg{\feature}{}{}}\!\Big[p_k\,\grad{x} f(X)^\top(\onehot{k}-X)\Big],
\]
since $p_k$ is constant with respect to $X$.

\paragraph{Step 3: rewrite $T_1$ as an expectation.}
First note that
\[
\sum_{j=1}^K p_j(\onehot{k}-\onehot{j})
=
\onehot{k}-\sum_{j=1}^K p_j\onehot{j}
=
\onehot{k}-\mu.
\]
Hence
\[
T_1
=
\frac{1}{2}\,p_k\,g_k^\top(\onehot{k}-\mu).
\]
Now use that for one-hot $X$, the scalar $\langle X,\onehot{k}\rangle$ is the indicator
$\indic\{X=\onehot{k}\}$. In particular,
\[
\langle \onehot{j},\onehot{k}\rangle = \delta_{jk},
\qquad
\text{and}\qquad
\langle X,\onehot{k}\rangle\,\grad{x} f(X)^\top(X-\mu)
=
\begin{cases}
g_k^\top(\onehot{k}-\mu), & \text{if } X=\onehot{k},\\
0, & \text{otherwise.}
\end{cases}
\]
Therefore,
\begin{align*}
\pE_{\vartarg{\feature}{}{}}\!\Big[
\langle X,\onehot{k}\rangle\,\grad{x} f(X)^\top(X-\mu)
\Big]
&=
\sum_{j=1}^K p_j\,
\langle \onehot{j},\onehot{k}\rangle\,
g_j^\top(\onehot{j}-\mu) \\
&=
p_k\,g_k^\top(\onehot{k}-\mu),
\end{align*}
which implies
\[
T_1
=
\frac{1}{2}\,
\pE_{\vartarg{\feature}{}{}}\!\Big[
\langle X,\onehot{k}\rangle\,\grad{x} f(X)^\top(X-\mu)
\Big].
\]
We conclude by combining the identities for $T_1$ and $T_2$ yields the claimed expectation form.
\end{proof}
We will now establish an alternative expression for $[\hat\nabla^{\text{2nd}} _\feature \pE_{\vartarg{\feature}{}{}}[f(X)]]^k$.
\begin{lemma}
\label{lem:alternative-expression}
For any $k \in [K]$, we get
\begin{multline}
\label{eq:key-rein-max-identity} 
[\hat\nabla^{\text{2nd}} _\feature \pE_{\vartarg{\feature}{}{}}[f(X)]]^k 
\\ = \pE_{\vartarg{\feature}{}{}} \left[ \grad{x} f(X) ^\top \left\{ 2\frac{\vartarg{\feature}{}{\onehot{k}} + \langle X, \onehot{k} \rangle}{2}\big(\onehot{k} - \sum_{i = 1}^K \onehot{i} \, \frac{\vartarg{\feature}{}{\onehot{i}} + \langle X, \onehot{i} \rangle}{2}  \big)  - \frac{\vartarg{\feature}{}{k}}{2} \big( \onehot{k} - \sum_{i = 1}^K \onehot{i} \, \vartarg{\feature}{}{\onehot{i}} \big)\right\} \right] \eqsp.
\end{multline}
\end{lemma}
\begin{proof}
We establish the identity pointwise and then the
equality of expectations follows immediately. Let $X\sim \vartarg{\feature}{}{}$ be categorical on $\{\onehot{1},\dots,\onehot{K}\}$ and
denote $p_i \eqdef \vartarg{\feature}{}{\onehot{i}}$ and 
$\mu \eqdef \pE_{\vartarg{\feature}{}{}}[X]=\sum_{i=1}^K p_i\,\onehot{i}$. Since $X$ is one-hot, for each $i\in[K]$ we have $\langle X,\onehot{i}\rangle\in\{0,1\}$ and $X=\sum_{i=1}^K \onehot{i}\,\langle X,\onehot{i}\rangle.$ Define the ``averaged'' probabilities (used in \textsc{ReinMax})
\[
q_i(X) \eqdef \frac{p_i+\langle X,\onehot{i}\rangle}{2},
\qquad i\in[K].
\]
Then
\begin{equation}    
\label{eq:q_sums_to_one}
\sum_{i=1}^K q_i(X)
=
\frac{1}{2}\sum_{i=1}^K p_i + \frac{1}{2}\sum_{i=1}^K \langle X,\onehot{i}\rangle
=
\frac{1}{2}\cdot 1 + \frac{1}{2}\cdot 1
=1,
\end{equation}
so $q(X)$ is a valid probability vector. Moreover,
\begin{equation}
\label{eq:q_mean}
\sum_{i=1}^K \onehot{i}\,q_i(X)
=
\frac{1}{2}\sum_{i=1}^K p_i\,\onehot{i}
+\frac{1}{2}\sum_{i=1}^K \onehot{i}\,\langle X,\onehot{i}\rangle
=
\frac{\mu+X}{2}.
\end{equation}
For each fixed $k\in[K]$, define
\[
A(X) \eqdef p_k(\onehot{k}-X) + \langle X,\onehot{k}\rangle\,(X-\mu)
\]
and
\[
B(X) \eqdef
2\frac{p_k+\langle X,\onehot{k}\rangle}{2}
\Bigl(\onehot{k}-\sum_{i=1}^K \onehot{i}\,\frac{p_i+\langle X,\onehot{i}\rangle}{2}\Bigr)
-\frac{p_k}{2}\Bigl(\onehot{k}-\sum_{i=1}^K \onehot{i}\,p_i\Bigr).
\]
Using \eqref{eq:q_mean} and $\sum_{i=1}^K \onehot{i}\,p_i=\mu$, we rewrite $B(X)$ as
\begin{align*}
B(X)
&=
\bigl(p_k+\langle X,\onehot{k}\rangle\bigr)
\Bigl(\onehot{k}-\frac{\mu+X}{2}\Bigr)
-\frac{p_k}{2}\,(\onehot{k}-\mu) \\
&=
\bigl(p_k+\langle X,\onehot{k}\rangle\bigr)\onehot{k}
-\frac{1}{2}\bigl(p_k+\langle X,\onehot{k}\rangle\bigr)\mu
-\frac{1}{2}\bigl(p_k+\langle X,\onehot{k}\rangle\bigr)X
-\frac{p_k}{2}\onehot{k}
+\frac{p_k}{2}\mu \\
&=
\frac{p_k}{2}\onehot{k}
+\langle X,\onehot{k}\rangle\,\onehot{k}
-\frac{\langle X,\onehot{k}\rangle}{2}\mu
-\frac{p_k}{2}X
-\frac{\langle X,\onehot{k}\rangle}{2}X.
\end{align*}
Now use the one-hot property: if $\langle X,\onehot{k}\rangle=1$, then $X=\onehot{k}$, hence
$\langle X,\onehot{k}\rangle\,\onehot{k}=\langle X,\onehot{k}\rangle\,X$; if
$\langle X,\onehot{k}\rangle=0$, both sides are $0$. Therefore, in all cases,
\begin{equation}
\label{eq:indicator_times_e_k_equals_indicator_times_X}
\langle X,\onehot{k}\rangle\,\onehot{k}
=
\langle X,\onehot{k}\rangle\,X.
\end{equation}
Substituting \eqref{eq:indicator_times_e_k_equals_indicator_times_X} into the expression
for $B(X)$ gives
\begin{align*}
B(X)
&=
\frac{p_k}{2}\onehot{k}
-\frac{p_k}{2}X
+\frac{\langle X,\onehot{k}\rangle}{2}X
-\frac{\langle X,\onehot{k}\rangle}{2}\mu \\
&=
\frac{1}{2}\Bigl(
p_k(\onehot{k}-X)
+\langle X,\onehot{k}\rangle\,(X-\mu)
\Bigr)
=
\frac{1}{2}\,A(X).
\end{align*}
Since $B(X)=\frac12 A(X)$ holds pointwise, we obtain
\[
\frac{1}{2}\,
\pE_{\vartarg{\feature}{}{}}\!\bigl[\grad{x} f(X)^\top A(X)\bigr]
=
\pE_{\vartarg{\feature}{}{}}\!\bigl[\grad{x} f(X)^\top B(X)\bigr],
\]
which concludes the proof.
\end{proof}

Recalling that
\[
\jac{\feature}\,\pE_{\vartarg{\feature}{}{}}[X]
=
\Diag\!\bigl(\pE_{\vartarg{\feature}{}{}}[X]\bigr)
-
\pE_{\vartarg{\feature}{}{}}[X]\,
\pE_{\vartarg{\feature}{}{}}[X]^\top,
\]
and defining the conditional distribution
$\vartarg{\feature}{x}{\onehot{i}}
\eqdef (\vartarg{\feature}{}{\onehot{i}}+\langle x,\onehot{i}\rangle)/2$,
we obtain the standard \textsc{ReinMax} estimator
\begin{equation}
\label{eq:reinmax_standard}
\gradRM{\feature} G(X,\feature)
\eqdef
\Bigl[
2\,\pCov_{\vartarg{\feature}{X}{}}(\widetilde{X})
-
\frac{1}{2}\,\pCov_{\vartarg{\feature}{}{}}(X)
\Bigr]\,
\grad{x} f(X)
\eqsp,
\end{equation}
which satisfies
\begin{equation}
\label{eq:expectation-reinmax}
\hat\nabla^{\text{2nd}}_\feature \pE_{\vartarg{\feature}{}{}}[f(X)]
=
\pE_{\vartarg{\feature}{}{}}\!\bigl[\gradRM{\feature} G(X;\feature)\bigr].
\end{equation}
Since Heun’s method is
exact for quadratic functions, this directly explains why \textsc{ReinMax} is exact for
quadratic objectives and can be interpreted as a principled second-order correction of the
straight-through estimator.

\Cref{lem:mean-cov-mixture} shows that 
$$
\pCov_{\vartarg{\feature}{x}{}}(\widetilde{X}) = \frac{1}{2} \pCov_{\vartarg{\feature}{}{}}(X) + \frac{1}{4} (x - \pE_{\vartarg{\feature}{}{}}[X])(x - \pE_{\vartarg{\feature}{}{}}[X])^\top \eqsp,
$$
and plugging in \eqref{eq:reinmax_standard} we recover \eqref{eq:grad-reinmax}; \emph{i.e.} 
\begin{equation}
    \label{eq:reinmax-appendix}
  \gradRM{\feature} G(X; \feature) = \frac{1}{2} \big\{  \pCov_{\vartarg{\feature}{}{}}(X) +  (X - \pE_{\vartarg{\feature}{}{}}[X])(X - \pE_{\vartarg{\feature}{}{}}[X])^\top \big\} \grad{x} f(X)
\end{equation}
\begin{lemma}
\label{lem:mean-cov-mixture}
Let $P=\frac{1}{2}P_1+\frac{1}{2}P_2$ be a mixture distribution on $\rset^d$. Denote
$\mu_i \eqdef \pE_{P_i}[X]$ and $\Sigma_i \eqdef \pCov_{P_i}(X)$ for $i\in\{1,2\}$. Then the
mean $\mu$ and covariance $\Sigma$ of $P$ are
\[
\mu=\frac{\mu_1+\mu_2}{2},
\qquad
\Sigma=\frac{\Sigma_1+\Sigma_2}{2}+\frac{1}{4}(\mu_1-\mu_2)(\mu_1-\mu_2)^\top.
\]
\end{lemma}

\begin{proof}
Let $Z\in\{1,2\}$ be the component indicator, with
$\pP(Z=1)=\pP(Z=2)=\tfrac{1}{2}$, and let $X\mid(Z=i)\sim P_i$. Then
\[
\pE[X\mid Z=i]=\mu_i,
\qquad
\pCov(X\mid Z=i)=\Sigma_i,
\qquad i\in\{1,2\}.
\]
By the law of total expectation,
\[
\mu=\pE[X]=\pE\big[\pE[X\mid Z]\big]
=\tfrac{1}{2}\mu_1+\tfrac{1}{2}\mu_2.
\]
By the law of total covariance,
\[
\Sigma=\pCov(X)=\pE\big[\pCov(X\mid Z)\big]+\pCov\big(\pE[X\mid Z]\big).
\]
The first term is
\[
\pE\big[\pCov(X\mid Z)\big]=\tfrac{1}{2}\Sigma_1+\tfrac{1}{2}\Sigma_2.
\]
For the second term, since $\pE[X\mid Z]=\mu_Z$ takes values $\mu_1$ and $\mu_2$,
\[
\pCov(\mu_Z)=\pE[\mu_Z\mu_Z^\top]-\mu\mu^\top
=\tfrac{1}{2}(\mu_1\mu_1^\top+\mu_2\mu_2^\top)-\mu\mu^\top.
\]
Using $\mu=\frac{\mu_1+\mu_2}{2}$, we expand
\[
\mu\mu^\top
=\frac{1}{4}\big(\mu_1\mu_1^\top+\mu_1\mu_2^\top+\mu_2\mu_1^\top+\mu_2\mu_2^\top\big),
\]
hence
\begin{align*}
\tfrac{1}{2}(\mu_1\mu_1^\top+\mu_2\mu_2^\top)-\mu\mu^\top
&=
\frac{1}{4}\big(\mu_1\mu_1^\top-\mu_1\mu_2^\top-\mu_2\mu_1^\top+\mu_2\mu_2^\top\big) \\
&=
\frac{1}{4}(\mu_1-\mu_2)(\mu_1-\mu_2)^\top.
\end{align*}
Combining the two terms gives the stated expression for $\Sigma$.
\end{proof}
We now consider the extension to  $L > 1$.  The {\sc ReinMax} estimator is given by 
\begin{equation}
\label{eq:grad-reinmax}
\gradRM{\feature} G(X;\feature)
\eqdef
\frac{1}{2}\, B_{\vartarg{\feature}{}{}}(X)  \grad{\bx} f(X)
\eqsp,
\end{equation}
where $X\sim\vartarg{\feature}{}{}$ and 
\[ 
B_{\vartarg{\feature}{}{}}(X)  =  \pCov_{\vartarg{\feature}{}{}}(X) + \widehat{C}_{\feature}(X)
\] 
where $\widehat{C}_{\feature}(X)$ is block-diagonal with $L$
blocks of size $K\times K$; its $\ell$-th block is
\(
\widehat{C}^{(\ell)}_{\feature}(X)
\eqdef
\bigl(X^\ell-\pE_{\vartarg{\feature^\ell}{}{}}[X^\ell]\bigr)
\bigl(X^\ell-\pE_{\vartarg{\feature^\ell}{}{}}[X^\ell]\bigr)^\top\), $\ell\in[L]$,
so that $\pE_{\vartarg{\feature}{}{}}[\widehat{C}^{(\ell)}_{\feature}(X)]
=\pCov_{\vartarg{\feature}{}{}}(X^\ell)$.  The matrix $B_{\vartarg{\feature}{}{}}(X)$ is also block-diagonal with
$L$ blocks of size $K \times K$; its $\ell$-th block is 
\[
B^\ell_{\vartarg{\feature^\ell}{}{}}(X^\ell) := [B_{\vartarg{\feature}{}{}}(X)]^{\ell,\ell}=  \pCov_{\vartarg{\feature^\ell}{}{}}(X^\ell) + 
\bigl(X^\ell-\pE_{\vartarg{\feature^\ell}{}{}}[X^\ell]\bigr) \bigl(X^\ell-\pE_{\vartarg{\feature^\ell}{}{}}[X^\ell]\bigr)^\top) \,, \quad \ell \in [L]. 
\]
The vector $\gradRM{\feature} G(X;\feature)$ is a block vector with $L$ blocks of size $K\times 1$ blocks; the $\ell$-th block is given by
\[
[\gradRM{\feature} G(X;\feature)]^\ell  = \frac{1}{2} B^\ell_{\vartarg{\feature^\ell}{}{}}(X^\ell) \grad{x^\ell} f(X) \eqsp.
\]
Taking the expectation yields to 
\[
\pE_{\vartarg{\feature}{}{}}\left[[\gradRM{\feature} G(X;\feature)]^\ell\right] = 
\frac{1}{2} \pE_{\vartarg{\feature}{}{}} [B^\ell_{\vartarg{\feature^\ell}{}{}}(X^\ell) \grad{x^\ell} f^\ell(X^\ell)] 
\]
where $f^\ell$ is defined in \eqref{eq:definition-f^k}. \eqref{eq:expectation-reinmax} shows that, for $\ell \in [L]$, 
\[
\hat\nabla^{\text{2nd}}_{\feature} \left[\pE_{\vartarg{\feature}{}{}}[f(X)]\right]^\ell= \hat\nabla^{\text{2nd}}_\feature \pE_{\vartarg{\feature^\ell}{}{}}[f^\ell(X^\ell)] =  \pE_{\vartarg{\feature}{}{}}\left[[\gradRM{\feature} G(X;\feature)]^\ell\right] 
\]

\subsection{\redgemax, a {\sc ReinMax} based gradient estimator}
\label{sec:redge_hard_reinmax}
Recall, following the derivations in \Cref{sec:extensions}, that $h_\param(\bx_\tk{1}) \eqdef \sum_{\bx_0} f(\bx_0) \, \pdata{0\tbar \tk{1}}{\bx_\tk{1}}{\bx_0}[\param]$. The \redgemax\ gradient estimator at $\param = \param^\prime$ is given by 
\begin{equation}
  \label{eq:reindge-max}
\gradRM{\param} h_\param(\bx_0; \transform^{\param^\prime} _{\tk{1}}(X_\tk{1}))\at{\param}{\param^\prime} + \jac{\param} \transform^\param _\tk{1}(X_1)^\top \at{\param}{\param^\prime}\!\gradST{\bx_\tk{1}} h_\param(\bx_0; \transform^{\param^\prime} _\tk{1}(X_1))
\end{equation}
where $X_0 \sim \pdata{0\tbar \tk{1}}{\transform^{\param^\prime} _{\tk{1}}(X_\tk{1})}{}[\param^\prime]$ and following the previous section and \eqref{eq:reinmax_standard}, we define for any $(\bx_0, \bx_\tk{1})$, 
\begin{align*}
\gradST{\bx_\tk{1}} h_\param(\bx_0; \bx_\tk{1})
& \eqdef
\frac{\alpha_\tk{1}}{2 \sigma^2 _\tk{1}}\,
\pCov_{\pdata{0\tbar \tk{1}}{\bx_\tk{1}}{}[\param]}(X)
\grad{x} f(\bx_0) \\
\gradRM{\param} h_\param(\bx_0; \bx_\tk{1})
& \eqdef  \frac{1}{2}\,
\jac{\param}\feature_\param^{\top}
B_{\param}(\bx_0; \bx_\tk{1})
\grad{x} f(\bx_0)
\eqsp,
\end{align*}
and $ B_{\param}(\bx_0; \bx_\tk{1})
=
\pCov_{\pdata{0\tbar \tk{1}}{\bx_\tk{1}}{}[\param]}(X_0)
+
\widehat{C}_\param(\bx_0; \bx_\tk{1})$, where $\widehat{C}_\param(\bx_0; \bx_\tk{1})$ is block-diagonal with $L$
blocks of size $K\times K$; its $\ell$-th block is
$$
\widehat{C}^{(\ell)}_\param(\bx_0; \bx_\tk{1})
\eqdef
(\bx^\ell _0 -\pE_{\pdata{0\tbar \tk{1}}{\bx_\tk{1}}{}[\param]}[X^\ell])(\bx^\ell _0 -\pE_{\pdata{0\tbar \tk{1}}{\bx_\tk{1}}{}[\param]}[X^\ell])^\top \eqsp.
$$
When using a single diffusion step ($\tk{1}=1$) and under the boundary condition $\alpha_1=0$,
the reverse transition no longer depends on the conditioning state (equivalently $X_1$ carries no information about $X_0$),
so $h_\theta(x_1)$ is constant in $x_1$ and 
$
h_\theta(x_1)=\pE_{\pi_0^\theta}\big[f(X_0)\big].
$
Hence $\grad{x_1} h_\theta(x_1)=0$ and the second term in \eqref{eq:reindge-max} vanishes.
In that case, \redgemax\ reduces to precisely the \reinmax\ gradient estimator for the categorical law
$\targ{\param}{}{}$. Therefore, in the same way that \algoname\ recovers \sthrough\ as a special case, \redgemax\ recovers \reinmax\ as $\tk{1}\to 1$.

Since the derivation of this estimator may seem abstract, we provide a code snippet of its implementation in
Fig.~\ref{fig:redgemax-code}.

\section{DDIM with a general Gaussian reference $\pdata{1}{}{}$}
\label{apdx:general-ddim}
\subsection{Reverse transitions}
Let $\pdata{0}{}{}$ be a probability distribution on $\rset^\dimx$. Consider the distribution path $(\pdata{t}{}{})_{t\in[0,1]}$ defined by $\pdata{t}{}{}=\law(X_t)$, where
\begin{equation}
\label{eq:interp}
X_t = \alpha_t X_0 + \sigma_t X_1,
\qquad (X_0,X_1)\sim \pdata{0}{}{}\otimes \pdata{1}{}{}\eqsp,
\end{equation}
and we take the (more general) reference distribution $\pdata{1}{}{}=\gauss(\mu,\Sigma)$ with $\Sigma\in\mathcal{S}_{++}(\rset^\dimx)$. Let $(\ddimstd_t)_{t\in[0,1]}$ be a schedule such that $0\le \ddimstd_t \le \sigma_t$. In the sequel, whenever a distribution is absolutely continuous \wrt\ the Lebesgue measure, we use the same notation for the distribution and its p.d.f.

If $U$ and $V$ are random variables, $U \overset{\mathcal{L}}{=} V$  denotes equality in distribution, \ie\ $\law(U)=\law(V)$. Since $\pdata{1}{}{}$ is Gaussian, we may decompose the Gaussian variable $\sigma_t X_1$ as
\begin{equation}
\label{eq:gauss_decomp}
\sigma_t X_1 \overset{\mathcal{L}}{=}
(\sigma_t^2-\ddimstd_t^2)^{1/2} X_1
+\Bigl(\sigma_t-(\sigma_t^2-\ddimstd_t^2)^{1/2}\Bigr)\mu
+\ddimstd_t \Sigma^{1/2} Z,
\qquad (X_1,Z)\sim \pdata{1}{}{}\otimes \mathcal{N}(\zero,\Id_{\dimx})\eqsp.
\end{equation}
Indeed, both sides have mean $\sigma_t\mu$ and covariance $\sigma_t^2\Sigma$.

Combining \eqref{eq:interp} and \eqref{eq:gauss_decomp}, we obtain $X_t \overset{\mathcal{L}}{=} X_t^\eta$, where we define
\begin{align}
\label{eq:interp_eta}
X_t^\eta
&= \alpha_t X_0
+(\sigma_t^2-\ddimstd_t^2)^{1/2} X_1
+\Bigl(\sigma_t-(\sigma_t^2-\ddimstd_t^2)^{1/2}\Bigr)\mu
+\ddimstd_t \Sigma^{1/2} Z_t,
\\
&\hspace{2cm} (X_0,X_1,Z_t)\sim \pdata{0}{}{}\otimes \pdata{1}{}{}\otimes \mathcal{N}(\zero,\Id_{\dimx})\eqsp.
\nonumber
\end{align}
Denote by $\fw{t\tbar 0, 1}{\bx_0, \bx_1}{\cdot}[\ddimstd]$ the conditional distribution of $X_t^\eta$ given $(X_0,X_1)=(\bx_0,\bx_1)$.
Then clearly from \eqref{eq:interp} and \eqref{eq:interp_eta} we have for all $\ddimstd_t \in [0, \std_t]$, 
\begin{equation}
    \label{eq:marginalization}
    \pdata{t}{}{\rmd \bx_t} = \int \fw{t\tbar 0, 1}{\bx_0, \bx_1}{\rmd \bx_t}[\ddimstd] \, \pdata{0}{}{\rmd \bx_0} \pdata{1}{}{\rmd \bx_1}  \eqsp.
\end{equation}
Now, for  $0 \leq s < t \leq 1$, define the reverse transition
\begin{equation}
\label{eq:reverse_notes}
\pdata{s\tbar t}{\bx_t}{\rmd \bx_s}[\eta] \eqdef 
\int \fw{s\tbar 0,1}{\bx_0,\bx_1}{\rmd \bx_s}[\eta]\;
  \pdata{0,1\tbar t}{\bx_t}{\rmd (\bx_0,\bx_1)}
\end{equation}
where $\pdata{0,1\tbar t}{\bx_t}{}$ denotes the conditional distribution of $(X_0,X_1)$ given $X_t=\bx_t$
under the joint distribution induced by \eqref{eq:interp}. This conditional can be written as $\pdata{0,1\tbar t}{\bx_t}{\rmd (\bx_0,\bx_1)}
=
\updelta_{(\bx_t-\alpha_t\bx_0)/\sigma_t}(\rmd \bx_1)\,
\pdata{0\tbar t}{\bx_t}{\rmd \bx_0},$
\begin{equation}
\label{eq:posterior-def}
\pdata{0\tbar t}{\bx_t}{\rmd \bx_0} = \frac{\pdata{0}{}{\rmd \bx_0}\,\normpdf(\bx_t;\alpha_t\bx_0 + \std_t \mu,\sigma_t^2 \Sigma)}{\pdata{t}{}{\bx_t}}.
\end{equation}
Indeed, for any bounded measurable function $f$, we get 
\begin{align*} 
    \int f(\bx_0, \bx_t,\bx_1) \, \pdata{0, 1\tbar t}{\bx_t}{\rmd (\bx_0, \bx_1)} \pdata{t}{}{\rmd \bx_t} & = \int f(\bx_0, \bx_t, \bx_1) \, \updelta_{\frac{\bx_t-\alpha_t\bx_0}{\sigma_t}}(\rmd \bx_1) \pdata{0}{}{\rmd \bx_0} \normpdf(\bx_t;\alpha_t\bx_0 + \std_t \mu,\sigma_t^2 \Sigma) \rmd \bx_t \\
    & = \int f(\bx_0, \bx_t, \frac{\bx_t - \alpha_t \bx_0}{\sigma_t}) \, \pdata{0}{}{\rmd \bx_0} \normpdf(\bx_t;\alpha_t\bx_0 + \std_t \mu,\sigma_t^2 \Sigma)  \rmd \bx_t \\
    & = \int f(\bx_0, \alpha_t \bx_0 + \std_t \mu + \sigma_t \Sigma^{1/2} z, \mu + \Sigma^{1/2} z) \, \pdata{0}{}{\rmd \bx_0} \normpdf(z; \zero, \Id_\dimx)  \rmd z \\
    & = \int f(\bx_0, \alpha_t \bx_0 + \std_t \bx_1, \bx_1) \, \pdata{0}{}{\rmd \bx_0} \normpdf(\bx_1; \mu, \Sigma) \rmd \bx_1 \\
    & = \int f(\bx_0, \bx_t, \bx_1) \, \updelta_{\alpha_t \bx_0 + \std_t \bx_1}(\rmd \bx_t) \pdata{0}{}{\rmd \bx_0} \normpdf(\bx_1; \mu, \Sigma)  \rmd \bx_1 \\
\end{align*}
which shows that 
\begin{equation}
    \label{eq:joint-eq}
    \pdata{0, 1\tbar t}{\bx_t}{\rmd (\bx_0, \bx_1)} \pdata{t}{}{\rmd \bx_t} = \updelta_{\alpha_t \bx_0 + \std_t \bx_1}(\rmd \bx_t) \pdata{0}{}{\rmd \bx_0} \pdata{1}{}{\rmd \bx_1} \eqsp,
\end{equation}
where the \rhs\ is the joint distribution of the random variables $(X_0,X_t,X_1)$ defined by \eqref{eq:interp}. It then follows that 
\begin{align*} 
    \int \pdata{s\tbar t}{\bx_t}{\rmd \bx_s}[\ddimstd] \, \pdata{t}{}{\rmd \bx_t} & = \int \fw{s\tbar 0,1}{\bx_0,\bx_1}{\rmd \bx_s}[\eta]\,
  \pdata{0,1\tbar t}{\bx_t}{\rmd (\bx_0,\bx_1)} \pdata{t}{}{\rmd \bx_t} \\
  & = \int \fw{s\tbar 0,1}{\bx_0,\bx_1}{\rmd \bx_s}[\eta]\, \pdata{0}{}{\rmd \bx_0} \pdata{1}{}{\rmd \bx_1}  \\
  & = \pdata{s}{}{\rmd \bx_s}
\end{align*}
where the second line follows from integrating the \rhs\ in \eqref{eq:joint-eq} \wrt\ $\bx_t$ and the third one from \eqref{eq:marginalization}. Finally, by noting that 
\begin{align*} 
    \pdata{s\tbar t}{\bx_t}{\rmd \bx_s}[\ddimstd] & = \int \fw{s\tbar 0,1}{\bx_0,\bx_1}{\rmd \bx_s}[\eta]\, \pdata{0,1\tbar t}{\bx_t}{\rmd (\bx_0,\bx_1)} \\
    & = \int \underbrace{\fw{s\tbar 0,1}{\bx_0,\frac{\bx_t - \alpha_t \bx_0}{\std_t}}{\rmd \bx_s}[\ddimstd]}_{\fw{s| 0, t}{\bx_0, \bx_t}{\cdot}[\ddimstd]} \, \pdata{0\tbar t}{\bx_t}{\rmd \bx_0}  
\end{align*}
where the defined $\fw{s\tbar 0, t}{\bx_0, \bx_t}{\cdot}[\ddimstd]$, up to the notation, is exactly the DDIM bridge transition \citet[Equation 7]{song2021ddim} when $\mu = 0_\dimx$ and $\Sigma = \Id_\dimx$. Finally, the Gaussian approximation $\fw{s\tbar 0, t}{\denoiser{t}{}{\bx_t}, \bx_t}{}[\ddimstd]$ used at inference, with $\denoiser{t}{}{\bx_t} \eqdef \int \bx_0 \, \pdata{0\tbar t}{\bx_t}{\bx_0} \rmd \bx_0$, is the one solving
$$ 
    \argmin_{r_{s| t}(\cdot | \bx_t) \in \mathcal{G}_{\ddimstd^2 _s \Sigma}} \kldivergence{\pdata{s\tbar t}{\bx_t}{\cdot}[\ddimstd]}{r_{s\tbar t}(\cdot | \bx_t)} \eqsp,
$$
where $\mathcal{G}_{\ddimstd^2 _s \Sigma} \eqdef \{ \gauss(\mu, \ddimstd^2 _s \Sigma): \; \mu \in \rset^\dimx \}$ is the set of Gaussian distributions with covariance set to $\ddimstd^2 _s \Sigma$. 

\subsection{Explicit denoiser for categorical distributions}
\label{apdx:general-denoiser}
In this section we extend the derivation in \eqref{sec:cat_rep} to the case where 
\[
\pdata{1}{}{} = \bigotimes_{i=1}^L \gauss(\mu^i, \Diag(v^i)),\qquad
\mu^i,v^i \in \rset^K,\quad v^{ij}>0 \eqsp.
\]
Following \eqref{eq:interp_eta} and the factorization \eqref{eq:factorized-joint}, we still have
$\targ{\,0\tbar t}{\bx_t}{\bx_0}[\param] \propto  \prod_{i = 1}^L \targ{\,0\tbar t}{\bx^i _t}{\bx^i _0}[\param, i]$
\begin{equation}
\label{eq:cat-site-like-diag}
\targ{\,0\tbar t}{\bx^i _t}{\bx^i _0}[\param, i]
\;\propto\;
\targ{\param}{}{\bx^i _0}[i]\,
\normpdf\big(\bx^i_t;\, \alpha_t \bx^i_0 + \sigma_t \mu^i,\; \sigma_t^2 \Diag(v^i)\big)\eqsp.
\end{equation}

With this structure, the denoiser
$
\denoiser{t}{}{\bx_t}[\param]
\eqdef 
\sum_{\bx_0} \bx_0 \,\targ{\,0\tbar t}{\bx_t}{\bx_0}[\param]
$
simplifies to a matrix of posterior probabilities due to the one-hot structure; \emph{i.e.} for any
$i\in[L]$ and $j\in[K]$,
\(
\denoiser{t}{}{\bx_t}[\param]^{ij}=\targ{\,0\tbar t}{\bx_t}{\onehot{j}}[\param,i].
\)
Using that 
\[
\normpdf(\bx^i _t;\alpha_t\onehot{j}+\sigma_t\mu^i,\sigma_t^2\Diag(v^i))
\propto
\exp\!\left(
-\frac{1}{2\sigma_t^2}\sum_{k=1}^K \frac{(\bx^{ik} _t -\alpha_t\onehot{j}^k-\sigma_t\mu^{ik})^2}{v^{ik}}
\right),
\]
we expand the quadratic term and drop all terms independent of $j$ to obtain the logits
\begin{equation}
\label{eq:diag-logits}
\log \denoiser{t}{}{\bx_t}[\param]^{ij}
=
\log \feature^{ij} _\param
+\frac{\alpha_t}{\sigma_t^2}\frac{\bx_t^{ij}-\sigma_t\mu^{ij}}{v^{ij}}
-\frac{\alpha_t^2}{2\sigma_t^2}\frac{1}{v^{ij}}
+ C(i,t)\eqsp.
\end{equation}
Equivalently, for each $(i, j) \in[L] \times [K]$,
\begin{equation}
\label{eq:denoiser-diag-site}
\denoiser{t}{}{\bx_t}[\param]^{ij}
= \frac{\targ{\param}{}{\onehot{j}}[i] \exp\big(\frac{\alpha_t}{\sigma^2 _t v^{ij}}(\bx_t - \sigma_t \mu^{ij} - \frac{\alpha _t}{2})\big)}{\sum_{k = 1}^K \targ{\param}{}{\onehot{k}}[i] \exp\big(\frac{\alpha_t}{\sigma^2 _t v^{ik}}(\bx_t - \sigma_t \mu^{ik} - \frac{\alpha _t}{2})\big)}
\end{equation}
which yields
\begin{mdframed}[style=propFrame]
              \begingroup
              \setlength{\abovedisplayskip}{0pt}
              \setlength{\abovedisplayshortskip}{0pt}
              \begin{equation*}
                     \denoiser{t}{}{\bx_t}[\param] \! = \! \softmax(\feature _\param + \frac{\alpha_t \lambda}{\sigma^2 _t}  \odot (\bx _t - \sigma_t \mu - \frac{\alpha _t}{2}  \mathbf{1})).
                     \label{eq:denoiser-sm-cov}
              \end{equation*}
              \endgroup
  \end{mdframed}
where $\lambda \in \rset^{L \times K}$ with $\lambda^{i, j} = 1 / v^{i, j}$ and $\mathbf{1} \in \rset^{L \times K}$ is the all-ones matrix.

\section{Variational guidance in masked diffusion models}
\label{apdx:variational_guidance_masked}
\paragraph{Masked diffusion models.} We recall the reader that the state space we consider is $\msx = \msv^L$ where the vocabulary of size $K$, $\msv$, is made of $K$ one-hot encoding $\onehot{1}, \dotsc, \onehot{K}$. For all the masked diffusion experiments we assume that the last state $\onehot{K}$ in $\msv$ is associated to the mask $\mask$. We further denote by $\Mask \in \msx$ the matrix with all masks, \emph{i.e.} $\Mask^i = \mask$. In order to align with the notation from previous works \cite{sahoo2024simple,shi2024simplified}, we define, for a row-stochastic matrix $\pi$, 
$
  \categorical(\bx; \pi) \eqdef \prod_{i = 1} ^L \langle \bx^i, \pi^i \rangle \eqsp.
$

Let $p$ be a target data distribution on $\msx$. We further assume that $p(\Mask) = 0$. Similarly to Gaussian diffusion (see \Cref{subsec:diffusion-models}), MDMs define a generative procedure for $p$ by specifying a continuous family of marginals $(p_t)_{t \in [0, 1]}$ that connects $p$ to the simple reference $\updelta_\Mask$. More precisely, the marginals are defined as 
\begin{equation} 
    \label{eq:mdm-marginal} 
    p^{\discr} _t(\bx_t) = \sum_{\bx_0 \in \msx} \fw{t\tbar 0}{\bx_0}{\bx_t}[\discr] p(\bx_0) \eqsp, \,\, \text{where} \,\, \fw{t\tbar 0}{\bx_0}{\bx_t}[\discr] \eqdef \categorical(\bx_t; \beta_t \bx_0 + (1 - \beta_t) \Mask) \eqsp,
\end{equation}
where $(\beta_t)_{t\in [0,1]}$ is a decreasing schedule satisfying $\beta_0 = 1$ and $\beta_1 \approx 0$. Define now the reverse transition 
\begin{equation}
    \label{eq:mdm-backward}
    p^\discr _{s\tbar t}(\bx_s | \bx_t) \eqdef \sum_{\bx_0 \in \msx} \fw{s\tbar 0, t}{\bx_0, \bx_t}{\bx_s}[\discr] p^\discr _{0\tbar t}(\bx_0 | \bx_t) \eqsp, \,\, \text{where} \,\, \fw{s\tbar 0, t}{\bx_0, \bx_t}{\bx_s}[\discr] \eqdef \categorical(\bx_s; \frac{\beta_s - \beta_t}{1 - \beta_t} \bx_0 + \frac{1 - \beta_s}{1 - \beta_t} \bx_t) \eqsp,
\end{equation}
and $p^\discr _{0\tbar t}(\bx_0 | \bx_t) \eqdef p(\bx_0) \fw{t\tbar 0}{\bx_0}{\bx_t}[\discr] / p^\discr _t(\bx_t)$ and is a probability distribution following the previous definitions. Next, we have that 
\begin{align*} 
    \sum_{\bx_t} p^\discr _{s\tbar t}(\bx_s | \bx_t) p^\discr _t(\bx_t) & = \sum_{\bx_0, \bx_t} \fw{s\tbar 0, t}{\bx_0, \bx_t}{\bx_s}[\discr] p^\discr _{0\tbar t}(\bx_0 | \bx_t) p^\discr _t(\bx_t) \\
    & = \sum_{\bx_0, \bx_t} \fw{s\tbar 0, t}{\bx_0, \bx_t}{\bx_s}[\discr] p_0(\bx_0) \fw{t\tbar 0}{\bx_0}{\bx_t}[\discr] \\
    & = \sum_{\bx_0} \prod_{i = 1}^L \sum_{\bx^i _t} \langle \frac{\beta_s - \beta_t}{1 - \beta_t} \bx^i _0 + \frac{1 - \beta_s}{1 - \beta_t} \bx^i _t \rangle \langle \bx^i _t, \beta_t \bx^i _0 + (1 - \beta_t) \mask \rangle \, p(\bx_0) \\ 
    & = \sum_{\bx_0} \prod_{i = 1}^L \langle \frac{\beta_s - \beta_t}{1 - \beta_t} \bx^i _0 + \frac{1 - \beta_s}{1 - \beta_t} \big(\beta_t \bx^i _0 + (1 - \beta_t) \mask \big) \rangle \, p(\bx_0) \\ 
    & = \sum_{\bx_0} \langle \bx_s, \beta_s \bx_0 + (1 - \beta_s) \Mask \rangle \, p(\bx_0) \\
    & = p^\discr _s(\bx_s)
\end{align*}
where the third equality follows by linearity of the scalar product and the last one by the definition of the marginal $p^\discr _s$. As a result, given a sample $X_t \sim p^\discr _t$, sampling from $p^\discr _{s\tbar t}(\cdot | X_t)$ yields an exact sample from $p^\discr _s$. However, sampling from this reverse transition is infeasible in practice because the posterior $p^\discr _{0\tbar t}(\cdot \mid X_t)$ is intractable. MDMs sidestep this issue by learning a factorized approximation to this posterior via cross-entropy minimization. Concretely, we introduce a factorized posterior that matches the one-dimensional marginals of the target posterior:
\begin{equation}
    \label{eq:mdm-factorization}
\hat{p}^\discr _{0\tbar t}(\cdot | \bx_t) \eqdef \prod_{i = 1}^L p^{\discr, i} _{0\tbar t}(\bx^i _0 | \bx_t) \eqsp, \quad \text{where} \quad p^{\discr, i} _{0\tbar t}(\bx^i _0 | \bx_t) \eqdef \sum_{\bx^{1:L \setminus i} _0} p^\discr _{0\tbar t}(\bx^{1:L} _0 | \bx_t) \eqsp.
\end{equation}
It is straightforward to show that for all $i \in [L]$, if $\bx^i _t = \mask$ then $p^{\discr, i} _{0\tbar t}(\cdot | \bx_t) = \updelta_{\bx^i _t}$. Following \cite{sahoo2024simple} we call this property \emph{carry-over unmasking}. 

The factorization \eqref{eq:mdm-factorization} can be learned straightforwardly with a neural network, and samples from the resulting approximation can be generated efficiently. We denote the approximation $p^{\discr, \psi} _{0\tbar t}(\cdot | \bx_t)$. Plugging this approximation in \eqref{eq:mdm-backward} yields a tractable reverse transition, which, after discretization and iterating it to propagate from one timestep to the next, defines an approximate sampler for the data distribution starting from $\updelta_\Mask$.

\paragraph{Inference-time guidance.} We essentially follow the methodology proposed in \cite{murata2025g2d2}. Let us now assume that the target distribution is $\mu(\bx_0) \propto \exp(-r(\bx_0)) p(\bx_0)$, with $r$ a positive reward function, and assume also that we have access to a pre-trained MDM for $p$. In order to have an MDM for $\mu$ we need, following the previous section, to have a factorized approximation of the posterior 
$$
\mu^\discr _{0\tbar t}(\bx_0 | \bx_t) \propto \mu(\bx_0) \fw{t\tbar 0}{\bx_0}{\bx_t}[\discr] \propto \exp(-r(\bx_0)) p^\discr _{0\tbar t}(\bx_0 | \bx_t) \eqsp.
$$
\cite{murata2025g2d2} proposes to learn a factorized variational approximation of this posterior by minimizing, at inference time and for each given $\bx_t$, an objective over the parameters of the factorized family. Crucially, this optimization is performed on the fly for the current test instance and each time step while keeping the pre-trained MDM for $p$ fixed: no additional offline training or dataset-level fine-tuning is required, and the variational parameters are discarded once the sample is produced.

More formally, let $\targ{\param}{}{}$ be a factorized distribution \eqref{eq:factorized-joint} parameterized through its logits $\feature_\param$. Then at each timestep $t$, \cite{murata2025g2d2} propose to optimize the KL divergence $\kldivergence{\targ{\param}{}{}}{\mu^\discr _{0\tbar t}(\cdot | \bx_t)}$, which is equivalent to the objective \eqref{eq:guidance-objective} upon replacing $p^\discr _{0\tbar t}(\cdot | \bx_t)$ with the pre-trained factorized transition $p^{\discr, \psi} _{0\tbar t}(\cdot | \bx_t)$. Once the factorized distribution is optimized, a sample $\hat{X}_0$ is drawn from it and the next sample at timestep $s$ is obtained by sampling $\fw{s\tbar 0, t}{\hat{X}_0, \bx_t}{\cdot}$ defined previously. Another alternative presented in \cite{murata2025g2d2} consists in simply sampling the next state from $\fw{s\tbar 0}{\hat{X}_0}{}$. See \Cref{alg:g2d2} where we summarize the algorithm proposed in \cite{murata2025g2d2}.

\begin{algorithm}[t]
\caption{G2D2}
\label{alg:g2d2}
\begin{algorithmic}[1]
\REQUIRE Pre-trained factorized posterior $p^{\discr,\psi}_{0\tbar t}$; reward $r$; grid $(\ell_k)_{k=0}^{M-1}$ with $\ell_0 = 0$ and $\ell_{M-1} = 1$; schedule $(\beta_{\ell_k})_{k = 0} ^{M-1}$; inner optimization steps $J$
\STATE $\bx \gets \Mask$
\FOR{$k = M - 1$ {\bfseries down to} $0$}
    \STATE $\feature_\param \gets b$ \,\, where $b^{ij} = \log p^{\discr, \psi, i} _{0\tbar \ell_{k+1}}(\onehot{j}|\bx)$
    \FOR{$j=1,2,\dotsc,J$}
        \STATE $\feature_\param \gets \textsc{Optimize}\!\left(\param;\ \param \mapsto \pE_{\targ{\param}{}{}}[r(X_0)] + \kldivergence{\targ{\param}{}{}}{\mdm{0\tbar \ell_{k+1}}{\bx}{}[\psi]}\right)$
    \ENDFOR
    \STATE Sample $\hat{\bx}_0 \sim \targ{\param}{}{}$
    \STATE Sample $\bx_{\ell_k} \sim \fw{\ell_k\tbar 0}{\hat{\bx}_0}{\cdot}[\discr]$ \label{line:forward-step}
    \STATE Set $\bx \gets \bx_{\ell_k}$
\ENDFOR
\STATE \textbf{return} $\bx$
\end{algorithmic}
\end{algorithm}

\section{Impact of $t_1$ and $n$ on our gradient estimator}
\label{apdx:cat-vae}
In this section, we investigate the impact of the two main hyperparameters of our estimator $n$ and $t_1$ and draw practical conclusion as to how to set them. 

\subsection{Impact of $t_1$ and $n$ on the quality of the approximation of $\targ{\param}{}{}$}
\label{sec:impact_t1_n}

In most experiments we use a small number of time steps, we typically set $n\in\{3,\dots,9\}$.
This raises the question of whether such coarse discretizations suffice to obtain accurate samples from
$\targ{\param}{}{}$.

In our setting, the denoiser (and thus the final reverse transition from $t_1$ to $0$) is available in closed form and the target
distribution is simple. Empirically, this makes a small number of steps sufficient to obtain samples whose empirical
law is nearly indistinguishable from $\targ{\param}{}{}$.

\paragraph{Empirical protocol.}
Since the categorical distributions we are interested in factorize over the dimensions, it is sufficient to study the case $L=1$. Therefore, we draw random logits $\param\in\rset^{K}$ with $K=100$ and form the target categorical distribution
$\targ{\param}{}{}=\mathrm{softmax}(\param)$. For each of $10$ seeds, we generate a reference set of $10{,}000$
i.i.d.\ samples from $\targ{\param}{}{}$ and compare it to (i) $10{,}000$ samples produced by \algoname, and
(ii) $10{,}000$ samples produced by \redgecov. We also generate a second i.i.d.\ set of $10{,}000$ samples from
$\targ{\param}{}{}$ to quantify finite-sample fluctuations. For each method and each $(t_1,n)$, we compute the empirical
Wasserstein distance between the corresponding empirical laws and the reference empirical law. Results are reported in
Fig.~\ref{fig:wasserstein_n_steps}.

\begin{figure}[H]
    \centering
    \includegraphics[width=1.0\textwidth]{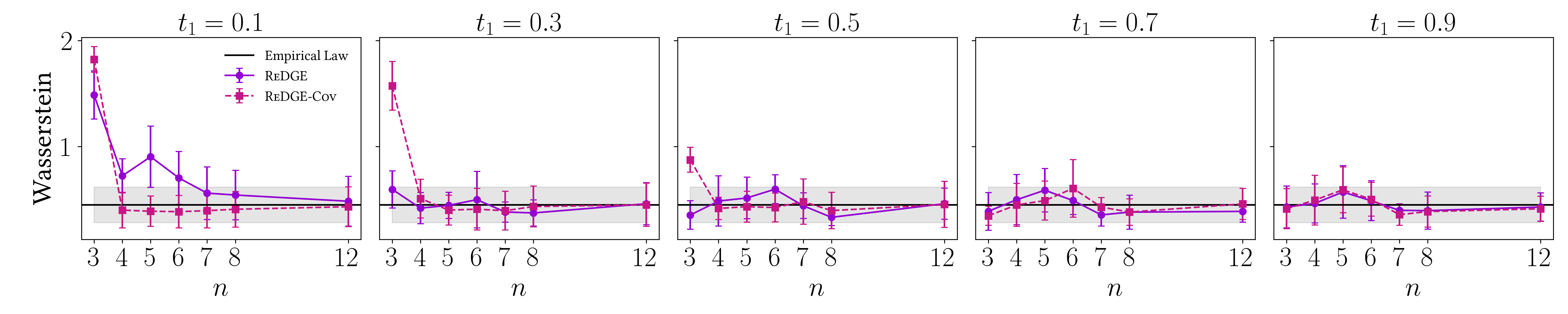}
    \caption{Empirical Wasserstein distance between the laws induced by \algoname/\redgecov\ and a reference empirical law
    drawn from $\targ{\param}{}{}$, for different $(t_1,n)$. The gray band shows the mean $\pm$ std.\ of the Wasserstein
    distance between two independent empirical draws from $\targ{\param}{}{}$ (finite-sample baseline).}
    \label{fig:wasserstein_n_steps}
\end{figure}

\paragraph{Discussion.}
The gray band in Fig.~\ref{fig:wasserstein_n_steps} corresponds to the empirical standard deviation between two independent
empirical measures drawn from $\targ{\param}{}{}$, that was computed using the different sets of i.i.d. samples from $\targ{\param}{}{}$. Values within this band indicate that the sampler's empirical law is
statistically indistinguishable from the target at this sample size. We observe that for most configurations, both
\algoname\ and \redgecov\ fall within this baseline. The main failure mode occurs for small $t_1$ combined with very few
steps, roughly $t_1<0.3$ and $n<4$.

This dependence on $t_1$ is consistent with how sampling is performed. In the forward pass we first approximate the
marginal $X_{t_1}$ by discretizing the diffusion on $[1,t_1]$ with $n$ steps, and then sample
$X_0 \sim \targ{\,0\tbar t_1}{X_{t_1}}{}[\param]$. The latter step is exact in our setting (closed-form denoiser),
so the sampling accuracy is governed solely by the discretization error in the approximation of the marginal law of $X_{t_1}$. Holding $n$ fixed,
this error increases as the interval length $|1-t_1|$ increases. Hence the most challenging regime is precisely small
$t_1$ with small $n$.

\begin{remark}
In Fig.~\ref{fig:wasserstein_n_steps}, we omit \redgemax\ because its sampling procedure is identical to \algoname\. It
differs only in the gradient estimator and therefore does not affect the forward-pass approximation quality of
$\targ{\param}{}{}$.
\end{remark}

Overall, these results suggest that when accurate forward samples from (approximately) $\targ{\param}{}{}$ are required,
a conservative choice such as $t_1\ge 0.3$ and $n\ge 4$ is sufficient in this regime. This choice is also aligned with
Proposition \ref{prop:vanishing-gradient}: overly small $t_1$ may lead to vanishing or unstable gradients, even when sampling remains accurate.

Overlooking the mismatch between the sampling distribution induced by a relaxation and the target law
$\targ{\param}{}{}$ can be benign in some settings, but detrimental in others. A particularly important case is ELBO
optimization, where this issue has been discussed in \citet{maddison2016concrete,tucker2017rebar}. Indeed, if one
naively evaluates the objective on samples produced by a non-exact sampler---such as the soft \gumbelsoft\ relaxation or
our diffusion-based samplers---the resulting Monte Carlo estimates generally do \emph{not} correspond to the ELBO
associated with $\targ{\param}{}{}$. The reason is that the ELBO contains a KL term defined under the variational
distribution, whereas the samples used to estimate the expectation are drawn from a different (relaxed) law. If not
accounted for, this discrepancy can decouple apparent optimization progress from actual improvements in the underlying
model. We illustrate this phenomenon empirically in the categorical VAE experiments on binarized MNIST \cite{kingma:welling:2014,rezende2015variational}. 

\paragraph{Categorical VAE.} Following the setups of
\citet{tucker2017rebar,grathwohl2018relax,liu2023reinmax}. The encoder maps an input $y\in\{0,1\}^{784}$ to logits
$\feature_\param(y)\in\rset^{L\times K}$ and defines the mean-field posterior $\targ{\param}{y}{}$ in
\eqref{eq:factorized-joint}. The decoder maps $z\in\rset^{L\times K}$ to pixel logits $\eta_\phi(z)\in\rset^{784}$ and
defines $p_\phi(\cdot\mid z)
=
\prod_{j=1}^{784}
\mathrm{Bernoulli}\bigl(\sigma(\eta_\phi(z)^j)\bigr)$,
where $\sigma$ is the sigmoid function. Given a dataset $\{Y_n\}_{n=1}^N$, we jointly optimize $(\param,\phi)$ by
minimizing the negated ELBO
\begin{equation*} \objFun(\param;\phi) \eqdef -\frac{1}{N}\sum_{n=1}^N \pE_{\targ{\param}{Y_n}{}} \big[ \log p_\phi(Y_n\mid Z_n) \big] + \frac{1}{N} \sum_{n=1}^N  \mathrm{KL}\big( \targ{\param}{X_n}{} \,\big\|\, p_z \big) \eqsp, 
\end{equation*}
where $p_z\eqdef \mathrm{Uniform}(\msx)$ is the discrete uniform prior on $\msx$.

This highlights a subtle but important issue. If we use samples from our sampler, we would actually be optimizing $-\frac{1}{N}\sum_{n=1}^N \pE_{\hat{\pi}_{\!\theta}(\cdot \mid Y_n)} \big[ \log p_\phi(Y_n\mid Z_n) \big] + \frac{1}{N} \sum_{n=1}^N  \mathrm{KL}\big( \targ{\param}{Y_n}{} \,\big\|\, p_z \big)$ where $\hat{\pi}_{\!\theta}(\cdot \mid Y_n)$ is the law of $\transform^\theta_{0}(X_1)$ which is not exactly equal to $\targ{\param}{Y_n}{}$ unless we are using an infinite number of steps. Therefore the objective that we are optimizing is not an ELBO. We emphazise that this is not specific to our estimator but is an intrinsic fact about any soft-reparameterization of a categorical distribution. Indeed, as discussed in \cite{maddison2016concrete, tucker2017rebar} this mismatch also happens when using a \gumbelsoft\ relaxation in its soft version. 

To empirically validate this discussion, we report (i) the best training loss computed using samples from each sampler,
and (ii) the corresponding ``true'' loss, computed at each epoch using independent samples drawn directly from
$\targ{\param}{Y_n}{}$. Results are shown in
Tables~\ref{tab:vae-best-loss-H-24-2}, \ref{tab:vae-best-true-loss-H-24-2},
\ref{tab:vae-best-loss-H-48-2}, and \ref{tab:vae-best-true-loss-H-48-2}. We use a learning rate of $10^{-4}$ and 200 epochs with $N=200$. 
They match the above analysis: for small $t_1$ and $n$, the reported training loss can appear artificially improved
without a commensurate improvement in the true objective. In contrast, for moderately larger $t_1$ or $n$, the training
loss becomes well aligned with the true loss, and our samplers achieve performance comparable to, or better than, the
baselines.

\begin{table}[H]
\centering
\caption{VAE configs filetered by best training loss value (L=24, K=2).}
\label{tab:vae-best-loss-H-24-2}
\small
\setlength{\tabcolsep}{6pt}
\renewcommand{\arraystretch}{1.15}
\begin{tabular}{lccc}
\toprule
Sampler & Hyperparameter & Best loss & Best true loss \\
\midrule
\textsc{ReDGE} & $n=3$, $t_1=0.1$ & 87.3197 & 174.5299 \\
\textsc{ReinDGE} & $n=3$, $t_1=0.1$ & \textbf{87.2011} & 173.2382 \\
\textsc{ReDGE-Cov} & $n=3$, $t_1=0.3$ & 87.5007 & 149.6025 \\
\textsc{Gumbel-Softmax} & $\tau=0.4$ & 94.4864 & \textbf{94.4623} \\
\textsc{ReinMax} & \textemdash & 95.5282 & 95.5106 \\
\textsc{ST} & \textemdash & 108.1155 & 108.0957 \\
\bottomrule
\end{tabular}
\end{table}

\begin{table}[H]
\centering
\caption{VAE configs filtered by best true loss value (L=24, K=2).}
\label{tab:vae-best-true-loss-H-24-2}
\small
\setlength{\tabcolsep}{6pt}
\renewcommand{\arraystretch}{1.15}
\begin{tabular}{lccc}
\toprule
Sampler & Hyperparameter & Best loss & Best true loss \\
\midrule
\textsc{ReDGE} & $n=5$, $t_1=0.3$ & 94.3861 & 94.8035 \\
\textsc{ReinDGE} & $n=3$, $t_1=0.3$ & \textbf{92.6446} & \textbf{93.6885} \\
\textsc{ReDGE-Cov} & $n=7$, $t_1=0.6$ & 93.4340 & 93.8271 \\
\textsc{Gumbel-Softmax} & $\tau=0.4$ & 94.4864 & 94.4623 \\
\textsc{ReinMax} & \textemdash & 95.5282 & 95.5106 \\
\textsc{ST} & \textemdash & 108.1155 & 108.0957 \\
\bottomrule
\end{tabular}
\end{table}

\begin{table}[H]
\centering
\caption{VAE configs filtered by best training loss value (L=48, K=2).}
\label{tab:vae-best-loss-H-48-2}
\small
\setlength{\tabcolsep}{6pt}
\renewcommand{\arraystretch}{1.15}
\begin{tabular}{lccc}
\toprule
Sampler & Hyperparameter & Best loss & Best true loss \\
\midrule
\textsc{ReDGE} & $n=3$, $t_1=0.1$ & 70.8737 & 171.3284 \\
\textsc{ReinDGE} & $n=3$, $t_1=0.1$ & \textbf{71.0220} & 172.2232 \\
\textsc{ReDGE-Cov} & $n=3$, $t_1=0.2$ & 71.6501 & 168.2566 \\
\textsc{Gumbel-Softmax} & $\tau=0.5$ & 88.1752 & 88.2038 \\
\textsc{ReinMax} & \textemdash & 87.7028 & \textbf{87.6808} \\
\textsc{ST} & \textemdash & 99.1930 & 99.2280 \\
\bottomrule
\end{tabular}
\end{table}

\begin{table}[H]
\centering
\caption{VAE configs filtered by best true loss value (L=48, K=2).}
\label{tab:vae-best-true-loss-H-48-2}
\small
\setlength{\tabcolsep}{6pt}
\renewcommand{\arraystretch}{1.15}
\begin{tabular}{lccc}
\toprule
Sampler & Hyperparameter & Best loss & Best true loss \\
\midrule
\textsc{ReDGE} & $n=5$, $t_1=0.4$ & 87.9557 & 88.8152 \\
\textsc{ReinDGE} & $n=5$, $t_1=0.5$ & \textbf{86.8424} & \textbf{86.9511} \\
\textsc{ReDGE-Cov} & $n=5$, $t_1=0.6$ & 87.8160 & 89.0508 \\
\textsc{Gumbel-Softmax} & $\tau=0.5$ & 88.1752 & 88.2038 \\
\textsc{ReinMax} & \textemdash & 87.7028 & 87.6808 \\
\textsc{ST} & \textemdash & 99.1930 & 99.2280 \\
\bottomrule
\end{tabular}
\end{table}

\subsection{Impact of $t_1$ and $n$ on the stability and quality of the estimated gradient}
\label{sec:impact_t1_n_grad}

We study how the cutoff time $t_1$ and the number of discretization steps $n$ (and their interaction) affect the
stability and quality of our gradient estimators.

Empirically, Fig.~\ref{fig:sudoku_heatmap} shows a clear trend: for sufficiently large $t_1$, increasing $n$ has little
effect on performance, whereas for $t_1\to 0$ larger $n$ can \emph{degrade} performance. At first glance this may appear
counter-intuitive---in diffusion models, finer discretizations are often beneficial. In our setting, however, the
performance drop is explained by gradient instabilities rather than sampling error.

A natural hypothesis is that increasing $n$ forces backpropagation through a longer diffusion trajectory, akin to
differentiating through a deeper computational graph, which may lead to vanishing/exploding gradients as it is the case in recurrent neural network training. While plausible,
this explanation is incomplete: it does not account for why the effect is pronounced when $t_1$ is small yet largely
negligible for $t_1\gtrsim 0.5$.

Instead, the behavior is consistent with the mechanism behind the proof of Proposition~\ref{prop:vanishing-gradient}. When $t_1$ is close to $0$, the
reverse transition becomes highly concentrated and gradients associated with the final step (from $t_1$ to $0$) can be
unstable. Increasing $n$ refines the discretization on $[1,t_1]$, making consecutive times $t_2,t_3,\ldots$ closer to
$t_1$. This effectively composes several near-terminal transitions, so that the instabilities are propagated and amplified backward through additional steps. As a result, for small $t_1$, larger $n$ can exacerbate
vanishing/exploding behavior and lead to poorer optimization. In contrast, when $t_1$ is bounded away from $0$, the law of
$X_{t_1}$ is smoother, and the gradient becomes largely insensitive to $n$; in that regime, taking more steps is at best
marginally beneficial, and often unnecessary since $n=3$ already yields near-perfect samples (cf.\ \S\ref{sec:impact_t1_n}) as indicated by Fig.~\ref{fig:sudoku_heatmap}.



\begin{figure}
    \centering
    \includegraphics[width=0.8\textwidth]{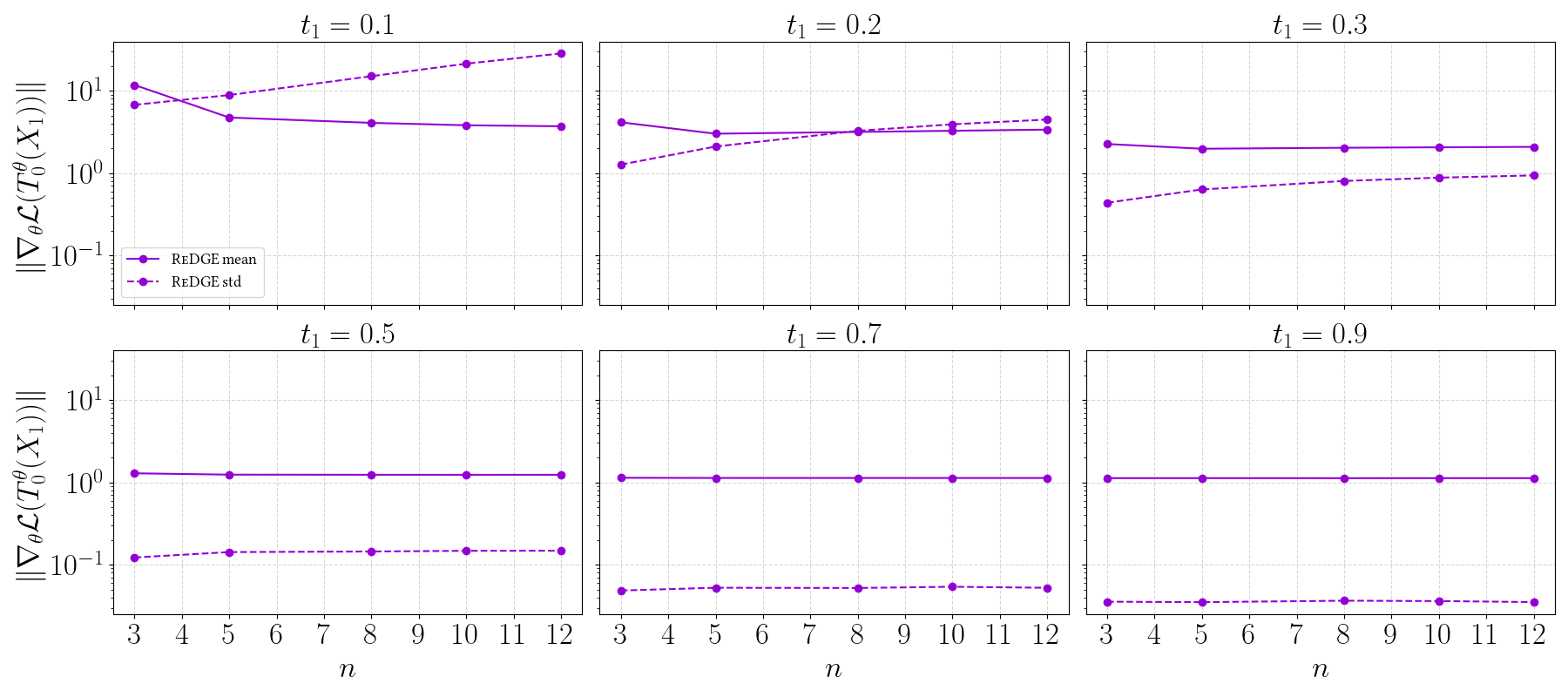}\\[0.5em]
    \includegraphics[width=0.8\textwidth]{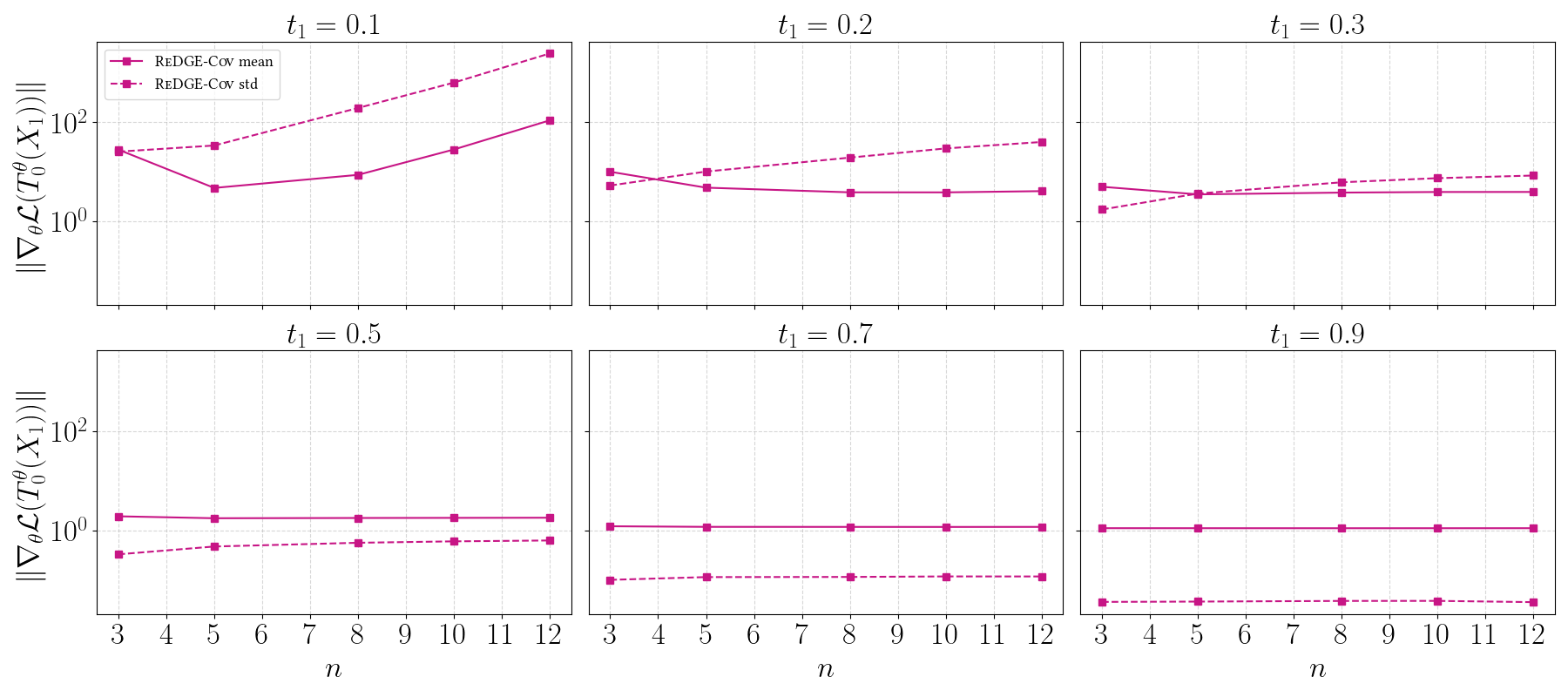}
    \caption{Mean and standard deviation (across $500$ samples) of $\nabla_\param \mathcal{L}(X_i)$ as a function of $(t_1,n)$. Top: \algoname, bottom: \redgecov.}
    \label{fig:t1_n_grad_vanishing}
\end{figure}

\paragraph{Empirical verification.}
To corroborate this prediction, we consider the polynomial loss $\mathcal L$ from
Appendix~\ref{appendix:additional_results}. We fix $\param\in\rset^{L\times K}$ with $L=10$ and $K=2$. For each
configuration $(t_1,n)$, we draw a batch of $500$ samples $(X_i)_{i\le 500}$ from our sampler and compute per-sample
gradients $\nabla_\param \mathcal L(X_i)$. Fig.~\ref{fig:t1_n_grad_vanishing} reports the mean and standard deviation
across the batch, and matches the behavior derived theoretically: for small $t_1$, increasing $n$
substantially increases gradient dispersion, whereas for larger $t_1$ the gradient statistics become largely insensitive
to $n$.

\section{Further experiments}
\label{appendix:additional_results}
\paragraph{Polynomial programming}
We illustrate our approach on the polynomial programming toy problem also considered by
\citet{tucker2017rebar,grathwohl2018relax,paulus2020raoblackwell,liu2023reinmax}.
In this setting, for all $i\in[L]$, the distribution $\targ{\param}{}{}[i]$ is given by
\(
\targ{\param}{}{}[i]
=
\mathrm{Bernoulli}\!\left(
\frac{\exp(\param^{i1})}{\exp(\param^{i1})+\exp(\param^{i2})}
\right),
\)
with $\param\in\rset^{L\times 2}$ (here $K=2$).
Fixing $c=0.45$ and $p\ge 1$, we consider
\begin{equation}
\label{eq:polyprog}
\min_{\param\in\rset^{L\times 2}}
\frac{1}{L}\,
\pE_{\targ{\param}{}{}}\!\left[
\bigl\|X^{\cdot,2}-c\,\mathbf{1}_L\bigr\|_p^p
\right],
\end{equation}
where $X^{\cdot,2}=[X^{1,2},\ldots,X^{L,2}]^\top$ and $\mathbf{1}_L=[1,\ldots,1]^\top$.
The minimum is attained in the limit $\param^{i1}\to+\infty$ for all $i\in[L]$, that is when $X^{i,2}=0$ almost surely for all $i\in[L]$.
We report results in Fig.~\ref{fig:poly-prog} with $L=128$. We use a learning rate of $0.05$ for all methods. 

\begin{figure}[H]
    \centering
    \includegraphics[width=0.8\textwidth]{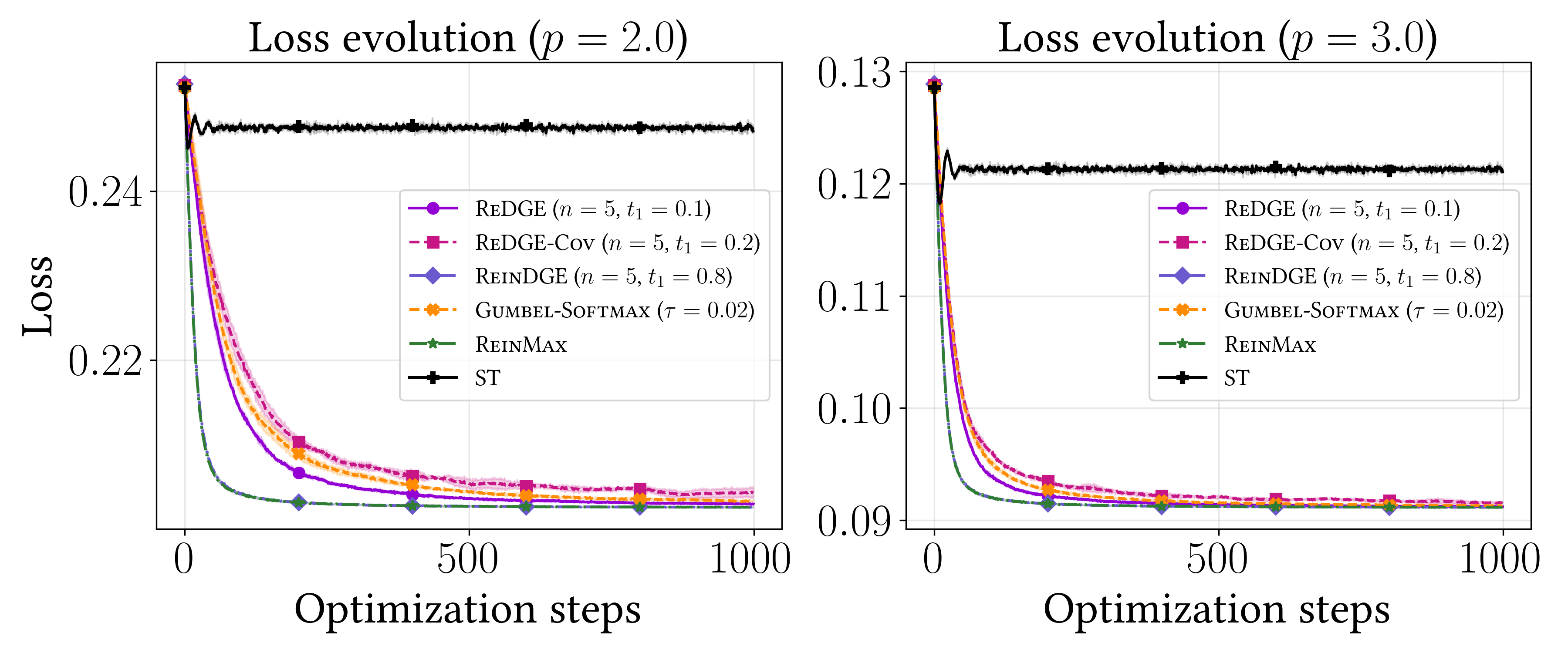}
    \caption{Polynomial programming benchmark for different values of the exponent $p$.}
    \label{fig:poly-prog}
\end{figure}

\emph{Results.}\,
We observe that \sthrough\ fails to reach the optimum and plateaus early, whereas \reinmax\ achieves near-optimal
performance. \algoname\ and \redgecov\ perform on par with \gumbelsoft, and with appropriate hyperparameter choices
converge to the correct solution (with \redgecov\ typically converging slightly faster). Finally, \redgemax\ also attains
near-optimal performance, which is consistent with the fact that it recovers \reinmax\ as a special case.

Following \cite{liu2023reinmax}, we use a batch size of 256, a length of 128, 2 categorical dimensions and a vector $c := (c_1,\dots,c_L) \in \mathbb{R}^L$, $\forall i, c_i = 0.45$.

\begin{remark}
    \label{remark:poly_prog}
We highlight several limitations of this example, which, to our knowledge, have not been explicitly discussed in the gradient-estimation literature and somewhat undermine its relevance as a stand-alone evaluation:

(1).\, The objective is separable and identical across dimensions, so the gradient can be recovered from only two loss evaluations (one per coordinate value), instead of the usual $K^L$, which in this case would be $2^{128}$.

(2).\, 
The \sthrough\ estimator performs poorly in this experiment. However, note that the discrete objective is determined entirely by the values of $f$ at the vertices of the product simplex. Consequently, any extension on $\rset^{L \times 2}$ that matches $f$ on these vertices defines the same discrete problem, yet may induce a very different optimization landscape. As an illustration, consider the extension
\[
\txts f:\ \bx\in\mathbb{R}^{L\times 2}\ \mapsto\ \frac{1}{L}\sum_{i=1}^L |c|^p \bx^{i1} + |1 - c|^p \bx^{i2}\eqsp,
\]
which is linear and coincides with \eqref{eq:polyprog} on the vertices. For this relaxation, hard ST yields a low-variance unbiased gradient estimator (and soft ST yields the exact gradient) that performs almost optimally. We show the results with this linear relaxation in Fig \ref{fig:poly-prog_lin}.

(3).\, Finally, \textsc{ReinMax} is based on a second-order Taylor approximation of $f$. Consequently, when $p=2$ in \eqref{eq:polyprog}, the estimator is exact (see \Cref{appendix:reinmax}). For other values of $p$, the estimator is no longer exact, although it often remains a close approximation in practice. This exactness is specific to quadratic objectives and does not extend to general functions $f$.
\end{remark}

We see in Figure \ref{fig:poly-prog_lin} that in this setting \sthrough\  achieves the best performance.

\begin{figure}[H]
    \centering
    \includegraphics[width=0.8\textwidth]{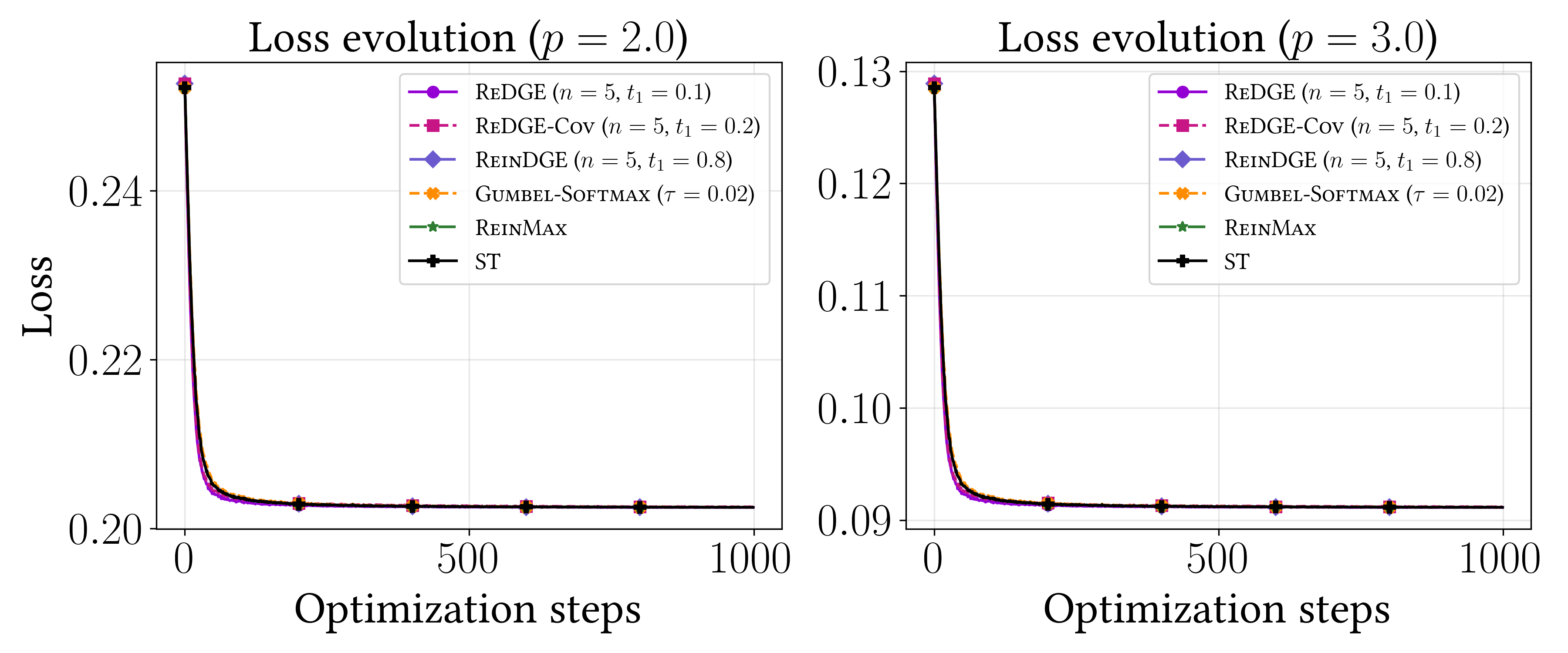}
    \caption{Polynomial programming benchmark for different values of the exponent $p$ with the linear relaxation.}
    \label{fig:poly-prog_lin}
\end{figure}

\section{Experimental Details}
\subsection{Hyperparameter sweep}
\label{apdx:hyperparameters}
For all the our diffusion samplers \algoname, \redgecov\ and \redgemax\ we sweep over the timestep $\tk{1} \in \{0.3, 0.5, 0.7, 0.9\}$, the number of diffusion steps $n-1$ with $n \in \{3, 5, 7, 9\}$. For the \gumbelsoft\ baseline we sweep the temperature parameter $\tau \in \{0.01, 0.05, 0.1, 0.2, \dotsc, 1\}$. For all the baselines we also sweep over the learning rate used in Adam in the range $\{0.01, 0.05, 0.1, 0.5\}$. For all algorithms, we estimate gradients with respect to the untempered target $\targ{\param}{}{\bx} \propto \exp(\langle \bx, \feature_\param \rangle)$ and do not use the $\tau$-tempered target with p.m.f.\ proportional to $\exp(\langle \bx, \feature_\param \rangle/\tau)$, since this would require sweeping $\tau$ for each method and would change the objective across algorithms. Consequently, \reinmax\ and \sthrough\ require no temperature (or relaxation) hyperparameter. The hyperparameters are given in \Cref{tab:best_hparams_mdm_sudoku}, \ref{tab:best_hparams_basic_sudoku} and \ref{tab:best_hparams_maskgit}. 

\begin{table}[H]
\centering
\caption{Hyperparameters with the lowest AUC of the average violations for the MDM Sudoku experiment.}
\label{tab:best_hparams_mdm_sudoku}
\begin{tabular}{ll}
\toprule
Algorithm & Hyperparameters \\
\midrule
\textsc{ReDGE}         & $n=3$, $t_1=0.7$, lr$=0.1$ \\
\textsc{ReinDGE}       & $n=9$, $t_1=0.5$, lr$=0.01$ \\
\textsc{ReDGE-Cov}     & $n=5$, $t_1=0.9$, lr$=0.05$ \\
\textsc{Gumbel-Softmax}& $\tau=1.0$, lr$=0.05$ \\
\textsc{ReinMax}       & lr$=0.01$ \\
\textsc{ST}            & lr$=0.1$ \\
\bottomrule
\end{tabular}
\end{table}

\begin{table}[H]
\centering
\caption{Hyperparameters with the lowest AUC of the average violations for the Sudoku (without MDM) experiment.}
\label{tab:best_hparams_basic_sudoku}
\begin{tabular}{ll}
\toprule
Algorithm & Hyperparameters \\
\midrule
\textsc{ReDGE}         & $n=3$, $t_1=0.5$, lr$=0.1$ \\
\textsc{ReinDGE}       & $n=9$, $t_1=0.5$, lr$=0.1$ \\
\textsc{ReDGE-Cov}     & $n=3$, $t_1=0.5$, lr$=0.1$ \\
\textsc{Gumbel-Softmax}& $\tau=0.5$, lr$=0.1$ \\
\textsc{ReinMax}       & lr$=0.1$ \\
\textsc{ST}            & lr$=0.05$ \\
\bottomrule
\end{tabular}
\end{table}

\begin{table}[H]
\centering
\caption{Hyperparameters for the highest AUC for the CLIP score for the MaskGIT experiment.}
\label{tab:best_hparams_maskgit}
\begin{tabular}{ll}
\toprule
Algorithm & Hyperparameters \\
\midrule
\textsc{ReDGE}         & $n=3$, $t_1=0.9$, lr$=0.5$ \\
\textsc{ReDGE-Cov}     & $n=3$, $t_1=0.9$, lr$=0.5$ \\
\textsc{ReinDGE}       & $n=3$, $t_1=0.9$, lr$=0.5$ \\
\textsc{Gumbel-Softmax}& $\tau=0.9$, lr$=0.5$ \\
\textsc{ST}            & lr$=0.5$ \\
\textsc{ReinMax}       & lr$=0.5$ \\
\bottomrule
\end{tabular}
\end{table}

\subsection{Sudoku experiment} 
We follow \citet{ye2024beyond} and train a masked diffusion model (MDM) to approximate the distribution $p(\cdot |\ctxt)$ over valid completions of an incomplete Sudoku grid $\ctxt$, viewed as a categorical distribution on $\msv^{81}$; \emph{i.e.} each cell is represented by a categorical distribution. $\msv$ denotes the set of one-hot vectors of length $10$, where the last one hot vector $\onehot{10}$ is reserved for the mask token $\mask$. Let $\mathcal{G}$ be the collection of 27 groups (9 rows, 9 columns, 9 blocks), where each $g\in\mathcal{G}$ is a subset of cell indices $i\in[81]$. We define the digit-count function
$
s_g : X\in\msv^{81} \mapsto \sum_{i \in \mathcal{G}} P X^i ,
$
 where $P$ removes the last coordinate of $X$ (due to the presence of the mask in the vocabulary). $s_g$ returns the all-ones vector if and only if the digits in $g$ form a valid permutation, \ie\ each digit appears exactly once. We consider the reward $
\txts \smash{r(\bx) \eqdef 
\sum_{g \in \mathcal{G}} \bigl\| s_g(\bx)-\mathbf{1}_9 \bigr\|^2}
$.
The setup in \citet{ye2024beyond} consists in collect one million solved games\footnote{https://www.kaggle.com/datasets/bryanpark/sudoku} and then using the first 100k as training set and the subsequent 1000 as the testing set. The GPT-2 architecture (bi-directional) is used for the MDM with 6 million parameters. We train the model to convergence, corresponding to a solve-rate of 98\%. For all methods we use a single Monte Carlo sample to estimate the loss. The hyperparameters are given in \Cref{tab:best_hparams_mdm_sudoku} and \ref{tab:best_hparams_basic_sudoku}. 

For the guidance algorithm, we implement \Cref{alg:g2d2} using the bridge transition $\fw{\ell_k \tbar 0, \ell_{k+1}}{\hat\bx_0, \bx_{\ell_{k+1}}}{}[\discr]$ in place of $\fw{\ell_k\tbar 0}{\hat\bx_0}{}[\discr]$. In addition, rather than sampling exactly from $\targ{\param}{}{}$ (which already performs well), we found that using its MAP estimate yields slightly better results; specifically, we set $\hat\bx_0 \gets \argmax_{\bx \in \msx} \targ{\param}{}{\bx}$. We use the schedule $\beta_{\ell_k} = 1 / (k+1)$ and $M = 20$. 

Finally, we make a few comments regarding the experiment without the pre-trained MDM. At first sight, taking $\targ{\param}{}{}$ to be fully factorized across cells may seem too restrictive, since valid Sudoku grids exhibit strong dependencies. The key point, however, is that while $\targ{\param}{}{}$ is mean-field \emph{conditional on a fixed} $\param$, the learning dynamics are not: the loss is highly non-separable, and each stochastic gradient step updates many cell logits jointly through shared row/column/block constraints. Consequently, dependencies are introduced through the optimization procedure itself. During training, updates are computed from random samples of the grid. Therefore the parameter iterate is itself random: after $T$ steps, $\theta_T$ is a random variable defined by the stochastic recursion induced by the optimizer. The distribution of an output grid produced after $T$ steps from initialization \(\param_0\) is thus the mixture $\pE\!\left[\targ{\theta_T}{}{} | \theta_0\right]$. Although each component in this mixture factorizes, the mixture does not: the shared optimization noise couples all coordinates through the non-separable constraints, allowing the resulting predictor to place most of its mass on globally consistent Sudoku configurations despite the mean-field parameterization.

\subsection{MaskGIT experiment} 
We study inference-time reward guidance for discrete image generation using a pretrained, class-conditional MaskGIT model \citep{chang2022maskgit}. Concretely, we rely on the public implementation and pretrained checkpoints from \citet{besnier2025halton}, which operates in the discrete latent space of a VQ-VAE tokenizer. Images at resolution $384\times384\times3$ are represented as a grid of $L=576$ discrete codes, each taking values in a codebook of size $K=16384$. Each code is embedded in $\mathbb{R}^d$ with $d=8$; letting $E\in\mathbb{R}^{K\times d}$ denote the embedding matrix, a token sequence $[K]^L$ corresponds to a latent embedding $\bx\cdot E\in\mathbb{R}^{L\times d}$ where $\bx \in \msx$ is the one-hot encoding of the token sequence. The latent embedding is then decoded back to pixel space by the VQ-VAE decoder. We use a text-conditioned reward based on CLIP \citep{radford2021learning,hessel2021clipscore}: given a discrete sample $\bx$, we decode it to an image and compute its CLIP score with respect to a target prompt. For each baseline we performed hyperparameter sweeps over 50 images and then we ran the best configuration with 200 images; see \Cref{tab:best_hparams_maskgit}. We use the following prompt template: 
\paragraph{Prompt construction.} Prompts are sampled from the ImageNet class map using a deterministic template generator: 
We draw a class with replacement, keep the primary label (text before the first comma), and fill one of four template
  families—plain “a photo of a/an …” variants, color-focused captions (45\% probability), size adjectives (25\%), or combined color+size (10\%). Colors are restricted to CLIP-friendly adjectives (red, blue, green, yellow, black,
  white, silver, gold) and sizes to {small, big}; the remaining probability mass uses simple photographic descriptors (plain background, natural light, shallow depth of field, etc.). This yields caption-like prompts that remain grounded and avoid style cues beyond color/size.

For the guidance algorithm, we implement \Cref{alg:g2d2} with two modifications: after the step $\hat\bx_0 \sim \targ{\param}{}{}$, we apply carry-over unmasking; and for the prior we use \emph{classifier-free guidance} with guidance scale 3. Since the vocabulary $K$ is large, the covariance matrix used in the initialization of \redgecov\ becomes degenerate so we instead clamp its diagonal elements to the lowerbound $0.1$.

\subsection{Runtime}
\label{subsec:runtime}
We report the runtime for the guidance experiments. Our ReDGE gradient estimators trade a modest amount of extra compute for improved performance, without increasing memory. 
\begin{table}[h]
\centering

\begin{minipage}[t]{0.48\linewidth}
\centering
\caption{Estimated runtime for the Sudoku MDM guidance experiment on a batch of 1000 Sudokus.}
\label{tab:sudoku-runtime-1-opt}
\small
\setlength{\tabcolsep}{6pt}
\renewcommand{\arraystretch}{1.15}
\begin{tabular}{lcc}
\toprule
Sampler & Runtime (s) & Memory (GiB) \\
\midrule
\textsc{ReDGE} & 182.19 & 7.20 \\
\textsc{ReDGE-Cov} & 220.14 & 7.20 \\
\textsc{ReinDGE} & 199.12 & 7.20 \\
\textsc{ST} & 154.35 & 7.20 \\
\textsc{Gumbel-Softmax} & 150.00 & 7.20 \\
\textsc{ReinMax} & 160.00 & 7.20 \\
\bottomrule
\end{tabular}
\end{minipage}\hfill
\begin{minipage}[t]{0.48\linewidth}
\centering
\caption{Estimated runtime the MaskGIT guidance experiment on a batch of 10 images.}
\label{tab:maskgit-runtime-opt}
\small
\setlength{\tabcolsep}{6pt}
\renewcommand{\arraystretch}{1.15}
\begin{tabular}{lcc}
\toprule
Sampler & Runtime (s) & Memory (GiB) \\
\midrule
\textsc{ReDGE} & 1118.78 & 8.23 \\
\textsc{ReDGE-Cov} & 1168.27 & 8.23 \\
\textsc{ReinDGE} & 1163.91 & 8.23 \\
\textsc{ST} & 1128.24 & 8.23 \\
\textsc{Gumbel-Softmax} & 1095.44 & 8.23 \\
\textsc{ReinMax} & 1105.54 & 8.23 \\
\bottomrule
\end{tabular}
\end{minipage}

\end{table}

\begin{figure}[t]
\centering
\begin{minipage}{0.95\linewidth}
\lstset{style=pythoncode}
\begin{lstlisting}
def reindge(logits_theta, T_t1_theta, n_steps, eta, **kwargs):
    # categorical denoiser
    denoiser_fn = partial(cat_denoiser, logits=logits_theta)

    schedule_kwargs = kwargs["schedule"]

    # base noise X_1 ~ N(0, I) used by the (approx.) transport map
    X1 = torch.randn(*logits_theta.shape, requires_grad=True, device=logits_theta.device)

    # transport to time t1, then last-step categorical objects
    # returns: hard sample X0, relaxed proxy \hat{X0} (for ST), and proxy used in forward pass where theta is detached
    X0_hard, X0_soft_hat, X0_theta_detached = T_t1_theta(
        logits=logits_theta,
        initial_noise=X1,
        denoiser_fn=denoiser_fn,
        n_steps=n_steps,
        eta=eta,
        **schedule_kwargs,
    )

    delta = (X0_hard - X0_soft_hat).detach()

    grad_proxy_storage = {"value": None}
    ran_once = {"done": False}
    handles = {}

    def save_grad_proxy(grad):
        # stores upstream gradient dL/d(X0_theta_detached)
        grad_proxy_storage["value"] = grad.detach()
        return grad

    def add_reinmax_to_logits(grad_logits):
        grad_proxy = grad_proxy_storage["value"]

        if not ran_once["done"]:
            ran_once["done"] = True

            # ST part: gradient through the relaxed surrogate \hat{X0}
            g_ST_logits = torch.autograd.grad(
                outputs=X0_soft_hat,
                inputs=logits_theta,
                grad_outputs=grad_proxy,
                retain_graph=True,
            )[0]

            # projection term along delta = X0 - \hat{X0}
            inner = (grad_proxy * delta).sum(dim=-1, keepdim=True)
            proj_delta = inner * delta

            # ReinMax-style correction (as in Eq. (grad-reinmax) in the paper)
            g_RM_logits = 0.5 * (g_ST_logits + proj_delta.squeeze())

            handles["h_proxy"].remove()
            handles["h_logits"].remove()
            return grad_logits + g_RM_logits

        return grad_logits

    handles["h_proxy"] = X0_proxy.register_hook(save_grad_proxy)
    handles["h_logits"] = logits_theta.register_hook(add_redgemax_to_logits)

    return X0_theta_detached
\end{lstlisting}
\end{minipage}
\caption{Implementation of \redgemax (\S\ref{sec:redge_hard_reinmax}) via PyTorch hooks.}
\label{fig:redgemax-code}
\end{figure}

\end{document}